\renewcommand{\P}{\mathbb{P}}
\newcommand{\E}{\mathbb{E}}
\newcommand{\cN}{\mathcal{N}}
\newcommand{\R}{\mathbb{R}}
\newcommand{\N}{\mathbb{N}}
\renewcommand{\S}{\mathbb{S}}
\newcommand{\eps}{\varepsilon} 
\def\id{{\mathbf I}}
\newcommand{\<}{\langle}
\renewcommand{\>}{\rangle}
\newcommand{\sign}{\text{sign}}
\newcommand{\op}{{\rm op}}
\newcommand{\ones}{\bm{1}}
\def\sT{{\mathsf T}}
\def\bzero{{\boldsymbol 0}}
\newtheorem{theorem}{Theorem}
\newtheorem*{theorem*}{Theorem}
\newtheorem{lemma}{Lemma}
\newtheorem{proposition}{Proposition}
\newtheorem{corollary}{Corollary}
\theoremstyle{definition}
\newtheorem{remark}{Remark}
\def\cL{\mathcal{L}}
\def\hbSigma{\boldsymbol{\hat{\Sigma}}}
\def\bb{{\boldsymbol b}}
\def\bK{{\boldsymbol K}}
\def\bpsi{{\boldsymbol \psi}}
\def\bSigma{\boldsymbol{\Sigma}}
\def\ev{u}
\def\bphi{\boldsymbol{\phi}}
\def\bPhi{\boldsymbol{\Phi}}
\def\normal{{\sf N}}
\def\bv{{\boldsymbol v}}
\def\bZ{{\boldsymbol Z}}
\def\bM{{\boldsymbol M}}
\def\Ball{{\sf B}}
\def\bH{{\boldsymbol H}}
\def\obH{\overline{\boldsymbol H}}
\def\bX{{\boldsymbol X}}
\def\bw{{\boldsymbol w}}
\def\de{{\rm d}}
\def\bx{{\boldsymbol x}}
\def\by{{\boldsymbol y}}
\def\bW{{\boldsymbol W}}
\def\ba{{\boldsymbol a}}
\def\cF{{\mathcal F}}
\def\Unif{{\rm Unif}}
\def\bz{{\boldsymbol z}}
\def\He{{\rm He}}
\def\cE{{\mathcal E}}
\def\bxi{{\boldsymbol \xi}}
\def\be{{\boldsymbol e}}
\def\bu{{\boldsymbol u}}
\def\bA{{\boldsymbol A}}
\def\blambda{{\boldsymbol \lambda}}
\def\cM{{\mathcal M}}
\def\cV{{\mathcal V}}
\def\bPsi{{\boldsymbol \Psi}}
\def\cZ{\mathcal{Z}}
\def\reals{{\mathbb R}}
\def\cX{\mathcal{X}}
\def\ones{{\boldsymbol 1}}
\def\bb{{\boldsymbol b}}
\def\hblambda{\widehat \blambda}
\def\hy{\hat{y}}
\def\hmu{\hat{\mu}}
\def\ophi{\overline \phi}
\def\obphi{\overline \bphi}
\def\hblambda{{\hat \blambda}}
\def\hf{\hat f}
\def\ha{\hat a}
\def\hba{\hat {\boldsymbol a}}
\def\rz{r_0}
\def\Q{\mathbb{Q}}
\def\NP{\mathsf{NP}}
\def\BPP{\mathsf{BPP}}
\def\Pclass{\mathsf{P}}
\def\Bsf{\mathsf{B}}
\def\ubx{\underline{\bx}}
\colorlet{linkequation}{blue}
\begin{document}

\title{Minimum complexity interpolation in random features models}
\author{Michael Celentano\thanks{Department of Statistics, University of California, Berkeley},\;\;\;
  Theodor Misiakiewicz\footnotemark[1], \;\;\;Andrea
  Montanari\footnotemark[1] \thanks{Department of Electrical Engineering, Stanford
    University},}

\maketitle
\thispagestyle{empty}

\begin{abstract}
  Despite their many appealing properties, kernel methods are heavily affected by the curse of dimensionality.
  For instance, in the case of inner product kernels in $\reals^d$, the Reproducing Kernel Hilbert Space (RKHS) norm is often
  very large for functions that depend strongly on a small subset of directions (ridge functions).
  As a consequence, such functions are difficult to learn using kernel methods.
  This observation has motivated the study of generalizations of kernel methods, whereby the RKHS norm ---which is equivalent to a weighted $\ell_2$ norm--- is replaced by a weighted functional $\ell_p$ norm, which we refer to as $\cF_p$ norm. Unfortunately, tractability of these approaches is unclear. The kernel trick is not available and minimizing these norms requires solving an infinite-dimensional convex problem.

  We study random features approximations to these norms and show that, for $p>1$, the number of
  random features required to approximate the original learning problem is upper bounded by a polynomial in
  the sample size. Hence, learning with $\cF_p$ norms is tractable in these cases. We introduce a proof
  technique based on uniform concentration in the dual, which can be of broader interest in the study of overparametrized models.
  For $p= 1$, our guarantees for the random features approximation break down. We prove instead that learning with the $\cF_1$ norm is $\NP$-hard under a randomized reduction based on the problem of learning halfspaces with noise. 
\end{abstract}

\tableofcontents

\section{Introduction}
\label{sec:Introduction}
  
\subsection{Background: Kernel methods and the curse of dimensionality}
\label{sec:background_introduction}

Kernel methods are among the most fundamental tools in machine learning. Over the last
two years they attracted renewed interest because of their connection with neural networks in the
linear regime (a.k.a. neural tangent kernel or lazy regime) \cite{jacot2018neural,liang2020just,liang2019risk,ghorbani2019linearized}. 

Consider a general covariates space, namely a probability space $(\cX,\P)$
(our results will concern the case $\cX=\reals^d$, but it is useful to start from a more general viewpoint). A reproducing kernel
Hilbert space (RKHS) is usually constructed starting from a positive semidefinite kernel $K:\cX\times\cX\to\reals$.
Here it will be more convenient to start from a weight space, i.e.\ a probability space
$(\cV,\mu)$, and a featurization map $\phi ( \cdot;\bw ) :\cX\to\reals$ parametrized by the weight
vector\footnote{A sufficient condition for this construction to be well defined is $\phi\in L^2(\P\otimes\mu)$.}
$\bw\in\cV$. The RKHS is then defined as the space of functions of the form
\begin{equation}\label{eq:non-random-predictor}
  \hat f ( \bx ; a ) = \int_{\cV} \phi ( \bx ;\bw) \, a (\bw)\,  \mu (\de \bw)\, ,
\end{equation}
 with $a\in L^2(\cV;\mu)$.
To give a concrete example, we might consider $\cV=\reals^d$, and $\phi(\bx;\bw) =\sigma(\<\bw,\bx\>)$:
this is the featurization map arising from two-layers neural networks with random first layer weights.

  The radius-$R$ ball in this space is defined as
\begin{equation}
  \cF_2 (R) = \Big\{ \hat f ( \bx ; a ) =\int_{\cV} \phi ( \bx ;\bw) \, a (\bw)\,  \mu (\de \bw): \;\;
  \int_{\cV}  |a(\bw)|^2 \mu (\de \bw)   \leq R^2\Big\} \, .\label{eq:HilbertBall}
  \end{equation}
  This construction is equivalent to the more standard one,  with associated kernel $K(\bx_1,\bx_2):= \int_{\cV} \phi ( \bx_1 ;\bw) \phi ( \bx_2 ;\bw) \, \mu(\de\bw)$. Vice versa, for any positive semidefinite kernel $K$, we can construct
  a corresponding featurization map $\phi$, and proceed as above.

  It is a basic fact that, although $\cF_2 (R)$ is infinite-dimensional, learning in $\cF_2(R)$ can be performed 
  efficiently \cite{shalev2014understanding}: it is useful to recall the reason here.
  Consider a supervised learning setting in which we are given $n$ samples $(y_i, \bx_i)$, $i \leq n$, with
  $y_i\in \R$ and  $\bx_i\in\cX$. Given a convex loss $\ell:\R\times \R\to\R$,  the RKHS-norm
  regularized empirical risk minimization problem reads:
  \begin{align}
    \mbox{\rm minimize}\;\;\;
    \sum_{i=1}^n\ell\big(y_i,\hf(\bx_i;a)\big)+\lambda \int_{\cV}  |a(\bw)|^2 \mu (\de \bw)\, .\label{eq:GeneralKernel}
    \end{align}
    Conceptually, this problem can be solved in two steps: $(1)$~find the minimum RKHS norm
    subject to interpolation constraints $\hf(\bx_i;a)=\hy_i$, and $(2)$~minimize the sum of this quantity
    and the empirical loss $\sum_{i=1}^n\ell\big(y_i,\hy_i)$. Since step $(2)$ is convex and finite-dimensional, the
    critical problem is the interpolation problem:
  \begin{align*}
    \mbox{\rm minimize}\;\;\;& \int_{\cV}  |a(\bw)|^2 \mu (\de \bw)\, ,\\
     \mbox{\rm subj. to}\;\;\;& \hf(\bx_i;a)=\hy_i,\;\;\; \forall i\le n\, . 
    \end{align*}
    While this is an infinite-dimensional problem, convex duality (the `representer theorem')
    guarantees that the solution belongs to  a fixed
    $n$-dimensional subspace, $a_*(\bw) = \sum_{i=1}^nc_i\phi(\bx_i;\bw)$.

    Unfortunately, kernel methods suffer from the curse of dimensionality.
To give a simple example,
    consider $\bx \sim \Unif( \S^{d-1} (\sqrt{d}))$, $\bw \sim \Unif (\S^{d-1} (1))$, and 
    $\phi (\bx;\bw) = \sigma( \< \bw , \bx \>)$ with $\sigma:\reals\to\reals$ a non-polynomial activation function.
    Consider noiseless data $y_i=f_*(\bx_i)$, with $f_*(\bx) = \sigma(\<\bw_*,\bx\>)$ for a fixed
    $\bw_* \in \S^{d-1} (1)$.   Then,  $\|f_*\|_K=\infty$ (with $\|\,\cdot\,\|_K$ denoting the RKHS norm).
    Correspondingly, \cite{yehudai2019power,ghorbani2019linearized} show that for any fixed  $k$, if
    $n \leq d^k$, then any kernel method of the form \eqref{eq:GeneralKernel} returns $\hf$
    with $\|\hf-f_*\|_{L^2(\P)}$  bounded away from zero. 
    
    The curse of dimensionality suggests to seek functions of the form  \eqref{eq:non-random-predictor}
    where $a$ is sparse, in a suitable sense \cite{bach2017breaking}.  In this paper we consider a generalization in which the RKHS ball of Eq.~\eqref{eq:HilbertBall} is replaced by
\begin{equation}\label{eq:PBall}
    \cF_p (R) = \Big\{  \hf(\bx;a) =\int_{\cV} \phi ( \bx ;\bw) \, a (\bw)\,  \mu (\de \bw): \;\; \Big( \int_{\cV}  |a(\bw)|^p \mu (\de \bw) \Big)^{1/p}  \leq R \Big\} \, .
  \end{equation}
  For $p\in [1,2)$ this comprises a richer function class than the original $\cF_2(R)$, 
  since $\cF_2 (R) \subset \cF_p (R) \subset \cF_1 (R)$, and it is easy to see that the inclusion is strict.
  The case $\rho(x)=|x|$ is also known as `convex neural network' \cite{bach2017breaking}.
  
  Although the penalty $\int_{\cV}  |a(\bw)|^p \mu (\de \bw)$ is convex, it is far from clear that
  the learning problem is tractable. Indeed for $p\neq 2$, the classical representer theorem does not hold anymore
and we cannot reduce the infinite dimensional problem to solving a finite dimensional quadratic program (see Appendix \ref{sec:Representer} for a representer-type theorem for $p>1$).

  By the same reduction discussed above, it is sufficient to consider the following
  \emph{minimum-complexity interpolation problem}: 
  \begin{align}\label{eq:MinCplx}
    \mbox{\rm minimize}\;\;\;& \int_{\cV}  \rho(a(\bw)) \mu (\de \bw)\, ,\\
     \mbox{\rm subj. to}\;\;\;& \hf(\bx_i;a)=y_i,\;\;\; \forall i\le n\, . \nonumber
    \end{align}
    (We denote the values to be interpolated by $y_i$ instead of $\hy_i$ because we will focus on the interpolation problem hereafter.)
    We will take $\rho(x)$ to be a convex function
    minimized at $x=0$. 
    We establish two main results. 
    First, we establish tractability for a subset
    of strictly convex penalties $\rho$ which include, as special cases, $\rho(x) = |x|^p/p$, $p>1$. Our approach is based on a random features approximation which we discuss next. Second,
    we establish $\NP$-hardness under randomized reduction for the case $\rho(x) = |x|$. 

    \subsection{Random features approximation}

   We sample independently $\bw_j \sim \mu$, $j \leq N$, and fit a model 
\begin{equation}\label{eq:non-random-fN}
        \hat f_N ( \bx ; \ba ) = \frac{1}{N} \sum_{j = 1}^N a_j \phi ( \bx;\bw_j )\, .
      \end{equation}
  We determine the coefficients $\ba=(a_i)_{i\le N}$  by solving the interpolation problem
  \begin{align}\label{eq:opt_finite}
      \mbox{\rm minimize}\;\;\;& \sum_{j = 1}^N \rho (a_j )\, ,\\
      \mbox{\rm subj. to}\;\;\;& \hf_N(\bx_i;\ba)=y_i,\;\;\; \forall i\le n\, .\nonumber
  \end{align}
  Notice that this is equivalent to replacing the measure $\mu$ in Eq.~\eqref{eq:MinCplx}
  by its empirical version $\hmu_N=N^{-1}\sum_{i=1}^N\delta_{\bw_i}$.
  Borrowing the terminology from neural network theory, we will refer to \eqref{eq:opt_finite}
  as the ``finite width'' problem, and to the original problem \eqref{eq:MinCplx} as the
  ``infinite width'' problem. We will denote by $\hba_N$ the solution of  the finite width problem \eqref{eq:opt_finite}
and by $\ha$ the solution of the infinite width problem  \eqref{eq:MinCplx}.

  Our main result establishes that,  under suitable assumptions on the penalty $\rho$ and the
  featurization map $\phi$, the random features approach provides a good approximation of the
  minimum complexity interpolation problem \eqref{eq:MinCplx}. Crucially, this is achieved with a number of features that is polynomial
  in the sample size $n$.
 In particular we show that for $\rho(x)=|x|^p/p$, $1<p\leq 2$,
 setting $Q=p/(p-1)$ (and assuming $\|\by\|_2\le C\sqrt{n}$)
\begin{equation}
\| \hat f_N ( \cdot ; \hat \ba_N ) - \hat f ( \cdot ; \hat a) \|_{L^2} \leq C \Big( \sqrt{\frac{n^2 \log(N)}{N}} \vee \frac{(n \log N)^{(Q+1)/2)}}{N} \Big) \,\, .\label{eq:InformalBound}
\end{equation}
Hence, for $N \gg (n\log n )^2 \vee (n \log n)^{(Q+1)/2}$, the
random features approach yields a function that is a good approximation of the original problem
\eqref{eq:MinCplx}.

Let us emphasize that the scaling of the number of random features $N$ in our
bound is not optimal. In particular, for the RKHS case $p=2$, the above result requires $N\gg n^2$,
while we know from \cite{mei2021generalization} that $N\gg n \log n$ features are often sufficient.
We also note that the exponent $Q$ in the polynomial diverges as $p\to1$. Hence, our results
do not guarantee tractability for the case $p=1$.  Indeed this is a fundamental limitation:
as discussed in Section \ref{sec:NPhardness}, a bound such as Eq.~\eqref{eq:InformalBound} with finite $Q$ cannot hold for $p=1$, under some standard hardness assumptions. We show instead that no polynomial time algorithm is guaranteed to achieve accuracy $n^{-C}$ for some absolute constant $C$ in Eq.~\eqref{eq:InformalBound}. This hardness result is based on a randomized reduction from the problem of learning halfspaces with noise, which was proved to be $\NP$-hard in \cite{feldman2006new,guruswami2009hardness}. 

\subsection{Dual problem and its concentration properties}

 Our proof that the random features model $\hat f_N ( \cdot ; \hat \ba_N )$ is
 a good approximation of the infinite width model $\hat f ( \cdot ; \hat a)$ (cf.\ Eq.~\eqref{eq:InformalBound})
 is based on a simple approach which is potentially of independent interest.
 We notice that, while the optimization problems \eqref{eq:MinCplx} and \eqref{eq:opt_finite} are
 overparametrized and hence difficult to control, their duals are underparametrized and can be studied using
 uniform convergence arguments.

 Let $\rho^*$ be the convex conjugate of $\rho$:
\[
\rho^* ( x) = \sup_{y \in \R} \{ xy - \rho(y) \} \, .
\]
Then, the dual problems of \eqref{eq:MinCplx} and \eqref{eq:opt_finite} are given ---respectively--- by
the following optimization problems over $\blambda \in \R^n$:
\begin{align}
  & \mbox{maximize}\;\;\;\;
    \< \blambda , \by \> -  \int_\cV \rho^* ( \< \bphi_{n} (\bw) ,  \blambda \> ) \mu (\de \bw) \, , \label{eq:dual_infinite}\\
  & \mbox{maximize}\;\;\;\;\< \blambda , \by \> - \frac 1N \sum_{j = 1}^N \rho^* ( \< \bphi_{n} (\bw_j) ,  \blambda \> )  \, . \label{eq:dual_finite}
\end{align}
Here we denoted $\by = (y_1 , \ldots , y_n ) \in \R^n$ the vector of responses and by $\bphi_n : \cV \to \R^n$
the evaluation of the feature map at the $n$ data points $\bphi_n (\bw) = ( \phi( \bx_1;\bw) , \ldots , \phi (\bx_n;\bw) ) \in \R^n$.

We will prove that, under suitable assumptions on the penalty $\rho$, the optimizer of
the finite-width dual \eqref{eq:dual_finite} concentrates around the optimizer of the infinite-width dual 
\eqref{eq:dual_infinite}. Our results hold conditionally on the realization of the data $\{(\bx_i,y_i)\}_{i\le n}$
and instead exploit the randomness of the weights $\{\bw_i\}_{i\le N}$.

The rest of the paper is organized as follows. After briefly discussing related work in Section \ref{sec:Related},
we state our assumptions and results for strictly convex penalties in Section \ref{sec:Convergence}. In Section \ref{sec:NPhardness}, we show that the problem with $\cF_1$ norm is $\NP$-hard under randomized reduction. In Section \ref{sec:Examples}, we describe a few
examples in which we can apply our general results. The proof of our main result is outlined in Section \ref{sec:ProofMain},
with most technical  work deferred to the appendices.

\section{Related work}
\label{sec:Related}
  
Random features methods were first introduced as a randomized approximation
to RKHS methods \cite{rahimi2008random,balcan2006kernels}. Given a kernel $K:\cX\times \cX\to\reals$,
the idea of \cite{rahimi2008random} was to replace it by a low rank approximation
\begin{align}
K^{(N)}(\bx_1,\bx_2):= \frac{1}{N}\sum_{i=1}^N\phi(\bx_1;\bw_i)\phi(\bx_2;\bw_i)\, ,\label{eq:KernelRF}
\end{align}
where the random features are such that $\E\{\phi(\bx_1;\bw)\phi(\bx_2;\bw)\}=K(\bx_1,\bx_2)$.
Several papers prove bounds on  the test error of random features methods and compare them with the corresponding kernel approach \cite{rahimi2009weighted, rudi2017generalization, ma2020towards, ghorbani2019linearized, mei2019generalization,ghorbani2020neural,mei2021generalization}.
In particular,  \cite{mei2021generalization}  proves that ---under certain concentration assumptions on the random features--- if $N\ge n^{1+\eps}$ for some $\eps>0$, then the random features approach has nearly the same
test error as the corresponding RKHS method.

Notice that the idea of approximating the kernel via random features implicitly
selects a specific regularization, in our notation $\rho(x) = x^2$. Indeed, for any other regularization, the
predictor $\hf(\bx;\hat a)$ does not depend uniquely on the kernel.
An alternative viewpoint regards the random features model \eqref{eq:non-random-fN} as
a two-layer neural network with random first layer weights. It was shown in \cite{bartlett1998sample}
that the generalization properties of a two-layer neural network can be controlled in terms of the sum of the norms of
the second-layer weights. This opens the way to considering infinitely wide two-layer networks as per
Eq.~\eqref{eq:non-random-predictor}, with $\int |a(\bw)|\, \mu(\de\bw)<\infty$.
These networks represent functions in the class $\cF_1$.

The infinite-width limit  $\cF_1$ was considered in \cite{bengio2006convex}, which propose an incremental algorithm
to fit functions with an increasing number of units. Their approach however has no global optimality guarantees.
Generalization error bounds within $\cF_1$ were proved in \cite{bach2017breaking},
which demonstrates the superiority of this
approach over RKHS methods, in particular to fit ridge functions or other classes of functions that depend
strongly on a low-dimensional subspace of $\reals^d$. The same paper also develops an optimization algorithm based on a conditional-gradient approach. However, in high dimension each iteration of this algorithm requires solving a potentially hard problem.

A line of recent work shows that ---in certain overparametrized training regimes--- neural networks are indeed well
approximated by random features models obtained by linearizing the network around its
(random) initialization. A subset of references include \cite{jacot2018neural, li2018learning, du2018gradient, oymak2019towards,chizat2019lazy}. The implicit bias induced by gradient descent on these
models is a ridge (or RKHS) regularization. However, the bias can change if either the algorithm or the model parametrization is changed \cite{neyshabur2017geometry,gunasekar18a}.

We finally notice that several recent works study the impact of
changing regularization in high-dimensional linear regression,
focusing on the interpolation limit
\cite{liang2020precise,mignacco2020role,chinot2020minimum}.
None of these works addresses the main problem studied here, that is approximating the fully
nonparametric model \eqref{eq:non-random-predictor}.

Our work is a first step towards
understanding the effect of regularization in random features models. It implies that, for certain
penalties $\rho$, as soon as $N$ is polynomially large in $n$, studying the random features model reduces to studying the corresponding infinite width model.

\section{Convergence to the infinite width limit}
\label{sec:Convergence}

\subsection{Setting and dual problem}

We consider a slightly more general setting than the one discussed in
the introduction,  whereby we allow the featurization map  functions to be randomized. More precisely, conditional on the
data $(\bx_i)_{i\le n}$ and weight vectors $(\bw_j)_{j\le N}$, the feature values $\{\phi(\bx_i;\bw_j)\}_{i\le n,j\le N}$ are independent
random variables. Explicitly, such randomized features
can be constructed by letting $\phi(\bx;\bw,z)$ depend on an additional variable $z\in\cZ$,
and setting (with an abuse of notation)
$\phi(\bx_i;\bw_j)=\phi(\bx_i;\bw_j,z_{ij})$ for $\{z_{ij}\}_{i\le
  n,j\le N}$ a collection of independent random variables with common law $z_{ij}\sim \nu$
  (with $\nu$ a probability distribution over $\cZ$).  Without loss of generality\footnote{For instance, this is the case as long as the weights take values in a Polish space, by which the conditional probabilities $\P(\phi(\bx;\bw)\in S|\bw)= P(S;\bw)$ exist.} we can assume $z_{ij}\sim \Unif([0,1])$.

We denote by $\ophi(\bx; \bw) = \E_{\phi} [ \phi (\bx;\bw) ] = \E_{z} [ \phi (\bx;\bw,z) ]$ the expectation of the features with respect
to this additional randomness. In what follows, we will omit to write the dependency on $z$
explicitly, unless required for clarity. The additional freedom afforded by randomized features is useful in multiple scenarios:
\begin{itemize}
\item  We only have access to noisy measurements $\phi(\bx_i;\bw_j)$ of the true features $\ophi(\bx_i;\bw_j)$.
\item We deliberately introduce noise in the featurization
  mechanism. This is known to have a regularizing effect \cite{bishop1995training}.
\item We do not explicitly introduce noise in the featurization mechanism. However, the noiseless
  featurization mechanism is asymptotically equivalent to one
  in which noise has been added. Examples of this type are given in \cite{mei2019generalization,gerace2020generalisation,hu2020universality}
\end{itemize}

At prediction time we use the average features\footnote{Our results do not change significantly
  if we use randomized features also at prediction time.}
\begin{equation}
\hat f ( \bx ; a ) = \int_{\cV} \ophi ( \bx;\bw ) a (\bw) \, \mu (\de \bw) \, , \qquad \hat f_N(\bx;\ba) = \frac1N \sum_{j=1}^N a_j\ophi(\bx;\bw_j) \, .
\end{equation}
The dual of the finite-width problem \eqref{eq:opt_finite} is given by the problem \eqref{eq:dual_finite}, for which we introduce the following notations:
\begin{equation}\label{eq:dual-finite}
\begin{aligned}
    F_N ( \blambda) :=&~\< \blambda , \by \> - \frac 1N \sum_{j = 1}^N \rho^* ( \< \bphi_{n, j}  ,  \blambda \> )   \, , \\
    \hat \blambda_N = &~  \arg\max_{\blambda \in \R^n} F_N (\blambda) \, .
\end{aligned}
\end{equation}
Notice that $\bphi_{n,j} = \bphi_n (\bw_j) = ( \phi( \bx_1;\bw_j) , \ldots , \phi_{\bw} (\bx_n;\bw_j) )^{\sT}$ is now a random vector.
In the case of random features, the dual of the infinite width problem \eqref{eq:MinCplx} has to be modified with respect to
\eqref{eq:dual_infinite}, and takes instead the form 
\begin{equation}\label{eq:dual-infinite}
\begin{aligned}
    F ( \blambda) :=&~\< \blambda , \by \> - \int_\cV \E_\phi [\rho^* ( \< \bphi_{n}(\bw) ,  \blambda \> )]  \mu( \de \bw) \, , \\
    \hat \blambda = &~  \arg\max_{\blambda \in \R^n} F (\blambda) \, .
\end{aligned}
\end{equation}
Note the expectation $\E_{\phi}$   with respect to the randomness in the features (equivalently, this is the expectation with respect to the independent randomness
$z$, which is not noted explicitly). The most direct way to see that this is the correct infinite width dual  is to notice that this is obtained
from Eq.~\eqref{eq:dual-finite} by taking expectation with respect to the weights $\{\bw_j\}_{j\le N}$ and the features randomness.
We will further discuss the connection between \eqref{eq:dual-infinite} and the infinite width primal problem in Section \ref{sec:Primal} below.

In order to discuss the dual optimality conditions, let $s: \R \to \R$ be the derivative of the
convex conjugate of $\rho$, i.e., $s(x) = (\rho^*) ' (x) = (\rho ' )^{-1} (x)$. Since $\rho$ is assumed to be strictly convex, $s(x)$ exists and is continuous and non-decreasing.
With these definitions, the dual optimality condition reads
\begin{align}
  \by = \frac 1N \sum_{j=1}^N \bphi_{n,j} \, s(\<\bphi_{n,j},\hblambda_N\>)\, .
\end{align}
The primal solution is then given by $(\hat \ba_N)_{j} =
s ( \<\bphi_{n,j},\hat \blambda_N \>)$ and the resulting predictor is
\[
 \hf_N ( \bx ; \hba_N ) = \frac{1}{N} \sum_{j=1}^N \ophi( \bx;\bw_j )\, s ( \< \bphi_{n,j} , \hat \blambda_N \> ) \, .
\]
In the following, with an abuse of notation, we will write $\hf_N(\bx;\blambda) = N^{-1}\sum_{j=1}^N \ophi( \bx;\bw_j )\, s ( \< \bphi_{n,j} , \blambda \> )$
for the model at dual parameter $\blambda$. The corresponding infinite width predictor reads
\begin{align}
\hat f( \bx ; \blambda)  : =  \int_{\cV} \ophi ( \bx;\bw ) \,\E_\phi [s ( \< \bphi_n(\bw) , \blambda \> )]\,  \mu (\de \bw) \, .\label{eq:InfWidthPredictorDual}
\end{align}

\subsection{General theorem}

We will show that conditional on the realization of $\by,\bX$,
the distance  between the infinite width interpolating model $\hf(\,\cdot\, ; \hblambda)$ and the finite width one
$\hf_N(\,\cdot\, ;\hblambda_N)$ is small with high probability as soon as $N$ is large enough.
Throughout this section, $\by,\bX$ are viewed as fixed, and we assume certain conditions to hold on
the distribution of the features $\phi_{n} (\bw)$.
In Section \ref{sec:Examples} we will check that these conditions hold for a few models of interest,
for typical realizations of $\by,\bX$.

We first state our assumptions on the feature distribution. We define  the whitened features
\begin{align}
  \bpsi_{n,j} := \bK_n^{-1/2}\bphi_{n,j},\;\;\;\;\;\;\;
  \bK_n := \int \E_{\phi}[\bphi_{n} (\bw)\bphi_{n} (\bw)^\top]\, \mu(\de\bw)\, .
\end{align}
Here  expectation is over the randomization in the features, and  $\bK_n\in \reals^{n \times n}$ is the empirical kernel matrix.
By construction, $\bpsi_{n,j}$ are isotropic: $\E_{\bw,\phi}[\bpsi_{n,j} \bpsi_{n,j}^\top] = \id_n$.

We will assume the following conditions to hold for the featurization map $(\bx,\bw)\mapsto \phi (\bx ; \bw )$ and the penalty $x\mapsto \rho(x)$. Throughout we will follow the convention of denoting by Greek letters
constants that we track of in our calculations, and by $C,c, \rz$ constants that we do not track
(and therefore our statements depend on an unspecified way on the latter). We also recall that a random vector $\bZ$ is
$\gamma^2$-subgaussian if $\E[\exp(\<\bv,\bZ\>)]\le \exp(\gamma^2\|\bv\|^2/2)$ for all vectors $\bv$.
\begin{description}
\item[FEAT1] \emph{(Sub-gaussianity)} For some $0 < \tau \leq n^C$ and any fixed $\| \bx \|_2 \leq \rz \sqrt{d}$,
  $\ophi ( \bx ; \bw )$ is $\tau^2$-sub-Gaussian when $\bw\sim\mu$. Further, the feature vector $\bpsi_n := \bK_n^{-1/2}
  [\phi (x_1;\bw,z_1),\dots, \phi (x_n;\bw,z_n)]^{\sT}$ is
  $\tau^2$-sub-Gaussian when $\bw\sim \mu$, $(z_i)_{i\le n} \sim_{iid}\nu$. 
  Without loss of generality we assume $\tau\ge 1$.
    \item[FEAT2] \emph{(Lipschitz continuity)}
For any $\bw \in \cV$, assume that $\ophi (\bx ; \bw)$ is $L(\bw)$-Lipschitz with respect to $\bx$ and $\bar \phi (\bzero ; \bw) \leq L(\bw)$, where $\P(L(\bw) \geq t) \leq Ce^{-t^2/(2\tau^2)}$, $\tau\ge 1$.

         \item[FEAT3] \emph{(Small ball)}  There exists $\eta \geq n^{-C}$ such that
\begin{align}
  \inf_{\|\bu\|_2=1,\|\bv\|_2=1}\P\big(|\<\bu,\bpsi_n\>|\ge\eta, \, |\<\bv,\bpsi_n\>|\ge\eta  \big)\ge c\, ,
\end{align}
for some strictly positive constants $C,c$. By union bound, this is implied by the stronger condition
    \begin{equation*}
        \sup_{\| \bv \|_2 = 1} \P\big( | \< \bv , \bpsi_n \> | \le \eta \big) \le  \frac{1}{2}(1-c) \, .
      \end{equation*}
      Without loss of generality, we will assume $\eta\le 1$.
    \item[PEN] \emph{(Polynomial growth)} 
    We assume that  $\rho$ is strictly convex and minimized at 0, so that $s(x)$ is continuous and $s(0) = 0$.
    Because $s$ is non-decreasing, we have that $s(x)$ has a derivative almost everywhere.
    We assume there exists $Q_1,Q_2,q_1,q_2 > 1$ and $C,c > 0$ such that for $|x_1|,|x_2| > 0$,
    \begin{gather*}
        c|s(x_2)/x_2| \, (|x_1/x_2|^{q_1-2} \wedge |x_1/x_2|^{q_2-2}) \leq s'(x_1) \leq C|s(x_2)/x_2| \, (|x_1/x_2|^{Q_1-2} \vee |x_1/x_2|^{Q_2-2}),
    \end{gather*}
    and 
    \begin{gather}\label{eq:poly-growth}
        c|s(x_2)| \, (|x_1/x_2|^{q_1-1} \wedge |x_1/x_2|^{q_2-1}) \leq |s(x_1)| \leq C|s(x_2)| \, (|x_1/x_2|^{Q_1-1}\vee |x_1/x_2|^{Q_2-1}).
    \end{gather}
\end{description}

If $s'(x) = (\rho^*)''(x)$ is unbounded as $x\to 0$, we will require a stronger assumption FEAT3' which will ensure that $\nabla^2 F (\blambda)$ exists and is finite.
\begin{description}
         \item[FEAT3'] \emph{(Small ball)} 
    For every $r \in (-1,0)$ and every $\|\bv\|_2 = 1$, $\E[|\langle \bpsi_n,\bv\rangle|^r] \leq C_r\eta^r  < \infty$, $\eta\le 1$.
  \end{description}

  \begin{remark}\label{lem:p-norms}
 Assumption PEN implies that $s(x)$  is locally Lipschitz away from $0$.
It further implies that $s(x)$ and its derivatives are upper and lower bounded by polynomials in $x$.
The most important example is provided by $p$-norms with $1 < p < \infty$,
in which case we set $\rho(x) = \frac1p |x|^p$ for $1 < p < \infty$. It is easy to check
that PEN is satisfied with $Q_1 = Q_2 = q_1 = q_2 = p/(p-1)$ and appropriate choices of $C,c$.
\end{remark}

Our main theorem establishes that the infinite-dimensional problem of Eq.~\eqref{eq:MinCplx} (and its generalization
to randomized featurization maps) is well-approximated by its finite random features counterpart.
For this approximation to be good, it is sufficient that the number of random features scales polynomially with the sample size $n$.
\begin{theorem}\label{thm:gen-conv-to-pop}
  Assume $\|\bx_i\|_2\le \rz\sqrt{d}$ for all $i\le n$, and let $\P$ be a probability distribution supported on
  $\Ball^d_2(\rz\sqrt{d}):=\{\bx\in\reals^d:\, \|\bx\|_2\le \rz\sqrt{d}\}$.
  Assume that conditions FEAT1, FEAT2, FEAT3, and PEN are satisfied. If $Q_1 \wedge Q_2 < 2$, further assume that FEAT3' holds. Then for any $\delta>0$, there exist constants $C',c'$ depending on the constants $C,c,\rz$ in those assumptions,  
  but not on  $\tau$ and $\eta$, and $C''(\delta)$ additionally dependent on $\delta>0$ such that the following holds.
  If $N \geq N_1\vee N_2$, $N_1:=C'(\tau^{Q \vee 2} /\eta^{q\vee 3})^2 n\log n$, $N_2:=C'(\tau^{Q \vee 2} /\eta^{q\vee 3})
  (n \log n)^{Q/2 \vee 1}$ and $n \geq c' d$, then with probability at least $1  - C' N^{-c'n}$, 
    \begin{equation}\label{eq:MainThmBound}
    \begin{aligned}
        \| \hf_N (\,\cdot\,;\hat \blambda_N) - \hf(\,\cdot\,;\hat \blambda)\|_{L_2(\P)}  \leq &~
        M (\delta, \tau , \eta ) \Big( \sqrt{\frac{n\log N}{N}} \vee \frac{(n\log N)^{Q/2}}{N} \Big) \|\bK_n^{- 1/2} \by \|_{2 } \,  ,
        \end{aligned}
    \end{equation}
    where
    \[
    M (\delta, \tau , \eta ) =  C'' (\delta)\frac{\tau^{Q+2 +(2-q'+\delta)_+  }}{\eta^{q \vee 3} } (\tau^{Q-2} \vee  \eta^{Q'-2})  \, ,
    \]
    with $Q = Q_1 \vee Q_2$, $Q' = Q_1 \wedge Q_2$, $q = q_1 \vee q_2$, $q'=q_1\wedge q_2$.
    Further, the bound holds with $\delta=0$, $C''(0)<\infty$ when $q_1=q_2\ge 2$.
  \end{theorem}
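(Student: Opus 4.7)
The plan is to work entirely in the dual, as signalled by the authors: since both $\hat\blambda$ and $\hat\blambda_N$ live in $\R^n$, the problem is effectively finite-dimensional, and the random-features error reduces to a uniform concentration statement for a random process indexed by $\blambda \in \R^n$. The proof splits into four stages: (i) an a priori bound on $\|\bK_n^{1/2}\hat\blambda\|_2$; (ii) uniform concentration of $\nabla F_N$ around $\nabla F$ on a Euclidean ball of that radius; (iii) local strong convexity of $F$ near $\hat\blambda$ to convert gradient closeness into parameter closeness; and (iv) transfer of the dual estimate into an $L^2(\P)$ bound on the primal predictors. For stage (i), since $F(\hat\blambda)\ge F(\bzero)=0$ and $\rho^*\ge 0$, one has $\langle\hat\blambda,\by\rangle \ge \int\E_\phi[\rho^*(\langle\bphi_n(\bw),\hat\blambda\rangle)]\mu(\de\bw)$; the small-ball condition FEAT3 together with the polynomial growth of $\rho^*$ inherited from PEN yields $c\,\eta^{Q'}\|\bK_n^{1/2}\hat\blambda\|_2^{Q'} \lesssim \langle\hat\blambda,\by\rangle \le \|\bK_n^{1/2}\hat\blambda\|_2\cdot\|\bK_n^{-1/2}\by\|_2$, so $\hat\blambda$ (and, on the high-probability event, also $\hat\blambda_N$) lies in a ball of explicit radius $R_\blambda$.

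For stage (ii), parametrize $\bu = \bK_n^{1/2}\blambda$ so that $\langle\bphi_{n,j},\blambda\rangle = \langle\bpsi_{n,j},\bu\rangle$ and
\[
\bK_n^{-1/2}\bigl[\nabla F_N(\blambda) - \nabla F(\blambda)\bigr] = \frac{1}{N}\sum_{j=1}^N\Bigl(\bpsi_{n,j}\,s(\langle\bpsi_{n,j},\bu\rangle) - \E\bigl[\bpsi_{n,j}\,s(\langle\bpsi_{n,j},\bu\rangle)\bigr]\Bigr).
\]
Each summand is a centred random vector with sub-Weibull tails, being the product of the $\tau^2$-sub-Gaussian vector $\bpsi_{n,j}$ (FEAT1) with a polynomial transformation of a sub-Gaussian scalar (PEN). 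I would (a) truncate at level $T\asymp \tau\sqrt{n\log N}$ so that the bulk yields a Bernstein-type rate $\sqrt{n\log N/N}$, (b) control the rare event $\{|\langle\bpsi_{n,j},\bu\rangle|>T\}$ via the polynomial tail $|s|^{Q-1}$ to produce the heavier rate $(n\log N)^{Q/2}/N$, and (c) take a union bound over an $\eps$-net of $\{\bu:\|\bu\|_2\le R_\blambda\}$ of cardinality $(CN)^n$, closing the net step by a local Lipschitz estimate for the random process in $\bu$.

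For stage (iii), whitening yields $\bK_n^{-1/2}\nabla^2 F(\blambda)\bK_n^{-1/2} = \int\E_\phi[\bpsi_n(\bw)\bpsi_n(\bw)^\top s'(\langle\bpsi_n(\bw),\bu\rangle)]\mu(\de\bw)$; the lower bound on $s'$ from PEN, together with the joint small-ball estimate of FEAT3 (and FEAT3' when $Q'<2$, to control negative moments of $|\langle\bpsi_n,\bv\rangle|$ so that the integrand in direction $\bv$ is bounded below despite $s'$ blowing up at the origin) gives $\bv^\top(\cdot)\bv \gtrsim \eta^{q\vee 3}\cdot g(\|\bu\|_2)$ uniformly on the ball from stage (i). Dividing the gradient bound from (ii) by this modulus of convexity controls $\|\bK_n^{1/2}(\hat\blambda_N - \hat\blambda)\|_2$. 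For stage (iv), I decompose
\[
\hat f_N(\bx;\hat\blambda_N) - \hat f(\bx;\hat\blambda) = \bigl[\hat f_N(\bx;\hat\blambda_N) - \hat f_N(\bx;\hat\blambda)\bigr] + \bigl[\hat f_N(\bx;\hat\blambda) - \hat f(\bx;\hat\blambda)\bigr].
\]
The first bracket is bounded by a local Lipschitz estimate on $s$ times the parameter error from (iii) times an $L^2(\P)$ operator-norm bound on $\bv\mapsto \int\ophi(\,\cdot\,;\bw)\langle\bphi_n(\bw),\bK_n^{-1/2}\bv\rangle\mu(\de\bw)$, which is $O(1)$ by the definition of $\bK_n$ combined with FEAT1 and FEAT2; the second bracket is a single-point (no union bound) concentration handled by the same sub-Weibull machinery as in stage (ii).

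The main obstacle will be stage (ii): obtaining simultaneously the correct polynomial dependence on $n$, the right two-regime scaling in $N$, and the precise $\tau,\eta$ exponents entering $M(\delta,\tau,\eta)$. The polynomial growth of $s$ and the sub-Gaussian tails of $\langle\bpsi_n,\bu\rangle$ place the summands in a sub-Weibull class whose concentration naturally splits into a ``bulk'' Bernstein contribution $\sqrt{n\log N/N}$ and a ``tail'' contribution $(n\log N)^{Q/2}/N$ governed by the exponent $Q$; balancing the truncation threshold against the net cardinality $(CN)^n$, while additionally tracking the $\tau^{2-q'+\delta}$ factor coming from the negative-moment bound in FEAT3', is the delicate part of the calculation.
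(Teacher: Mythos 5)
Your overall strategy --- push everything into the $n$-dimensional dual, concentrate the empirical dual objective around the population one, and convert gradient closeness into parameter closeness via local strong concavity --- is exactly the paper's strategy, but the way you arrange the final primal decomposition creates a real gap. The paper splits
\[
\hf_N(\cdot;\hblambda_N)-\hf(\cdot;\hblambda)=\big[\hf_N(\cdot;\hblambda_N)-\hf(\cdot;\hblambda_N)\big]+\big[\hf(\cdot;\hblambda_N)-\hf(\cdot;\hblambda)\big],
\]
so that the Lipschitz-in-$\blambda$ estimate is only ever needed for the \emph{deterministic} predictor $\hf(\bx;\cdot)$ (Lemma \ref{lem:gen-dual-continuity}, proved by differentiating under the integral plus H\"older), at the price of a \emph{uniform} (over a $\bK_n$-ball around $\hblambda$) concentration of the predictor, event $\cE_1$. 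You split the other way, $[\hf_N(\cdot;\hblambda_N)-\hf_N(\cdot;\hblambda)]+[\hf_N(\cdot;\hblambda)-\hf(\cdot;\hblambda)]$, which makes your second bracket a cheap single-point concentration but forces a Lipschitz estimate for the \emph{empirical} predictor $\hf_N(\bx;\cdot)$. Your bound for that bracket invokes the population operator $\bv\mapsto\int\ophi(\cdot;\bw)\<\bphi_n(\bw),\bK_n^{-1/2}\bv\>\mu(\de\bw)$, but the quantity to control is the empirical sum $N^{-1}\sum_j\ophi(\bx;\bw_j)[s(\<\bphi_{n,j},\hblambda_N\>)-s(\<\bphi_{n,j},\hblambda\>)]$; replacing the empirical operator by the population one is itself a uniform concentration statement, essentially the content of $\cE_1$ that you dropped. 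Moreover, when $Q'<2$ the ``local Lipschitz estimate on $s$'' is genuinely problematic on the sample: $s'$ blows up at the origin and, for large $N$, some inner products $\<\bphi_{n,j},\blambda\>$ will be arbitrarily close to $0$, so there is no empirical analogue of FEAT3' to rescue you; increments of $s$ near zero must be bounded by $|s|$ rather than by $s'$, which is precisely the care the paper takes inside its proof of $\cE_1$.

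Two smaller points. First, the paper lower-bounds the curvature of the \emph{finite-width} Hessian $\nabla^2 F_N$ and runs a Taylor argument showing $F_N(\blambda)\le F_N(\hblambda)$ on a small sphere around $\hblambda$, which localizes $\hblambda_N$ for free; your route through strong concavity of the population $F$ plus uniform gradient concentration can also work, but it needs the a priori localization of $\hblambda_N$ you sketch in stage (i), and that localization requires an empirical small-ball/uniform lower bound on $N^{-1}\sum_j\rho^*(\<\bphi_{n,j},\blambda\>)$ that you do not spell out. Second, the factor $\tau^{(2-q'+\delta)_+}$ in $M(\delta,\tau,\eta)$ does not come from FEAT3'; it comes from lower-bounding $\E[|\<\bv,\bpsi_n\>|^{q_1}\wedge|\<\bv,\bpsi_n\>|^{q_2}]$ for an isotropic sub-Gaussian vector when converting $s(\|\hblambda\|_{\bK_n})$ into $\|\bK_n^{-1/2}\by\|_2$ via the first-order optimality condition $\bK_n^{-1/2}\by=\E[\bpsi_n(\bw)\,s(\<\bpsi_n(\bw),\bK_n^{1/2}\hblambda\>)]$, which is cleaner than your zeroth-order bound $F(\hblambda)\ge F(\bzero)$ and is what produces the exact exponents in the theorem.
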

In order to interpret Theorem \ref{thm:gen-conv-to-pop}, we remark that we expect typically $\|\bK_n^{-1/2} \by \|_{2 }$ to be
of order $ \sqrt{n } $. In this case $\hat f_N (\,\cdot\, ; \hat \blambda_N)$ differs negligibly from $\hf(\,\cdot\, ; \hblambda)$ when $ N \gg (n \log (n))^{2 \vee (Q+1)/2}$.

\begin{remark}\label{rmk:FEAT3}
The most restrictive among our assumptions  are FEAT3 and FEAT3'. Both conditions imply that the infinite-width dual problem 
  of Eq.~\eqref{eq:dual-infinite} is well behaved. 
  
  In particular, condition FEAT3 ensures that the minimum eigenvalue of the rescaled Hessian
  $\bH_n(\blambda):= \bK_n^{-1/2}  \nabla^2 F ( \blambda )  \bK_n^{-1/2} $
  is bounded away from $0$, as long as $\blambda$ is bounded away from $0$ and $\infty$. Notice indeed that the Hessian is given by
  \begin{align}
  \nabla^2 F ( \blambda )  = -\E_{\bw,\phi} [ s' ( \< \blambda, \bphi_n \>)\, \bphi_n\bphi_n^{\sT}]\, .\label{eq:HessianInfinite}
  \end{align}
  For any $\bv$, with $\|\bv\|_2=1$, we have $\E\{\<\bv,\bpsi_n\>^2\}=1$, whence, using assumption FEAT3
  and Markov's
  inequality  and union bound,
  for any two unit vectors $\bu,\bv$, $\P(|\<\bu,\bpsi_n\>|\in [\eta,C], \, |\<\bv,\bpsi_n\>|\in [\eta,C])\ge c/2$ for some constant $C$.
  We then have,
\[
\begin{aligned}
\inf_{\| \bv \|_2 = 1} \lambda_{\min} ( - \bH_n(\bK_n^{-1/2}\bv)) =&~  \inf_{\| \bu \|_2 = 1 , \| \bv \|_2 = 1} \E_{\bw,\phi} [ \< \bu , \bpsi_n \>^2 s' ( \< \bv , \bpsi_n \>)]\ge  c''\eta^2\inf_{x \in [\eta , 4]} s' (x) \, .
\end{aligned}
\]

Condition FEAT3' ensures that the largest eigenvalue of the Hessian   $\nabla^2 F ( \blambda )$ is bounded for
$\|\blambda\|_2$ bounded below and above. Notice  that, from Eq.~\eqref{eq:HessianInfinite}, no such assumption is required when $s'(x)$ is bounded
as $x\to 0$.
\end{remark}

\begin{remark}
  As mentioned in the introduction, the bound in Eq.~\eqref{eq:MainThmBound} is not optimal. For instance, for the case of
  a penalty $\rho$ that is strongly convex and smooth (covered in Theorem \ref{thm:gen-conv-to-pop} by taking $Q=2$)
  this bound implies that $N\gg (n\log n)^2$
  random features are sufficient to approximate the infinite width problem.
  However, a more careful analysis yields that ---in this case--- $N\gg n\log n$ random features are sufficient.
  This is also what is established in \cite{mei2021generalization} for the case of kernel ridge regression (corresponding to $\rho(x) = x^2$).
\end{remark}

It is instructive to specialize Theorem \ref{thm:gen-conv-to-pop} to the case of $p$-norms, which is
covered by taking $\rho(x) = |x|^p/p$, $p\in (1,2]$. In this case $q_1=q_2=Q_1=Q_2=Q$, with $Q=p/(p-1)\ge 2$
and hence the formulas are simpler.
\begin{corollary}\label{coro:pnorms}
  Assume $\|\bx_i\|_2\le \rz\sqrt{d}$ for all $i\le n$, and let $\P$ be a probability distribution supported on
  $\Ball^d_2(\rz\sqrt{d}):=\{\bx\in\reals^d:\, \|\bx\|_2\le \rz\sqrt{d}\}$.
  Assume that conditions FEAT1, FEAT2, FEAT3 are satisfied, and penalty $\rho(x)=|x|^p/p$, $p\in (1,2]$. 
  Then  there exist constants $C',c'$ depending on the constants $C,c,\rz$ in those assumptions,  
 such that the following holds. If $N \geq C'[(\tau^{Q}/\eta^{Q\vee 3})^2 (n \log n) \vee (\tau^{Q}/\eta^{Q\vee 3})(n \log n)^{Q/2}]$ 
 and $n \geq c' d$, then with probability at least $1 - C' N^{-c'n}$, 
    \begin{equation}
        \| \hf_N (\,\cdot\,;\hat \blambda_N) - \hf(\,\cdot\,;\hat \blambda)\|_{L_2(\P)}
        \leq 
        C' \frac{\tau^{2Q} }{\eta^{Q \vee 3} } \Big( \sqrt{\frac{n\log N}{N}} \vee \frac{(n\log N)^{Q/2}}{N} \Big) \|\bK_n^{- 1/2} \by \|_{2 } \, .\label{eq:CoroPnorm}
    \end{equation}
    where $Q =p/(p-1)$.
  \end{corollary}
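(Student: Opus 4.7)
The plan is to obtain Corollary \ref{coro:pnorms} as a direct specialization of Theorem \ref{thm:gen-conv-to-pop} after verifying that the penalty $\rho(x)=|x|^p/p$ fits into the framework with explicit values for all exponents.

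First, I would invoke Remark \ref{lem:p-norms}, which states that for $\rho(x)=|x|^p/p$ with $1<p<\infty$, assumption PEN is satisfied with $q_1=q_2=Q_1=Q_2=Q$, where $Q:=p/(p-1)$. Since we restrict to $p\in(1,2]$, we have $Q\ge 2$, and in particular $Q_1\wedge Q_2 = Q\ge 2$, so the theorem's requirement ``if $Q_1\wedge Q_2<2$, also assume FEAT3'{}'' is vacuous. Hence only FEAT1, FEAT2, FEAT3 (which are assumed) and PEN are needed. Moreover, since $q_1=q_2\ge 2$, the last sentence of Theorem \ref{thm:gen-conv-to-pop} allows us to take $\delta=0$ with $C''(0)<\infty$.

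Next I would simplify the prefactor $M(\delta,\tau,\eta)$ from the theorem using $Q=Q'=q=q'$ and $\delta=0$. The term $(2-q'+\delta)_+ = (2-Q)_+ = 0$ since $Q\ge 2$, so the $\tau$-exponent in the numerator is $Q+2$. For the last factor $\tau^{Q-2}\vee\eta^{Q'-2}=\tau^{Q-2}\vee\eta^{Q-2}$: because $\tau\ge 1$, $\eta\le 1$, and $Q-2\ge 0$, we have $\tau^{Q-2}\ge 1\ge\eta^{Q-2}$, and the maximum is $\tau^{Q-2}$. Multiplying,
\[
M(0,\tau,\eta)\;\le\; C''\,\frac{\tau^{Q+2}\cdot\tau^{Q-2}}{\eta^{Q\vee 3}}\;=\;C''\,\frac{\tau^{2Q}}{\eta^{Q\vee 3}},
\]
which matches the prefactor in the corollary statement.

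Finally I would match the sample-size thresholds. With $Q\ge 2$ we have $Q\vee 2 = Q$, $q\vee 3 = Q\vee 3$, and $Q/2\vee 1 = Q/2$, so Theorem \ref{thm:gen-conv-to-pop}'s conditions $N\ge N_1\vee N_2$ become exactly
\[
N\;\ge\; C'\Bigl[(\tau^{Q}/\eta^{Q\vee 3})^{2}\,n\log n\;\vee\;(\tau^{Q}/\eta^{Q\vee 3})\,(n\log n)^{Q/2}\Bigr].
\]
Substituting into the bound \eqref{eq:MainThmBound} of Theorem \ref{thm:gen-conv-to-pop} yields \eqref{eq:CoroPnorm}, with the probability bound $1-C'N^{-c'n}$ transported verbatim.

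There is no real obstacle here: the proof is entirely a bookkeeping exercise, and the only small points to watch are (i)~correctly observing that $\eta\le 1$ and $\tau\ge 1$ together with $Q\ge 2$ makes $\tau^{Q-2}$ dominate $\eta^{Q-2}$, and (ii)~invoking the ``$\delta=0$'' strengthening of Theorem \ref{thm:gen-conv-to-pop} available when $q_1=q_2\ge 2$, so that no $\tau^{\delta}$ factor appears.
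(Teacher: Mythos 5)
Your proposal is correct and follows exactly the route the paper intends: Corollary \ref{coro:pnorms} is obtained by specializing Theorem \ref{thm:gen-conv-to-pop} to $q_1=q_2=Q_1=Q_2=Q=p/(p-1)\ge 2$, and your bookkeeping of the exponents (including the use of $\tau\ge 1$, $\eta\le 1$ to resolve $\tau^{Q-2}\vee\eta^{Q-2}$ and the $\delta=0$ strengthening) is accurate.
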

  
  Note that the exponent $Q$ on the right hand side of Eq.~\eqref{eq:CoroPnorm} diverges as $p\to 1$. Instead of being a feature of our proof technique, we show in the next section (Section \ref{sec:NPhardness}), that this is unavoidable under standard hardness assumptions.

  \begin{remark}
    In the case of randomized features, we wrote the infinite-width dual problem   (Eq.~\eqref{eq:dual_infinite}), 
    but  we did not write the infinite-width primal problem to which it corresponds.
    Indeed the dual problem entirely defines the predictor via Eq.~\eqref{eq:InfWidthPredictorDual}.
    For the sake of completeness, we present the infinite-width primal problem in Appendix \ref{sec:Primal}.
  \end{remark}

  \section{$\NP$-hardness of learning with $\cF_1$ norm}
 \label{sec:NPhardness}

    The fact that, when $\rho(x) = |x|$,  we are unable to solve efficiently the infinite 
    width problem \eqref{eq:MinCplx} via the random
features approach is not surprising. Indeed, consider the featurization map 
$\phi (\bx;\bw) = ( \< \bw , \bx \>+c)_+$ (ReLu activation), and random weights
$(\bw_{i})_{i\le n} \sim_{iid} \mu=\Unif (\S^{d-1} (1))$. Consider solving either the 
infinite-dimensional problem \eqref{eq:MinCplx} or
its random features approximation \eqref{eq:opt_finite} with data
 $(\bx_i)_{i\le n}\sim \Unif( \S^{d-1} (\sqrt{d}))$, $y_i = f_*(\bx_i)
= ( \< \bw_\star , \bx_i \>+c)_+$ (where $\bw_\star\in \S^{d-1} (1)$ and
$c\neq 0$ are fixed). 

In \cite{ghorbani2019linearized}, it was shown that for any $k \in \N$, if $N \leq d^k$, then $\hat f_N$ has test error bounded away from zero,
namely $\|\hf_N-f_*\|_{L^2}\ge c(k)-o_N(1)$ for any sample size $n$. Indeed this lower bound holds for any function that can be written
as in Eq.~\eqref{eq:non-random-fN}, for some coefficients $(a_i)_{i\le N}$. On the other hand, \cite{bach2017breaking}
proves that minimizing the empirical risk subject to $\hf\in \cF_1 (R)$, cf. Eq.~\eqref{eq:PBall}, for a suitable choice of $R$, 
achieves  test error $\|\hf-f_*\|_{L^2} \le d^{O(1)}/\sqrt{n}$. The last result does not apply directly to min $\cF_1$-norm interpolator.
However, if an analogous result was established for interpolators, it would imply that
$\|\hf_N-\hf\|_{L^2}$ remains bounded away from zero in this case as long as $N = d^{O(1)}$.

In order to provide stronger evidence towards hardness,
  we consider the computational complexity of the interpolation problem 
\begin{equation}\label{eq:density_F1_problem}%
  \begin{aligned}
    \mbox{\rm minimize}\;\;\;& \int_{\cV}  |a(\bw)| \mu (\de \bw)\, ,\\
     \mbox{\rm subj. to}\;\;\;& \hf(\bx_i;a)=y_i,\;\;\; \forall i\le n\, .
    \end{aligned}
    \end{equation}
We show that it is $\NP$-hard under a randomized reduction to 
solve \eqref{eq:density_F1_problem} within an accuracy $n^{-C}$ with $C$ a 
fixed absolute constant. On the contrary, for $p>1$,  Corollary \ref{coro:pnorms} and its 
proof shows that one can obtain accuracy $n^{-K}$ for $K$ fixed 
arbitrary in polynomial time using a random features approximation.

It will be convenient to consider a relaxation of  problem \eqref{eq:density_F1_problem}
 by minimizing over the set of signed measures on $\cV$, which we will denote 
 $\cM (\cV)$. For any $\tau \in \cM (\cV)$, let $\hf ( \bx ; \tau) = \int_{\cV} \phi (\bx ; \bw ) \tau ( \de \bw)$. We have
\begin{equation}\label{eq:F1_problem}%
  \begin{aligned}
    \mbox{\rm minimize}\;\;\;& | \tau | (\cV) \, ,\\
     \mbox{\rm subj. to}\;\;\;& \hf(\bx_i;\tau)=y_i,\;\;\; \forall i\le n\, ,
    \end{aligned}
 \end{equation} 
where the minimization is now over all $\tau \in \cM(\cV)$ and $|\tau|(\cV)$ 
denotes the total variation of $\tau$\footnote{Recall that by Hahn decomposition, 
there exists two non-negative measures $\tau_+$ and $\tau_-$ such that $\tau = \tau_+ - \tau_-$. 
The total variation is equal to $|\tau |(\cV) = \tau_+ (\cV) + \tau_- (\cV)$.}. 
If we assume that the signed measure $\tau$ has a density with respect 
to the fixed probability measure $\mu$, i.e., $\tau (\de \bw) = a (\bw) \mu (\de \bw)$,
 then the total variation is equal to $| \tau | (\cV) = \int_{\cV} |a(\bw) | \mu (\de \bw)$
  and we recover the original problem \eqref{eq:density_F1_problem}. Note that if $\mu$ has 
  full support on $\cV$, then the infinum in the two problems are the same (all measures 
  can be written as limits of measures with densities).
  However the infinum in problem \eqref{eq:density_F1_problem} is not attained in general,
  while the infimum in \eqref{eq:F1_problem} is always achieved for $\cV$ compact (barring degenerate cases in which it is not feasible). 
  Technically this happens because the space  of integrable functions
  $a(\bw)$ with $\int|a(\bw)|\mu(\de\bw)\le C$ is not compact in the weak-* topology, 
  while the set of signed measure with  $| \tau | (\cV) \le C$ is.
  (The optimum $\tau$ can be singular with respect to $\mu$, e.g.
  a sum of Dirac delta functions.)

\begin{remark}
For any $\tau \in \cM(\cV)$ that verifies the equality constraint of problem \eqref{eq:F1_problem}, we have $\by / |\tau|(\cV)$ is in the convex hull of $\{ \bphi_n ( \bw) , - \bphi_n(\bw) : \bw \in \cV \}$. Hence, by Caratheodory theorem, there exists at most $(n+1)$ weights $\{\bw_j\}_{j \in [n+1]}$ such that $\by = \sum_{j \in [n+1]} a_j \bphi_n (\bw_j)$ and $\sum_{j \in [n+1]} | a_j | = |\tau|(\cV)$. In particular, if $\cV$ is compact, the minimizer in problem \eqref{eq:F1_problem} is always attained by a measure that is supported on at most $n+1$ points, i.e., there exists $\{(a_j^* , \bw_j^* )\}_{j \in [n+1]}$ such that $\tau^* = \sum_{j \in [n+1]} a_j^* \delta_{\bw^*_j }$ minimizes \eqref{eq:F1_problem}.
\end{remark}

Let us call the problem \eqref{eq:F1_problem} the $\cF_1$-problem. 
 In order to study 
the computational complexity of solving this problem, we introduce a \textit{weak version} of
 the $\cF_1$-problem, where we allow an error $\eps >0$ on the equality constraint:
\begin{equation}\label{eq:F1_problem_weak}%
  \begin{aligned}
    \mbox{\rm minimize}\;\;\;& | \tau | (\cV) \, ,\\
     \mbox{\rm subj. to}\;\;\;& \hf(\bx_i;\tau)=\hy_i,\;\;\; \forall i\le n\, , \\
     & \| \by - \hat \by \|_2 \leq \eps \, ,
    \end{aligned}
\end{equation}
where the minimization is now over $\hat \by \in \R^n$ and $\tau \in \cM (\cV)$. For concreteness we will consider $\phi ( \bx ; \bw  ) = \min ( \max( \< \bw , \bx \>,0) , 1)$ 
(truncated ReLu), but we believe it is possible to generalize our proofs to other
activations at the cost of additional technical work. 
 We  further restrict $\cV$ to be a rectangle in $\R^d$ possibly with some infinite sides.

Denote $\Q$ the set of rational numbers. We consider the following problem \texttt{W-F1-PB} which depends on a rational number $\eps >0$:

\begin{description}
\item[$\boxed{\texttt{W-F1-PB}(\eps)}$]: Given $\by \in \Q^n$ and $\gamma \in \Q$. Denote $L^*$ the value of the weak $\cF_1$-problem \eqref{eq:F1_problem_weak} with $\eps$ error on the constraints. Either
\begin{itemize}
\item[(1)] Assert that $L^* \leq \gamma + \eps$; or,
\item[(2)] Assert that $L^* \geq \gamma - \eps$. 
\end{itemize}
\end{description}

We can think about \texttt{W-F1-PB} as the \textit{weak validity problem} associated to the $\cF_1$-problem \eqref{eq:F1_problem}. In particular, if we are able to solve the $\cF_1$-problem within an additive error $\eps$ of the optimum and with at most $\eps$ $\ell_2$-error on the equality constraints, we can solve \texttt{W-F1-PB}.

We show in the following theorem that \texttt{W-F1-PB} is hard to solve under the standard assumption that $\BPP$ (\textit{bounded-error probabilistic polynomial time} class) does not contain $\NP$:

\begin{theorem}\label{thm:F1_NPhard}
Let the activation function be the truncated ReLu $\phi ( \bx ; \bw  ) = \min ( \max( \< \bw , \bx \>,0) , 1)$, and $\cV$ be a rectangle in $\R^d$ (possibly with some infinite sides).
  Assuming $\NP \not\subset \BPP$, there exists an absolute constant $C>0$ such that the problem \texttt{W-F1-PB}$(n^{-C})$ is $\NP$-hard.
\end{theorem}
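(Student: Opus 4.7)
My plan is to reduce from the problem of learning halfspaces with noise, shown $\NP$-hard under randomized reductions in \cite{feldman2006new,guruswami2009hardness}: for suitable rates, given labeled data $(\bx_i,y_i)_{i\le n}$ with $y_i\in\{0,1\}$, it is hard to distinguish the case where some halfspace achieves small error from the case where every halfspace has advantage over random guessing bounded by $\eps_{\rm adv}$. The reduction forwards the data $(\bx_i,y_i)$, together with a threshold $\gamma$, as a \texttt{W-F1-PB}$(n^{-C})$ instance.

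The structural bridge between the two problems is the layer-cake identity for the truncated ReLu,
\begin{equation*}
\phi(\bx;\bw) \;=\; \int_0^1 \mathbf{1}\{\<\bw,\bx\> \ge t\}\,\de t,
\end{equation*}
which yields $\hf(\bx_i;\tau) = \int_0^1 \int_\cV \mathbf{1}\{\<\bw,\bx_i\> \ge t\}\,\tau(\de\bw)\,\de t$. Thus the $\cF_1$-problem amounts to interpolating $y$ by an averaged signed mixture of halfspace indicators with total-variation budget $|\tau|(\cV)$, so that a small total variation translates directly into the existence of an effective halfspace classifier.

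For \emph{completeness}, if an unknown halfspace $\bw_*$ agrees with all but an $\eta$-fraction of labels, I place a Dirac at a rescaled $K\bw_*$ (with $K$ large enough that truncation makes $\phi(\bx_i;K\bw_*)$ equal the halfspace indicator at each data point, which is possible whenever $\cV$ has an infinite side in the required direction), together with at most $\eta n$ correction atoms at weights singling out individual mislabeled $\bx_i$, yielding $|\tau|(\cV) \le 1 + C\eta n$. For \emph{soundness}, suppose $|\tau|(\cV) \le L$ with $\hf(\bx_i;\tau) \approx y_i$. Then $\sum_i(2y_i-1)\hf(\bx_i;\tau) = \Theta(n)$ forces $\sup_{\bw \in \cV}|\sum_i(2y_i-1)\phi(\bx_i;\bw)| \gtrsim n/L$, and a second application of the layer-cake identity produces a threshold $t^*$ such that the halfspace $\mathbf{1}\{\<\bw^*,\bx\> \ge t^*\}$ has advantage $\Omega(1/L)$ over random guessing, contradicting the NO case once $L\cdot \eps_{\rm adv}$ is sufficiently small.

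The main obstacle is the quantitative calibration of this gap. With a vanilla constant-rate halfspace hardness ($\eta,\eps_{\rm adv}$ both $\Theta(1)$) the completeness upper bound $1 + \eta n$ would exceed the soundness lower bound $\Omega(1/\eps_{\rm adv})$, giving the wrong gap direction. The reduction therefore relies on a polynomial-scale version of the halfspace hardness---so that $\eta,\eps_{\rm adv}$ are both $n^{-\Omega(1)}$, obtainable by amplification of the Feldman--Guruswami--Raghavendra constructions---producing a positive polynomial gap in $|\tau|(\cV)$ between YES and NO; a \texttt{W-F1-PB}$(n^{-C})$ oracle with $C$ large enough then detects this gap. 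The remaining technical issues, namely truncating possibly-infinite sides of $\cV$ to a polynomial-diameter box and controlling the residual truncation error of $\phi$ in the Dirac construction so that it stays within the $n^{-C}$ slack, are routine.
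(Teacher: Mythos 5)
Your overall architecture is genuinely different from the paper's, and the difference is where the gap lies. The paper does \emph{not} reduce directly from halfspace learning to the interpolation problem with $\by$ equal to the labels. Instead it routes through convex geometry: it shows that a \texttt{W-F1-PB} oracle yields a \emph{weak membership} oracle for the convex body $K=\mathsf{ConvexHull}\{\pm\bphi_n(\bw):\bw\in\cV\}$, invokes the randomized oracle-polynomial-time equivalence of weak membership and weak validity (Lee--Sidford--Vempala, in the spirit of Gr\"otschel--Lov\'asz--Schrijver) to solve the linear optimization $\max_{\blambda\in K}\<\ones,\blambda\>$, and only then maps \texttt{HS-MA} onto that optimization via $\sup_{\blambda\in K}\<\ones,\blambda\>=\sup_{\bw}\<\ones,\bphi_n(\bw)\>$ (using nonnegativity of the truncated ReLu). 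With the all-ones objective, the constant-gap version of \texttt{HS-MA} ($\eps<1/4$) translates into a constant-fraction gap $(3/4+\kappa)n$ versus $3n/4$ in the optimum, so the known hardness suffices; the $n^{-C}$ precision and the randomization both come from the membership-to-validity machinery, not from the halfspace instance.

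Your direct reduction cannot be closed as written. You correctly identify that with the known constant-rate hardness ($\eta,\eps_{\rm adv}=\Theta(1)$) the completeness value $1+C\eta n=\Theta(n)$ exceeds the soundness lower bound $\Omega(1/\eps_{\rm adv})=\Theta(1)$, i.e., the gap points the wrong way for a minimization problem, and the $n^{-C}$ slack on the constraints is far too small to let you skip paying for the $\Theta(n)$ mislabeled points. The repair you propose --- a version of \texttt{HS-MA} with both $\eta$ and $\eps_{\rm adv}$ polynomially small, ``obtainable by amplification'' --- is not a routine step and is not what \cite{feldman2006new,guruswami2009hardness} provide; hardness amplification for agreement maximization in this regime is not a known black-box consequence of the constant-gap result, so your reduction rests on an unproved hardness assumption. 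Two further, smaller issues: your correction atoms that ``single out individual mislabeled $\bx_i$'' do not exist for arbitrary data in $\{-1,1\}^d$ (the paper augments each point with a canonical basis coordinate $\be_i$ precisely to make such atoms available and to guarantee $\Bsf(\bzero,r)\subseteq K$), and a reduction of your form would be deterministic, which sits uneasily with the paper's explicit remark that the deterministic route only rules out exponentially small precision.
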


By equality of the infimum of problems \eqref{eq:density_F1_problem} and \eqref{eq:F1_problem}, 
Theorem \ref{thm:F1_NPhard} also implies the hardness of the original problem. 
The proof of Theorem \ref{thm:F1_NPhard} relies on a polynomial time randomized 
reduction from an $\NP$-hard problem, the \textit{Maximum Agreement for Halfspaces} problem. 
 If we only assume $\NP \neq \Pclass$, our reductions can be made deterministic using results
  from \cite{grotschel2012geometric}. However this deterministic reduction only rules out
   precision $\eps$ that are \textit{exponential in the number of bits}, in particular 
   it does not rule out precision $\eps = e^{-n}$.

We denote below the Maximum Agreement for Halfspaces problem \texttt{HS-MA} .  It was shown 
to be $\NP$-hard in \cite{feldman2006new} and \cite{guruswami2009hardness}. We will follow 
the notations of  \cite{guruswami2009hardness}. Consider $(n_+,n_-,d) \in \N^3$ and 
$\{ \bx_1 , \ldots , \bx_{n_+} , \bz_1 , \ldots , \bz_{n_-} \} \subset \{ - 1 , 1 \}^d$. Denote 
\begin{equation}\label{eq:M_function}
M( \bw , a) = \sum_{ i =1}^{n_+} \mathbbm{1}[ \< \bw , \bx_i \> > a] +  \sum_{ i =1}^{n_-} \mathbbm{1}[ \< \bw , \bz_i \> < a] \, .
\end{equation}
The \texttt{HS-MA} problem depends on a rational numbers $\eps >0$ (where we slightly 
simplified the statement in \cite{guruswami2009hardness}):
\begin{description}
\item[$\boxed{\texttt{HS-MA}(\eps)}$]: Distinguish the following two cases
\begin{itemize}
\item[(1)] There exists a half space $(\bw ,a)$ such that $M( \bw , a)  \geq (n_+ + n_- ) (1 - \eps)$; or,
\item[(2)] For any half space $(\bw , a)$, we have $M( \bw , a)  \leq (n_+ + n_- ) (1/2 + \eps)$. 
\end{itemize}
\end{description}

\cite{guruswami2009hardness} showed that for all $0 < \eps < 1/4$, the problem \texttt{HS-MA}$(\eps)$ is $\NP$-hard. 

Below we briefly describe the main ideas of the proof of Theorem \ref{thm:F1_NPhard} and we
 defer the detailed to Appendix \ref{sec:proof_hardness}. In order to reduce 
 \texttt{HS-MA} to \texttt{W-F1-PB}, we use the following intermediate problem:
\begin{equation}\label{eq:inter_problem}%
  \begin{aligned}
    \mbox{\rm maximize}\;\;\;& \< \by , \hat \by \> \, ,\\
     \mbox{\rm subj. to}\;\;\;& \hf(\bx_i;\tau)=\hy_i,\;\;\; \forall i\le n\, , \\
     & | \tau | (\cV) \leq 1 \, .
    \end{aligned}
\end{equation}
We denote \texttt{W-VAL}$(\eps)$ the weak validity problem associated to problem \eqref{eq:inter_problem}.
 First notice that we can rewrite equivalently the constraint set as $\hat \by\in K$ 
 where  $K:=\{ \bz \in \R^n : \exists \tau \in \cM(\cV), \hat f(\bx_i ; \tau ) = z_i, i\leq n\}\subseteq
 \R^n$.
  It is easy to see that \texttt{W-F1-PB} can be used as a weak 
  membership oracle for $K$. \cite{lee2018efficient} shows
   that there exists a polynomial-time randomized algorithm that solves 
   \texttt{W-VAL}$(\eps)$ from a weak membership oracle \texttt{W-F1-PB}$((\eps/n)^C)$ for some
    constant $C>0$. Hence there is a randomized reduction between \texttt{W-VAL} and 
    \texttt{W-F1-PB}. Secondly, the problem \eqref{eq:inter_problem} for $\by = \ones$ (vector of ones) 
    has the same value at optimum as the problem
\begin{equation}\label{eq:inter_problem_equiv}
  \begin{aligned}
    \mbox{\rm maximize}\;\;\;& \< \ones , \bphi_n(\bw) \> \, ,\\
     \mbox{\rm subj. to}\;\;\;& \bw \in \cV \, .
    \end{aligned}
\end{equation}
It is easy to see that we can construct data points and weights such that the problem \eqref{eq:inter_problem_equiv} coincide with $M(\bw,a)$ defined in Eq.~\eqref{eq:M_function} at the optimum. Hence, there is an easy deterministic reduction from \texttt{HS-MA}$(\eps)$ to \texttt{W-VAL}$(\eps')$.

In summary, we used the following two reductions
\begin{equation}
    \boxed{\texttt{W-F1-PB}} \,\,\, \xLongrightarrow{\text{R}} \,\,\, \boxed{\texttt{W-VAL}}  \,\,\, \xLongrightarrow{\text{D}} \,\,\,  \boxed{\texttt{HS-MA}}
\end{equation}
where $\boxed{A} \xLongrightarrow{\text{R}} \boxed{B}$ (resp. $\boxed{A} \xLongrightarrow{\text{D}} \boxed{B}$) means that there exists a polynomial time randomized (resp. deterministic) reduction from $B$ to $A$.

\section{Examples}
\label{sec:Examples}

\subsection{A numerical illustration}

In our first example $\bw \in \cV=\R^d$ and  $\phi (\bx ; \bw) = \sigma ( \< \bw , \bx \>)$ where
$\sigma: \R \to \R$ is an activation function with $\sigma(0) = 0$ (this assumption is only to
simplify some of the formulas below and can be removed).
In this case the featurization map is  deterministic, i.e., $\phi (\bx ; \bw) = \ophi (\bx ; \bw)$.

\begin{figure}
  \includegraphics[width = 0.5\linewidth]{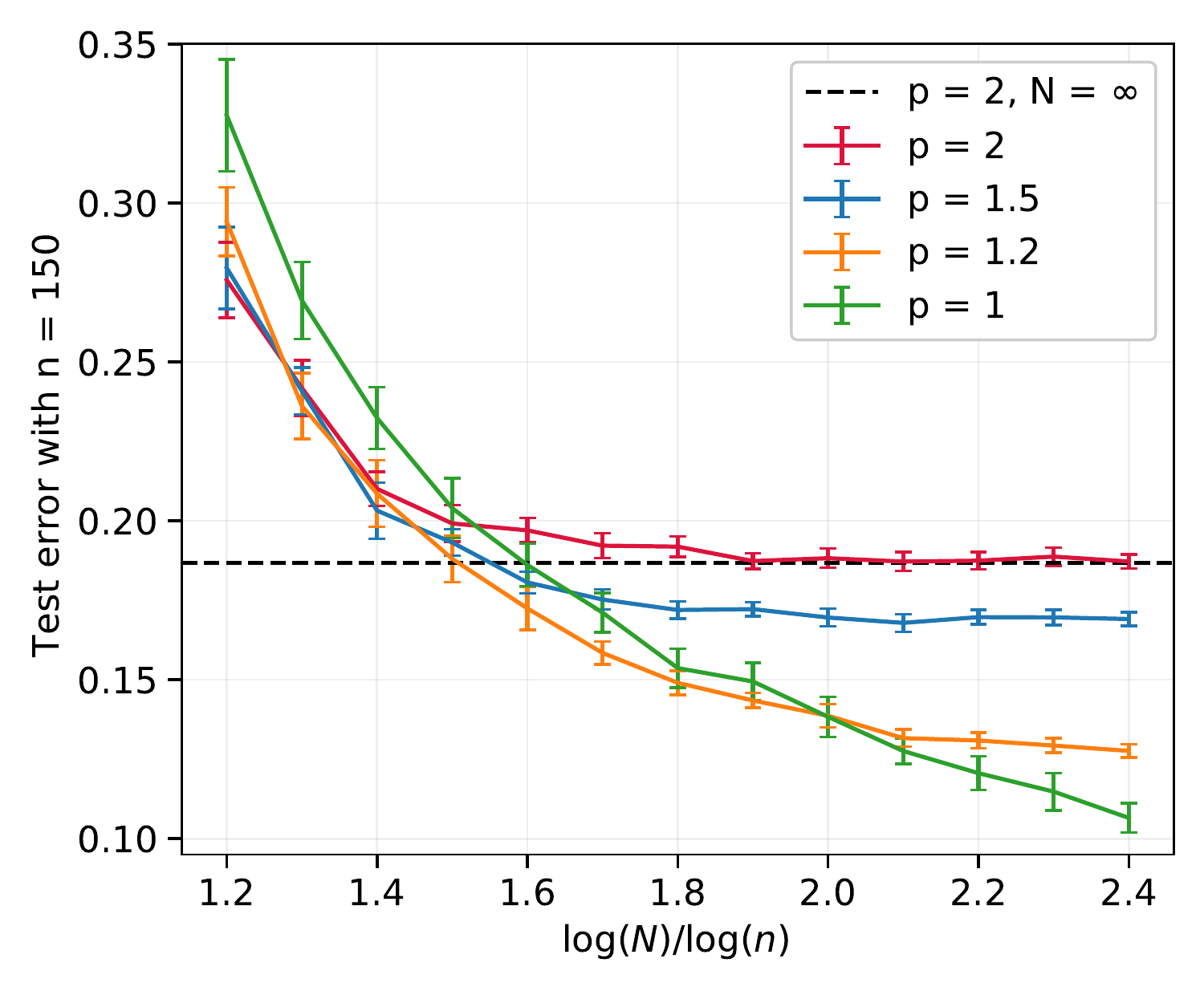}
   \includegraphics[width = 0.5\linewidth]{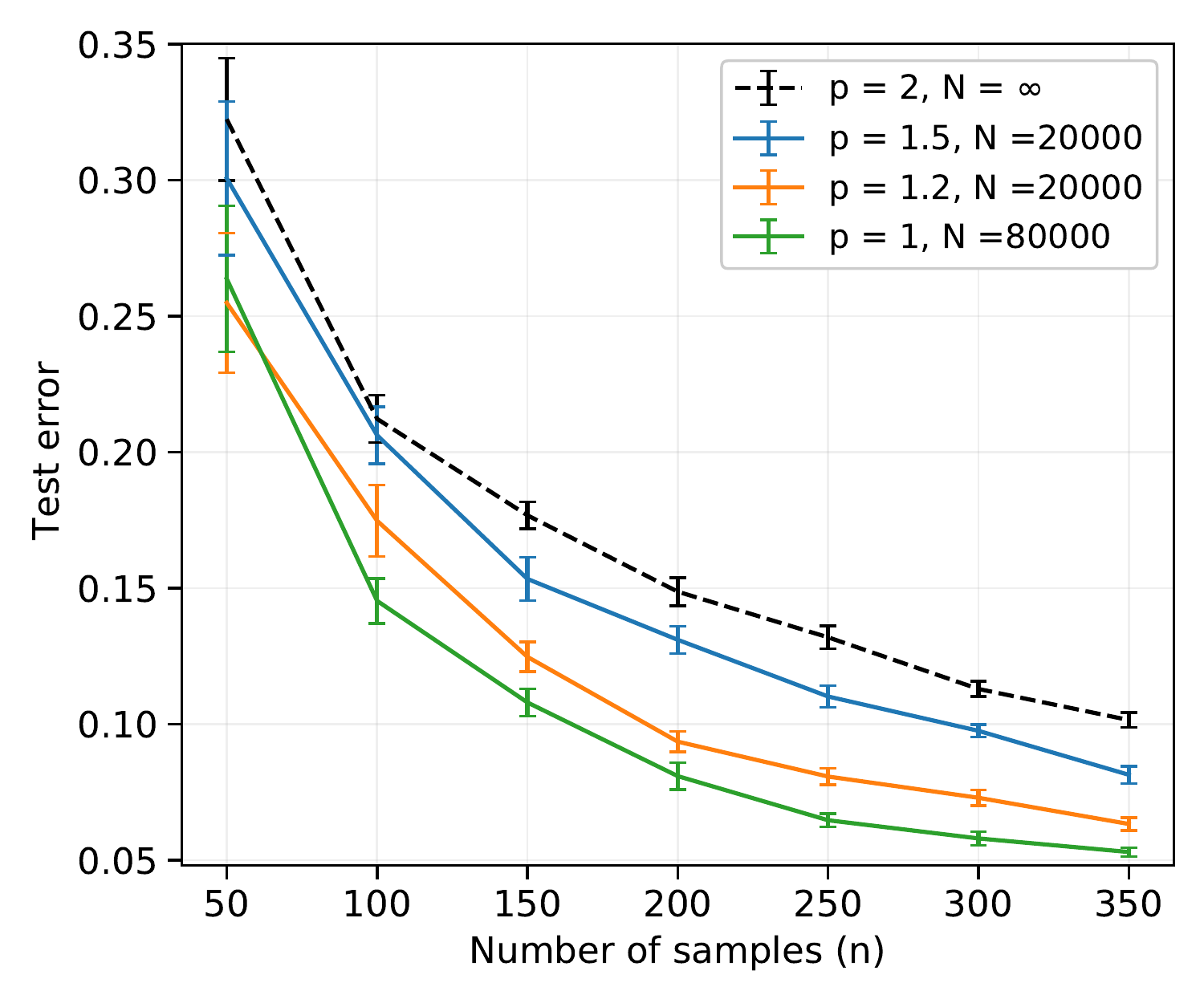}
\vspace{-1cm}
\caption{Minimum complexity random features interpolation for penalty $\rho(x) = |x|^p/p$ and synthetic data
  generated according to the model $y=\sigma(\<\bw_*,\bx\>)$, $\bx\in\reals^d$ (see text). We report the
  average test error over $20$ realizations of this experiment and the $95\%$ confidence intervals.
  Left frame: behavior as a function of the number of features $N$ for fixed sample size $n$. Right frame: behavior as a function
of sample size $n$.}  
\label{fig:ExperimentRidge} 
\end{figure}
Before checking the assumptions of our main theorem in this context, we present a numerical illustration in
Figure \ref{fig:ExperimentRidge}. We generate synthetic data with $(\bx_i)_{i\le n} \sim_{iid} \Unif ( \S^{d-1}(\sqrt{d}))$,
and $y_i = \sigma(\<\bw_*,\bx_i\>)$ where $\bw_*$ is a fixed unit vector, $\|\bw_*\|_2=1$,
and $\sigma(t) = \max(t,0)$ is the ReLU activation.
As mentioned above, we use the featurization map $\phi (\bx ; \bw) = \sigma ( \< \bw , \bx \>)$ 
with weigths  $(\bw_j)_{j\le N} \sim \normal ( 0 , \id_d / d)$.

We fix $d=30$ and solve the minimum complexity interpolation problem
\eqref{eq:MinCplx}, using $\rho(x) = |x|^p/p$. We first fix the sample size $n=150$
and report the average test error as a function of the number of features $N$ (left plot)
for several values of $p$. Notice that in the case $p=2$, the $N=\infty$ limit corresponds to kernel ridge regression,
and hence is directly accessible. We then consider for each $p$ a value of $N$ that is large enough
to obtain a rough approximation of the infinite width limit, and plot the test error as a function of the sample size $n$ (right plot).

A few remarks are in order:
\begin{itemize}
\item[$(i)$] For $p>1$, the test error appears to settle on a limiting value after $N$ becomes large enough $N\gtrsim N_*(n;p)$.
\item[$(ii)$]  The required number of random features $N_*(n;p)$ appears to increase as $p$ decreases. For $p=1$, we are not able to reach
  the $N=\infty$ limit with practical values of $N$.
\item[$(iii)$] As $p$ decreases, the test error achieved by minimum complexity interpolation decreases.
\end{itemize}
Notice that points $(i)$ and $(ii)$ are consistent with our main result Theorem \ref{thm:gen-conv-to-pop}.
Point $(iii)$  is consistent with the notion that the class $\cF_p$ captures better functions that are highly dependent
on low dimensional projections of the covariates vectors.

\subsection{Non-linear random features model}

We next check the assumptions of Theorem \ref{thm:gen-conv-to-pop} for the case of non-linear random features model
$\phi (\bx ; \bw) = \sigma ( \< \bw , \bx \>)$.
\begin{proposition}\label{prop:RF_assumption}
  Assume that $\sigma$ is $L$-Lipschitz and $\sigma(0) = 0$, $\| \bx_i \|_2 \leq \rz\sqrt{d}$ for all $i\le n$, and that the random weights
  $(\bw_i)_{i\le n}\sim \mu$ are mean 0 and satisfy the transportation cost inequality
 \begin{equation}\label{eq:transportation_ineq}
 W_1 ( \nu , \mu ) \leq \sqrt{ 2 (\kappa^2 / d) D ( \nu || \mu )}\,\, \text{ for all probability measures $\nu$ on $\R^d$,}
 \end{equation}
 where $W_1$ is the Wassertein distance and $D$ is the relative entropy (Kullback-Leibler divergence). Then, FEAT1 and FEAT2 are satisfied with constants $L (\bw) = L \| \bw \|_2$ and $\tau = \tau_1 \vee \tau_2$, where
 \begin{align}
   \tau_1 =  C\kappa L \rz, \qquad \tau_2 =  C\kappa L  \| \bK_n^{-1/2} \|_{\op} \frac{\| \bX \|_{\op}}{ \sqrt{d}}\,.
   \label{eq:RoughTauBound}
 \end{align}
\end{proposition}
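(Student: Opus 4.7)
My plan is to reduce the entire verification to a single analytical input: the Bobkov--Götze characterization, which states that the $T_1$ transportation inequality \eqref{eq:transportation_ineq} is equivalent to the assertion that for every $L$-Lipschitz function $f:\R^d\to\R$,
\[
\E_{\bw\sim\mu}\exp\bigl(t(f(\bw)-\E f(\bw))\bigr) \;\leq\; \exp\bigl(t^2 L^2 \kappa^2/(2d)\bigr),\qquad t\in\R.
\]
In other words, $f(\bw)-\E f(\bw)$ is $(L^2\kappa^2/d)$-sub-Gaussian. Both FEAT1 and FEAT2 will be obtained by identifying the relevant Lipschitz function of $\bw$ and bounding its Lipschitz constant.

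\textbf{FEAT2.} Because $\ophi(\bx;\bw)=\sigma(\langle\bw,\bx\rangle)$ with $\sigma(0)=0$ and $\sigma$ $L$-Lipschitz, the map $\bx\mapsto \ophi(\bx;\bw)$ is $L\|\bw\|_2$-Lipschitz and $\ophi(\bzero;\bw)=0\le L\|\bw\|_2$, so $L(\bw)=L\|\bw\|_2$ is a valid choice. To control its tail, I will apply Bobkov--Götze to the $1$-Lipschitz function $\bw\mapsto \|\bw\|_2$, giving sub-Gaussianity of $\|\bw\|_2-\E\|\bw\|_2$ with parameter $\kappa^2/d$. Since $\bw$ has mean zero and the coordinate maps $w_i$ are $1$-Lipschitz, the same bound coordinatewise gives $\E w_i^2\le \kappa^2/d$, hence $\E\|\bw\|_2 \le \sqrt{\E\|\bw\|_2^2}\le \kappa$. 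Combining these yields $\P(L\|\bw\|_2\ge t)\le C\exp(-t^2/(2\tau^2))$ for some constants $C$ and $\tau\asymp L\kappa$, with the trivial small-$t$ regime absorbed into $C$.

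\textbf{FEAT1.} For the scalar part I just apply Bobkov--Götze to $\bw\mapsto \sigma(\langle\bw,\bx\rangle)$, whose Lipschitz constant is $L\|\bx\|_2\le L\rz\sqrt d$; this gives sub-Gaussian parameter $L^2\rz^2 d\cdot\kappa^2/d = L^2\rz^2\kappa^2$, matching $\tau_1\asymp L\rz\kappa$. For the vector part, the plan is to write $\bphi_n(\bw)=\sigma(\bX\bw)$ (componentwise), where $\bX\in\R^{n\times d}$ has rows $\bx_i^{\sT}$. As a map $\R^d\to\R^n$, $\bw\mapsto\sigma(\bX\bw)$ is the composition of the linear map $\bX$ (operator norm $\|\bX\|_{\op}$) and the coordinatewise $L$-Lipschitz map $\sigma$, hence is $L\|\bX\|_{\op}$-Lipschitz in Euclidean norm. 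Therefore, for any $\bv\in\R^n$,
\[
\bw \mapsto \langle\bv,\bpsi_n\rangle \;=\; \langle \bK_n^{-1/2}\bv,\, \sigma(\bX\bw)\rangle
\]
is Lipschitz in $\bw$ with constant at most $\|\bK_n^{-1/2}\bv\|_2\cdot L\|\bX\|_{\op}\le \|\bv\|_2\,\|\bK_n^{-1/2}\|_{\op}L\|\bX\|_{\op}$. Applying Bobkov--Götze once more gives sub-Gaussianity with parameter $\|\bv\|_2^2 \tau_2^2$, where $\tau_2^2 = L^2\|\bK_n^{-1/2}\|_{\op}^2\|\bX\|_{\op}^2\kappa^2/d$, which matches the stated $\tau_2$. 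Taking $\tau=\tau_1\vee\tau_2$ completes the verification.

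\textbf{Main difficulty.} The argument is essentially bookkeeping once Bobkov--Götze is in hand; there is no deep obstacle. The one place where a naive estimate would be wasteful is the vector part of FEAT1: passing the Lipschitz constant through the pre-whitening matrix $\bK_n^{-1/2}$ requires using operator-norm estimates on both $\bK_n^{-1/2}$ and $\bX$ rather than coordinatewise sums, which is what produces the sharp $\|\bX\|_{\op}/\sqrt d$ factor. A minor technicality is that Bobkov--Götze produces sub-Gaussian concentration around the mean, while the paper's definition of sub-Gaussian is stated in a mean-zero form; this is handled by recentering and absorbing the uniformly bounded means into the constants.
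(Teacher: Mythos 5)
Your proposal is correct and follows essentially the same route as the paper's proof: both verifications hinge on the Bobkov--G\"otze equivalence between the $T_1$ inequality and sub-Gaussianity of Lipschitz functions of $\bw$, applied to $\bw\mapsto\sigma(\langle\bw,\bx\rangle)$, to $\bw\mapsto\langle\bK_n^{-1/2}\bv,\sigma(\bX\bw)\rangle$ with the same operator-norm Lipschitz bound, and to $\bw\mapsto L\|\bw\|_2$ for the tail of $L(\bw)$, with the means recentered and absorbed into constants exactly as you describe.
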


\begin{proof}[Proof of Proposition \ref{prop:RF_assumption}]
Let us begin with condition FEAT1. Notice that for any $\| \bx \|_2 \leq \rz\sqrt{d}$,
\[
| \sigma ( \< \bw_1 , \bx \>) - \sigma ( \< \bw_2 , \bx \> ) | \leq L \| \bx \|_2 \| \bw_1 - \bw_2 \|_2 \leq L \rz \sqrt{d} \| \bw_1 - \bw_2 \|_2\, .
\]
Hence $\sigma ( \< \bw , \bx \>)$ is $L\rz\sqrt{d}$-Lipschitz. By assumption
\eqref{eq:transportation_ineq} and Bobkov-G\"otze theorem \cite{bobkov1999exponential},
$\sigma ( \< \bw, \bx \>)-\E_{\bw}[\sigma(\< \bw , \bx \>)]$ is $(\kappa L \rz)^2$-sub-Gaussian with respect to $\bw$, for any fixed $\bx$.
Further $|\E_{\bw}[\sigma(\< \bw , \bx \>)] |\leq L \E_{\bw}[|\< \bw , \bx \>|] \leq C\kappa Lr_0$.

Similarly, for any $\bv \in \R^n$, $\| \bv \|_2 = 1$,
\[
| \< \bv , \bpsi_{n} ( \bw_1 ) \> - \< \bv, \bpsi_{n} ( \bw_2 ) \> | \leq L \| \bK_n^{-1/2} \|_2 \| \bX ( \bw_1 - \bw_2 ) \|_2 \leq L  \| \bK_n^{-1/2} \|_{\op} \| \bX \|_{\op} \| \bw_1 - \bw_2 \|_2 \, .
\]
We deduce that $\bpsi_{n} ( \bw)-\E[\bpsi_n(\bw)]$ is $( \kappa L \| \bK_n^{-1/2} \|_{\op} \| \bX \|_{\op} / \sqrt{d} )^2$-sub-Gaussian with respect to $\bw$,
and $\|\E_{\bw}[\bpsi_n(\bw)]\|_2 \leq C \kappa L \| \bK_n^{-1/2} \|_{\op} \| \bX \|_{\op} / \sqrt{d} $.

Next consider condition FEAT2. We have $\sigma ( \< \bx , \bw \>)$ that is Lipschitz with respect to $\bx$ with Lipschitz constant $L (\bw ) := L\| \bw \|_2$. Using that $L (\bw)$ is $L$-Lipschitz with respect to $\bw$, we have
\[
\P ( L (\bw) \geq 4 L \kappa + t ) \leq \exp \{ - d t^2 / (2 L^2 \kappa^2 ) \} \, ,
\]
and therefore there exists an absolute constant $C >0$ that only depend on $L$ and $\kappa$ such that
\[
\P ( L(\bw) \geq t ) \leq C \exp ( - t^2/ (2 L^2 \kappa^2 ) ) \, .
\]
\end{proof}

To make Proposition \ref{prop:RF_assumption} more concrete, let us make the following remarks:
\begin{enumerate}
\item[(a)] The transportation cost inequality \eqref{eq:transportation_ineq} is a necessary and sufficient condition for any $L$-Lipschitz function of $\bw$ to be $(L^2 \kappa^2/d)$-sub-Gaussian. For example, $\bw \sim \normal ( 0 , (\kappa^2 /d) \cdot \id_d )$ verifies this inequality
  \cite[Chapter 4]{van2014probability}.
%
%
    \item[(b)] If $\bx$ is a $\kappa^2$-sub-Gaussian random vector, then by Lemma \ref{lem:matrix_concentration}, there exists constants $C, c>0$ depending only on $\kappa$, such that $\| \bX \|_{\op} \leq C \sqrt{n}$ with probability at least $1 - \exp (- c n)$.
\end{enumerate}
%

Proposition \ref{prop:RF_assumption} only focused on FEAT1 and FEAT2. The last two assumptions FEAT3 and FEAT3'
are more difficult to check. The next proposition provides a class of activation
functions for which FEAT3 is satisfied.
\begin{proposition}
Assume $\phi(\bx;\bw)=\sigma(\<\bw,\bx\>)$ with $\bw\sim\normal(0,\id_d/d)$. 
Let $\mu_k(\sigma) :=\E\{\sigma(G)\He_k(G)\}$ be the $k$-th Hermite coefficient of the 
activation function (where $G\sim\normal(0,1)$, and the normalization 
$\E\{\He_k(G)\He_j(G)\}=k!\delta_{jk}$ is assumed). 

Consider $(\bx_i)_{i\le n}\sim_{iid}\Unif(\S^{d-1}(\sqrt{d}))$ and $d^{\ell+\delta}\le n\le 
d^{\ell+1-\delta}$ for some constant $\delta>0$. If $\mu_{\ell}(\sigma)$ is
bounded away from zero and $|\mu_k(\sigma)|\le C/k^{k+1/2}$ for all $k$ large enough, 
then FEAT 3 is satisfied with high probability with $c=1/4$ and nonrandom
$\eta$ independent of $d,n$.
\end{proposition}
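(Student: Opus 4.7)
The plan is to reduce FEAT3 to a combination of Carbery--Wright anti-concentration for a low-degree Hermite truncation of $\langle\bu,\bphi_n(\bw)\rangle$ and a Chebyshev bound on the residual, where $\bu:=\bK_n^{-1/2}\bv$ for a unit $\bv\in\R^n$. Since $\bu^\top\bK_n\bu=1$ and $\langle\bv,\bpsi_n\rangle=\langle\bu,\bphi_n(\bw)\rangle$, the target is the stronger form of FEAT3 displayed in the assumption, $\sup_{\|\bv\|_2=1}\P(|\langle\bv,\bpsi_n\rangle|\le\eta)\le 3/8$ for a constant $\eta$ independent of $d,n$.

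The first step is to establish a constant lower bound on $\lambda_{\min}(\bK_n)$. Since $\langle\bw,\bx_i\rangle\sim\normal(0,1)$, the Hermite expansion of $\sigma$ yields $(\bK_n)_{ij}=\sum_{k\ge 0}(\mu_k^2/k!)(\langle\bx_i,\bx_j\rangle/d)^k$, hence $\bK_n=\bA+\bB$ with $\bA=\sum_{k\le\ell}(\mu_k^2/k!)(\bX\bX^\top/d)^{\circ k}\succeq 0$ and $\bB=\sum_{k>\ell}(\mu_k^2/k!)(\bX\bX^\top/d)^{\circ k}$. I would invoke the concentration results for inner-product kernels on the sphere in the spirit of El Karoui and of Ghorbani--Mei--Misiakiewicz--Montanari: in the regime $n\le d^{\ell+1-\delta}$, with high probability $\|\bB_K-\beta_{>K}\,\id_n\|_{\op}=o_d(1)$ for each fixed $K\ge\ell$, where $\bB_K:=\sum_{k>K}(\mu_k^2/k!)(\bX\bX^\top/d)^{\circ k}$ and $\beta_{>K}:=\sum_{k>K}\mu_k^2/k!$. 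The tail $k\ge 2(\ell+1)$ is controlled by the Frobenius bound $n^2/d^k=o(1)$; the intermediate range $\ell<k<2(\ell+1)$ requires a Gegenbauer/spherical-harmonic decomposition. Combined with $\bA\succeq 0$, this gives $\lambda_{\min}(\bK_n)\ge \beta_{>\ell}-o(1)\ge \beta_{>\ell}/2=:c_0>0$ (with $\beta_{>\ell}>0$ since the proposition implicitly requires $\bK_n$ invertible, ruling out $\sigma$ polynomial of degree $\le\ell$), hence $\|\bu\|_2^2\le 1/c_0$.

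For the anti-concentration, expand $X:=\langle\bu,\bphi_n(\bw)\rangle=\sum_k(\mu_k/k!)T_k(\bw)$ with $T_k(\bw):=\sum_i u_i\He_k(\langle\bw,\bx_i\rangle)$; the $T_k$'s are $L^2$-orthogonal with $\Var(T_k)=k!\,\bu^\top(\bX\bX^\top/d)^{\circ k}\bu$ and $\Var(X)=1$. Split $X=P_K+R_K$ with $P_K:=\sum_{k\le K}(\mu_k/k!)T_k$. From Step 1, $\Var(R_K)\le (1+o(1))\beta_{>K}/c_0$. The hypothesis $|\mu_k|\le C/k^{k+1/2}$ combined with Stirling gives $\mu_k^2/k!\lesssim e^k/k^{3k+3/2}$, so that $\log\beta_{>K}\le -3K\log K+O(K)$, i.e.\ super-exponential decay. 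Carbery--Wright applied to the degree-$K$ polynomial $P_K$ in the i.i.d.\ Gaussian $\bw$, with $\Var(P_K)\ge 3/4$, gives
\[
\P(|P_K|\le 2\eta)\le C_{\rm CW}K\,(2\eta/\sqrt{\Var(P_K)})^{1/K}\le 1/16\quad\text{for}\ \eta\le\eta_0\asymp (16C_{\rm CW}K)^{-K}.
\]
The key comparison is $\log(\beta_{>K}/\eta_0^2)\lesssim -3K\log K+2K\log K+O(K)=-K\log K+O(K)\to-\infty$, so for a \emph{constant} cutoff $K=K^\star$ (depending only on $\sigma,c_0,C_{\rm CW}$) one simultaneously has (a) $\Var(R_{K^\star})\le 1/4$, hence $\Var(P_{K^\star})\ge 3/4$, and (b) $\Var(R_{K^\star})/\eta_0^2\le 1/16$, so by Chebyshev $\P(|R_{K^\star}|\ge \eta_0)\le 1/16$. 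Combining, $\P(|X|\le\eta_0)\le \P(|P_{K^\star}|\le 2\eta_0)+\P(|R_{K^\star}|\ge\eta_0)\le 1/8<3/8$, which is FEAT3 with $c=1/4$ and $\eta=\eta_0=\eta_0(\sigma)$ independent of $d,n$.

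The main obstacle is Step 1 in the intermediate range $\ell<k<2(\ell+1)$, where the Frobenius estimate on $(\bX\bX^\top/d)^{\circ k}-\id_n$ does not close and one must exploit the Gegenbauer decomposition of inner-product kernels on the sphere in the regime $d^{\ell+\delta}\le n\le d^{\ell+1-\delta}$. Once the constant spectral lower bound on $\bK_n$ is in place, Step 2 is essentially mechanical: the super-exponential decay $\beta_{>K}\sim e^{-3K\log K}$ beats the Carbery--Wright threshold $\eta_0^{-2}\sim e^{2K\log K}$ with margin $e^{-K\log K}$, which is precisely why a \emph{constant} cutoff $K^\star$ suffices and $\eta_0$ can be taken independent of $d,n$.
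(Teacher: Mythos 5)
Your proposal is correct and follows essentially the same route as the paper's proof (Proposition \ref{prop:small_ball_Hermite_decay} and the remark following it): truncate the Hermite expansion at a constant level $m>\ell$, apply Carbery--Wright to the degree-$m$ polynomial part $\langle\bv,\bpsi_n^{\le m}\rangle$ (whose variance is $\ge 3/4$ once $\lambda_{\max}(\bK_n^{>m})\le\lambda_{\min}(\bK_n)/4$), bound the tail by Markov via $\lambda_{\max}(\bK_n^{>m})/\lambda_{\min}(\bK_n)$, and verify the required spectral conditions in the regime $d^{\ell+\delta}\le n\le d^{\ell+1-\delta}$ using the kernel concentration results of \cite{ghorbani2019linearized,mei2021generalization}, with the super-exponential decay $\kappa_{>m}\lesssim e^{m}m^{-3m}$ beating the Carbery--Wright threshold $(C_0m)^{-(2m+1)}$. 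The only cosmetic difference is that you first extract a constant lower bound $c_0$ on $\lambda_{\min}(\bK_n)$ and then bound $\Var(R_K)\le\beta_{>K}/c_0$, whereas the paper works directly with the ratio $\lambda_{\max}(\bK_n^{>m})/\lambda_{\min}(\bK_n)$; these are the same estimate.
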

A more general version of this result is proved in Appendix \ref{sec:small_ball} for FEAT3.
While we expect FEAT3 and FEAT3' to hold more generally, we do not have a more general proof at the moment.
We can bypass this difficulty below by considering noisy features.
\begin{proposition}\label{prop:RF_assumption_noisy}
 Consider the same setting as in Proposition \ref{prop:RF_assumption}. Define $\ophi ( \bx ; \bw) = \sigma ( \< \bx , \bw \>)$ and $\phi ( \bx ; \bw ) := \phi ( \bx ; \bw , z) = \sigma ( \< \bx , \bw \>) + z$ where $z \sim \normal (0,\gamma^2)$. Then FEAT1, FEAT2, FEAT3 and FEAT3' are satisfied with constants $L(\bw) = L \| \bw \|_2$, $C_r = \Gamma ((r+1)/2) $ and
 \begin{align}
   \eta  & = \frac{\gamma}{10\lambda_{\max}(\bK_n)^{1/2}}\, ,\label{eq:EtaLB}\\
   \tau &= \tau_1 \vee \tau_2\, ,\;\;\;\;
          \tau_1 =  \kappa L \rz, \qquad \tau_2 =  \kappa^2 L^2  \frac{\| \bX \|^2_{\op}}{\gamma^2 d} + 1\, .
          \label{eq:TauLB}
  \end{align}
\end{proposition}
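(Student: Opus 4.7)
The plan is to inherit conditions FEAT1 and FEAT2 from Proposition \ref{prop:RF_assumption} with minor modifications, and to use the added Gaussian noise $\bz = (z_1,\dots,z_n)^{\sT}\sim\normal(\bzero,\gamma^2\id_n)$ to certify the small-ball conditions FEAT3 and FEAT3'. Writing $\bsigma(\bw) = [\sigma(\<\bx_i,\bw\>)]_{i\le n}$, the feature vector decomposes as $\bphi_n = \bsigma(\bw) + \bz$, so $\bpsi_n = \bK_n^{-1/2}\bsigma(\bw) + \bK_n^{-1/2}\bz$. The structural point that drives everything is that, since $\bz$ is independent of $\bw$ and centered,
\[
\bK_n \;=\; \E[\bsigma(\bw)\bsigma(\bw)^{\sT}] + \gamma^2\,\id_n,
\]
so $\lambda_{\min}(\bK_n)\ge \gamma^2$ and $\|\bK_n^{-1/2}\|_{\op}\le 1/\gamma$.

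For FEAT1, the averaged feature $\ophi(\bx;\bw)=\sigma(\<\bw,\bx\>)$ is unchanged, giving $\tau_1 = \kappa L \rz$ verbatim from Proposition \ref{prop:RF_assumption}. For the whitened vector and any unit $\bv$, I would split $\<\bv,\bpsi_n\> = \<\bv,\bK_n^{-1/2}\bsigma(\bw)\> + \<\bv,\bK_n^{-1/2}\bz\>$. The first summand is an $L\|\bK_n^{-1/2}\|_{\op}\|\bX\|_{\op}$-Lipschitz function of $\bw$, so Bobkov--G\"otze combined with $\|\bK_n^{-1/2}\|_{\op}\le 1/\gamma$ yields sub-Gaussianity with parameter $\kappa^2 L^2\|\bX\|_{\op}^2/(\gamma^2 d)$; the second summand is Gaussian in $\bz$ with variance $\gamma^2\|\bK_n^{-1/2}\bv\|_2^2\le 1$. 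Independence of $\bw$ and $\bz$ makes these sub-Gaussian parameters add, yielding \eqref{eq:TauLB}. Condition FEAT2 depends only on $\ophi$ and is identical to its counterpart in Proposition \ref{prop:RF_assumption}.

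For FEAT3 I would condition on $\bw$: then $\<\bv,\bpsi_n\>$ is Gaussian with some deterministic mean and variance
\[
\gamma^2\|\bK_n^{-1/2}\bv\|_2^2\;\ge\;\frac{\gamma^2}{\lambda_{\max}(\bK_n)}\;=\;(10\eta)^2
\]
by the choice \eqref{eq:EtaLB}. The density bound $\P(|X|\le \eta)\le 2\eta/(\sigma\sqrt{2\pi})$ for a Gaussian with standard deviation $\sigma\ge 10\eta$ gives $\P(|\<\bv,\bpsi_n\>|\le \eta\mid \bw)\le 1/(5\sqrt{2\pi})$ uniformly in $\bv$ and $\bw$; integrating over $\bw$ verifies the stronger form of FEAT3 with $c$ close to $1-2/(5\sqrt{2\pi})$. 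For FEAT3', I would use that for $r\in(-1,0)$ the absolute moment $\E[|X|^r]$ of a univariate Gaussian is maximized at mean zero (since $|u|^r$ is singular at $0$) and decreasing in variance, so $\E[|\<\bv,\bpsi_n\>|^r\mid \bw]\le\E[|Z|^r]$ for $Z\sim\normal(0,(10\eta)^2)$. A change of variable in $\int_0^\infty x^r e^{-x^2/(2\sigma^2)}\de x$ gives the standard formula $\E[|Z|^r] = (2^{r/2}/\sqrt{\pi})\sigma^r\Gamma((r+1)/2)$; since $10^r\le 1$ and $2^{r/2}/\sqrt{\pi}\le 1$ for $r\in(-1,0)$, this bounds $\E[|\<\bv,\bpsi_n\>|^r]$ by $\Gamma((r+1)/2)\eta^r$, as required.

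The only real obstacle is the small-ball step, and the explicit additive Gaussian noise reduces it cleanly to one-dimensional Gaussian anti-concentration and moment identities. Everything else is a direct translation of Proposition \ref{prop:RF_assumption} under the additional constraint $\lambda_{\min}(\bK_n)\ge \gamma^2$.
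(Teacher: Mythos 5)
Your proof is correct and follows essentially the same route as the paper's: inherit FEAT1/FEAT2 from Proposition \ref{prop:RF_assumption} with the additive variance proxy $+1$ from the independent Gaussian noise, then condition on $\bw$ and use one-dimensional Gaussian anti-concentration (via $\gamma^2\<\bv,\bK_n^{-1}\bv\>\ge\gamma^2/\lambda_{\max}(\bK_n)=(10\eta)^2$) for FEAT3, and the negative-moment formula for a shifted Gaussian for FEAT3'. The only difference is cosmetic: you justify the step $\sup_x\E[|\tilde z+x|^r]\le\E[|\tilde z|^r]$ explicitly via rearrangement, which the paper asserts without comment.
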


\begin{proof}[Proof of Proposition \ref{prop:RF_assumption_noisy}]
  First notice that $\bK_n = \overline \bK_n + \gamma^2 \id_n$, where we denoted the matrix $(\overline \bK_n)_{i,j} = \E_{\bw} [ \sigma( \< \bx_i , \bw \>) \sigma (\< \bx_j , \bw\>) ]$. FEAT1 and FEAT2 are verified in Proposition \ref{prop:RF_assumption} with the difference that for $\bv \in \R^n$, $\| \bv \|_2 = 1$,
  \[
  \< \bv , \bpsi_{n} (\bw) \> = \< \bv , \bK_n^{-1/2} \obphi_n ( \bw) \> + \Tilde z \, ,
  \]
  where $\Tilde z \sim \normal ( 0, \gamma^2 \< \bv , \bK_n^{-1} \bv \> )$, and therefore $\bpsi_{n} (\bw)$ is a $ \tau_2^2$-sub-Gaussian random vector with
  \[
  \tau_2^2 = \kappa^2 L^2 \frac{\| \bX \|_{\op}^2}{\gamma^2 d} + 1.
  \]
  
  Let us now check FEAT3. Define $\Delta = \gamma^2 \< \bv , \bK_n^{-1} \bv \>$.
  Recalling that we have $\< \bv , \bpsi_n (\bw ) \> = \< \bv , \bK_n^{-1/2} \obphi_n (\bw) \> + \Tilde z$ with $\Tilde z \sim \normal ( 0, \Delta )$ independent of $\obphi_n (\bw)$, we have
  \begin{align*}
    \P ( |\< \bv , \bpsi_n (\bw ) \>| \leq \eta)
    &\le \sup_{x \in \R} \P ( x - \eta \leq \Tilde z \leq x + \eta) \\
   & \le
    \frac{2\eta}{\sqrt{2 \pi \Delta}} = \big(50\pi\lambda_{\max}(\bK_n)\<\bv,\bK^{-1}_n\bv\>\big)^{-1/2} \le \frac{1}{10}\, .
 \end{align*}
 This shows that FEAT3 is satisfied with the stated value of $\eta$. 
 
 Finally, let us check FEAT3'. Consider $r \in (-1,0)$. Letting $G\sim\normal(0,1)$,
 we have
 \[
 \begin{aligned}
 \E [ |\< \bv , \bpsi_{n} ( \bw ) \>|^r ] \leq &~ \sup_{x \in \R} \E_{\Tilde z} [ | \Tilde z  + x |^r ] \leq \E_{\Tilde z} [ | \Tilde z |^r ] \\
 =&~ \Delta^{r/2} \E[|G|^r]\le \eta^{r}\E[|G|^r]\, ,
 \, .
 \end{aligned}
\]
where in the last inequality we used the fact that $\Delta = \gamma^2 \< \bv , \bK_n^{-1} \bv \>
\ge \gamma^2\lambda_{\max}(\bK_n)^{-1}\ge \eta$.
 FEAT3' is satisfied with $C_r = \E[|G|^r] \leq \Gamma ((r+1)/2) <\infty$. 
\end{proof}

\begin{corollary}\label{coro:RF_randomX}
  Consider the same setting as in Proposition \ref{prop:RF_assumption},
  and Proposition \ref{prop:RF_assumption_noisy}, namely $\ophi ( \bx ; \bw) = \sigma ( \< \bx , \bw \>)$ and $\phi ( \bx ; \bw ) := \phi ( \bx ; \bw , z) = \sigma ( \< \bx , \bw \>) + z$ where $z \sim \normal (0,\gamma^2)$,
  $\gamma\le 1$.

  Further assume $(\bx_i)_{i\le n}$ to be independent centered isotropic, with $\|\bx_i\|_2\le \rz\sqrt{d}$
  almost surely.
  Then there exist constants $C,c$ depending uniquely on $L,\rz,\kappa$ (but not on
  $n,d, \sigma$ or the distribution of the data $\bx_i$), such that conditions FEAT1, FEAT2, FEAT3 and FEAT3' are satisfied with
  probability at least $1-Ce^{-cn}$ for the constants
 \begin{align}
   \eta   = \frac{c\gamma}{\sqrt{n}} \, , \;\;\;\; \;\;\;\;
   \tau &= \frac{C}{\gamma}\Big(\sqrt{\frac{n}{d}}\vee \sqrt{\log d}\Big)\, .
 \end{align}
 In particular, consider the case $\rho(x)=|x|^p/p$, $p\in(1,2]$. If $(y_i)_{i\le n}$ are independent $\E\{y_i^4\}\le C$, then with probability
 at least $1-Cn^{-1}$, we have for $N\ge C'
 \gamma^{-2Q - 2(Q \vee 3)} (\frac{n}{d}\vee\log d)^Q n^{Q \vee 3 +Q/2} (\log n)^{Q/2}$,
  \begin{equation}
        \| \hf_N (\,\cdot\,;\hat \blambda_N) - \hf(\,\cdot\,;\hat \blambda)\|_{L_2(\P)}
        \leq 
        C\frac{n^{(Q\vee3 + 1)/2}}{\gamma^{Q\vee 3 + 2Q +1}}
        \Big(\frac{n}{d}\vee\log d\Big)^{Q}\Big( \sqrt{\frac{n\log N}{N}} \vee \frac{(n\log N)^{Q/2}}{N} \Big)
        \, .\label{eq:RandX_Bound}
      \end{equation}
\end{corollary}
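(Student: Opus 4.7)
}

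The plan is to combine Proposition \ref{prop:RF_assumption_noisy} with two high-probability matrix concentration bounds, and then feed the resulting $\tau,\eta$ into Corollary \ref{coro:pnorms}. All statements below are understood to hold simultaneously with probability at least $1-Ce^{-cn}$ by a union bound.

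First, I would verify FEAT1--FEAT3' by invoking Proposition \ref{prop:RF_assumption_noisy}, which reduces the task to controlling $\|\bX\|_{\op}$ and $\lambda_{\max}(\bK_n)$. Since the rows $\bx_i$ are independent, centered, isotropic, and bounded by $\rz\sqrt{d}$, an application of matrix Bernstein to $\sum_i(\bx_i\bx_i^\top-\id_d)$ (with variance proxy $\lesssim nd$ and almost sure bound $\lesssim d$) gives
\[
\bigl\|\bX^\top\bX-n\id_d\bigr\|_{\op}\le C\sqrt{nd\log d}+Cd\log d
\]
with probability $1-d^{-c}$; combining the cases $n\ge d\log d$ and $n<d\log d$ yields $\|\bX\|_{\op}^2/d\le C\bigl((n/d)\vee \log d\bigr)$. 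Plugged into Proposition \ref{prop:RF_assumption_noisy}, this gives the asserted bound $\tau\le (C/\gamma)\bigl(\sqrt{n/d}\vee\sqrt{\log d}\bigr)$ (using $\gamma\le 1$ so $\tau_1=\kappa L\rz$ is dominated).

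Next, since $\overline\bK_n\succeq 0$ and $\tr(\overline\bK_n)=\sum_i\E[\sigma(\langle\bw,\bx_i\rangle)^2]\le L^2\kappa^2\rz^2 n$ by Lipschitz continuity of $\sigma$ and the sub-Gaussianity of $\bw$, we obtain $\lambda_{\max}(\bK_n)\le \lambda_{\max}(\overline\bK_n)+\gamma^2\le Cn$. This gives $\eta=\gamma/(10\lambda_{\max}(\bK_n)^{1/2})\ge c\gamma/\sqrt n$ as claimed. In particular the hypotheses of Corollary \ref{coro:pnorms} hold deterministically on the event just constructed.

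For the quantitative bound, I would then bound $\|\bK_n^{-1/2}\by\|_2$ using $\bK_n\succeq\gamma^2\id_n$, which gives $\|\bK_n^{-1/2}\by\|_2\le\gamma^{-1}\|\by\|_2$; since $\E[y_i^4]\le C$, Chebyshev applied to $\|\by\|_2^2=\sum_i y_i^2$ (with $\E\|\by\|_2^2\le Cn$ and $\mathrm{Var}(\|\by\|_2^2)\le Cn$) yields $\|\by\|_2\le C\sqrt n$ with probability $1-Cn^{-1}$, hence $\|\bK_n^{-1/2}\by\|_2\le C\sqrt n/\gamma$. Substituting $\tau$, $\eta$, and this bound into the right-hand side of \eqref{eq:CoroPnorm} and collecting powers of $\gamma$, $n$, and $(n/d)\vee\log d$ gives the displayed estimate \eqref{eq:RandX_Bound}. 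The threshold on $N$ is obtained by plugging the same bounds into $N_1\vee N_2$ from Corollary \ref{coro:pnorms}; using $Q\ge 2$ one checks that the stated expression majorizes both $N_1$ and $N_2$.

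The only genuinely nontrivial input is the matrix concentration step for $\|\bX\|_{\op}$, since it must simultaneously handle the regimes $n\gtrsim d\log d$ and $n\lesssim d\log d$, which is why the bound takes the form of a maximum of $\sqrt{n/d}$ and $\sqrt{\log d}$; everything else is algebraic bookkeeping on top of the already-established Proposition \ref{prop:RF_assumption_noisy} and Corollary \ref{coro:pnorms}.
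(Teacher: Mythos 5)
Your proposal is correct and follows essentially the same route as the paper: verify FEAT1--FEAT3' through Proposition \ref{prop:RF_assumption_noisy} by bounding $\lambda_{\max}(\bK_n)\le Cn$ and $\|\bX\|_{\op}^2\le C(n\vee d\log d)$, then feed $\tau$, $\eta$, and $\|\bK_n^{-1/2}\by\|_2\le\gamma^{-1}\|\by\|_2\le C\sqrt{n}/\gamma$ into Corollary \ref{coro:pnorms}. The only point to tighten is the failure probability of the matrix concentration step: the statement requires $1-Ce^{-cn}$, so you should take the deviation level in matrix Bernstein of order $n\vee d\log d$ (equivalently invoke the tail version of covariance concentration for bounded vectors with $u\asymp n$) rather than settling for $1-d^{-c}$, which can be larger than $e^{-cn}$ and even than $n^{-1}$ when $d\ll n$.
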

\begin{proof}[Proof of Corollary \ref{coro:RF_randomX}]
  We have
  \begin{align*}
    \lambda_{\max} ( \bK_n ) &= \sup_{\|\bu\|_2=1}\E_{\bw, z} [ \<\bphi_n ( \bw, z ),\bu\> ^2]\\
    & \le \E_{\bw, z} [ \|\bphi_n ( \bw, z )\| ^2]
      \le n \sup_{\|\bx\|_2\le \rz\sqrt{d}} \E_{\bw, z} [ \phi (\bx ; \bw, z )^2 ] \\
                             &\le n \sup_{\|\bx\|_2\le \rz\sqrt{d}} \big\{L^2
                               \E[\<\bw,\bx\>^2]+\gamma^2\big\}\le Cn\, .
  \end{align*}
  Substituting into Eq.~\eqref{eq:EtaLB}, we get the desired bound on $\eta$.

  Considering the estimate on $\tau$, we note that
  $\|\bX\|_{\op}\le C(\sqrt{n}+\sqrt{d\log d})$ with the stated probability by \cite[Theorem 5.6.1]{vershynin2018high}.
  Substituting in Eq.~\eqref{eq:TauLB}, we get the desired estimate of $\tau$.

  Finally, to  prove \eqref{eq:RandX_Bound}, recall that $\bK_n = \overline \bK_n + \gamma^2 \id_n$, where $\overline \bK_n\succeq \bzero$, whence $\|\bK_n^{- 1/2} \by \|_{2 }\le \gamma^{-1}\|\by\|_2\le C\gamma^{-1}\sqrt{n}$,
  where the second step is just Markov inequality. Substituting in Eq.~\eqref{eq:CoroPnorm} yields the
  desired bound \eqref{eq:RandX_Bound}.
\end{proof}
\begin{remark}
  The estimates of $\tau$ and $\eta$ in Proposition \ref{prop:RF_assumption_noisy} and Corollary
  \ref{coro:RF_randomX} are not optimal. First, we expect that the correct order in the second expression in
  Eq.~\eqref{eq:RoughTauBound} to be often $\| \bK_n^{-1/2}  \bX \|_{\op}/ \sqrt{d}$ instead of
  $\| \bK_n^{-1/2} \|_{\op}\|\bX \|_{\op}/ \sqrt{d}$. For many cases of interest the former is of order one, while the latter
  is of order $\sqrt{n/d}$ as we saw.

  Second, we expect the dependence of $\eta$ on $\lambda_{\max}(\bK_n)$ to be often milder (see Appendix \ref{sec:small_ball}).
  Third, we know that in many interesting cases, $\lambda_{\max}(\bK_n)$ is of order $n/d$ instead of order
  $n$. This is for instance the case if $\bw\sim \Unif(\S^{d-1}(1))$, $\bx_i\sim \Unif(\S^{d-1}(\sqrt{d}))$
  \cite{mei2021generalization} (under the assumption $\E[\bphi_n(\bw)]=\bzero$).
\end{remark}

\subsection{The latent linear model}
\label{sec:latent}

This section provides a simple example in which the estimates derived above can be strengthened. 
We consider the following model
\[
\phi ( \bx ; \bw ,z ) = \< \bx , \bw \> + z\, ,
\]
which we will refer to as the `latent linear model.' 

While the latent linear model is extremely simple, it was shown in some settings to have the same asymptotic behavior
as the noiseless nonlinear random features model $\phi(\bx;\bw) = \sigma ( \< \bw , \bx \>)$
\cite{mei2019generalization,hu2020universality}.
For instance, consider the case $\bw\sim \Unif(\S^{d-1}(1))$, $\bx\sim \Unif(\S^{d-1}(\sqrt{d}))$.
Then $\<\bw,\bx\>$ is approximately $\normal(0,1)$.
Decompose $\sigma(t) = \mu_0+\mu_1t  + \sigma_{\perp}(t)$, where $\sigma_{\perp}$ is orthogonal to
linear and constant functions in $L^2(\reals;\gamma)$, with $\gamma$ the standard normal measure.
Then \cite{mei2019generalization} shows that in the proportional asymptotics
$N\asymp n\asymp d$, ridge regression in the  nonlinear random features model behaves as
ridge regression with the latent linear model
$\phi ( \bx ; \bw ,z ) = \mu_0 + \mu_1 \< \bw , \bx \> + \mu_{\star} z$,
with $z\sim\normal(0,1)$ independent of $\bw,\bx$. Here we study the latent linear model from a different perspective
and in a broader context than \cite{mei2019generalization,gerace2020generalisation,hu2020universality}.

We make the following assumptions:
\begin{description}
    \item[A1] \emph{(Covariates distribution)} The covariates $\bx$ are $\kappa^2$-subgaussian random vectors with zero mean and second moment $\E [ \bx \bx^\sT ] = \bSigma_{x}$ and bounded support $\| \bx \|_2 \leq \rz \sqrt{d}$. 
    
    \item[A2] \emph{(Features distribution)} The features $\bw$ follow a multivariate Gaussian distribution with zero mean and covariance $\bSigma_{w}/d$. Furthermore, assume that
      $\lambda_{\max} ( \bSigma_{w} ) \le \kappa$.
    
    \item[A3] \emph{(Features noise distribution)} We assume that $z\sim\normal(0,\gamma^2)$.
    
\end{description}

\begin{proposition}
\label{prop:latant_assumptions}
Assume that conditions A1, A2 and A3 hold. Then there exists $C>0$ such that for $n \geq Cd$, the conditions FEAT1, FEAT2, FEAT3 and FEAT3' hold with probability at least $1 - e^{-c n}$ with constants $\tau,\eta$
depending only on the constants in A1, A2 and A3. In particular, $\tau,\eta$ can be taken to be independent
of $d,n$.
\end{proposition}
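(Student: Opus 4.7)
The plan is to exploit the special Gaussian structure of the latent linear model. Under A2 and A3, conditional on the design $\bX$, the feature vector $\bphi_n(\bw) = \bX\bw + \bz$ is jointly Gaussian with mean zero and covariance $\bK_n = \bX\bSigma_w\bX^\top/d + \gamma^2 \id_n$. The whitened feature $\bpsi_n = \bK_n^{-1/2}\bphi_n(\bw)$ is therefore \emph{exactly} standard Gaussian $\mathcal{N}(0,\id_n)$ for every realization of $\bX$. This single observation reduces all four conditions to standard Gaussian facts and explains why the constants $\tau,\eta$ can be taken to depend only on $\kappa,\rz,\gamma$.

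I would verify the four conditions in turn. For FEAT1, the first clause follows from $\ophi(\bx;\bw) = \<\bx,\bw\> \sim \mathcal{N}(0, \bx^\top\bSigma_w\bx/d)$ with variance bounded by $\kappa \rz^2$ under A1 and A2, and the second clause is immediate since $\bpsi_n$ is $1$-sub-Gaussian. For FEAT2, the map $\bx \mapsto \<\bx,\bw\>$ is $\|\bw\|_2$-Lipschitz and vanishes at the origin, so it suffices to check that $\|\bw\|_2$ has Gaussian tails; this follows from A2 by standard Gaussian concentration with constant controlled by $\kappa$.

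For FEAT3, I would use that for any unit vectors $\bu,\bv\in\R^n$, the pair $(\<\bu,\bpsi_n\>,\<\bv,\bpsi_n\>)$ is a centered bivariate Gaussian with unit marginal variances. A one-line union bound of the form $1-2\P(|G|<\eta)$ with $G\sim\mathcal{N}(0,1)$ then yields a strictly positive lower bound uniformly in $\bu,\bv$ for $\eta$ a sufficiently small absolute constant (e.g.\ $\eta=1/10$). Similarly, FEAT3' follows from $\E[|\<\bv,\bpsi_n\>|^r]=\E[|G|^r]=2^{r/2}\Gamma((r+1)/2)/\sqrt{\pi}$, which is finite for every $r\in(-1,0)$ and hence bounded by $C_r\eta^r$ for the fixed $\eta$.

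I expect no substantive obstacle: the latent linear structure collapses the randomness in $\bw$ and $\bz$ into a single joint Gaussian that makes each condition hold with deterministic, dimension-free constants. In fact, all four conditions hold with probability one conditionally on $\bX$, so the hypothesis $n\ge Cd$ and the probability $1-e^{-cn}$ in the statement are not really invoked by this argument and are retained only for consistency with the general framework of Theorem \ref{thm:gen-conv-to-pop}.
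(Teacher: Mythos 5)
Your proof is correct and follows essentially the same route as the paper's: both rest on the observation that $\bpsi_n=\bK_n^{-1/2}\bphi_n(\bw)$ is exactly $\normal(0,\id_n)$ conditionally on $\bX$, from which FEAT1--FEAT3' reduce to standard Gaussian tail, anti-concentration, and negative-moment facts with dimension-free constants. Your closing remark that the conditions in fact hold with conditional probability one (so that $n\ge Cd$ and the $1-e^{-cn}$ probability are vacuous here) is also consistent with the paper's argument.
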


\begin{proof}[Proof of Proposition \ref{prop:latant_assumptions}]
  We begin by noticing  that $\bK_n = \E_{\bw, \bz} [ \bphi_n ( \bw , \bz) \bphi_n ( \bw , \bz )] =  \bX (\bSigma_w /d ) \bX^\sT + \gamma^2 \id_n$.

  Consider condition FEAT1. We have $\ophi ( \bx ; \bw ) = \< \bx , \bw \> \sim \normal (0 , \bx^\sT \bSigma_w \bx /d)$ is mean 0. By A1 and A2, $\ophi ( \bx ; \bw )$ is $(\rz \kappa)^2$-subgaussian. Furthermore, for any
  fixed $\bx_1,\dots,\bx_n\in\reals^d$, $\bpsi_n(\bw)$ is by construction isotropic, and Gaussian
  (since $\bw$ and $z$ are). Therefore, it is $1$-subgaussian and mean 0.
  
  FEAT2 is easily verified with $L(\bw) = \| \bw \|_2$ and $\P ( L(\bw) \geq t) \leq C \exp ( - t^2/ (2 \kappa^2) )$.

  For condition FEAT3, note that $\<\bv,\bpsi_n(\bw)\>\sim\normal(0,1)$ for any unit vector
  $\bv$. Therefore we have $\P(|\<\bv,\bpsi_n(\bw)\>|\le \eta) \le (2/\pi)^{1/2}\eta$, whence we can 
  take $\eta=1/10$.

  Finally, for FEAT3',  $\E [ | \< \bv , \bpsi_{n} (\bw) \> |^r]=  \E[|G|^r]=: C_r$ for $G\sim\normal(0,1)$.
\end{proof}

We finally notice that, for the latent linear model, we can improve Theorem \ref{thm:gen-conv-to-pop}.
For the latent linear model, we can use the constraint that the  (randomized) predictor interpolates the data,
to improve the bound \eqref{eq:MainThmBound} by a factor $\sqrt{n}$. We expect this insight to generalize to the non-linear setting.
\begin{proposition}\label{prop:latent-RF-infinite}
    Assume conditions A1, A2 and A3 hold. There exists constants $C', c'$ depending only on the constants in those assumptions such that if $N \geq C (n \log(N))^{Q/2}$ and $n \geq  C' d$, then with probability at least $1 - C' e^{-c' n} - C' N^{-c'n}$,
    \begin{equation}\label{eq:bound_prop_latent}
        \| \hf_N (\cdot;\hat \blambda_N) - \hf_0(\cdot;\hat \blambda_0)\|_{L_2}
        \leq 
        C'\Big( \sqrt{\frac{n\log N}{N}} \vee \frac{(n\log N)^{Q/2}}{N} \Big) \frac{\| \by \|_{2 }}{\sqrt{n}} \, ,
    \end{equation}
    where we denoted $Q = Q_1 \vee Q_2$.
\end{proposition}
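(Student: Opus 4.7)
The key observation is that in the latent linear model, $\ophi(\bx;\bw)=\<\bx,\bw\>$, so both the infinite- and finite-width predictors are linear functions of $\bx$. Writing $\hf(\bx;\hblambda)=\<\bx,\bbeta\>$ and $\hf_N(\bx;\hblambda_N)=\<\bx,\what{\bbeta}_N\>$ with
\[
\bbeta=\E_{\bw,z}[\bw\, s(\<\bphi_n(\bw),\hblambda\>)], \qquad \what{\bbeta}_N=\frac{1}{N}\sum_{j=1}^N \bw_j\, s(\<\bphi_{n,j},\hblambda_N\>),
\]
the $L^2(\P)$ error becomes $(\what\bbeta_N-\bbeta)^\sT\bSigma_x(\what\bbeta_N-\bbeta)$. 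Under A1 and $n\geq C'd$, sub-Gaussian sample-covariance concentration (\cite[Theorem 4.6.1]{vershynin2018high}) gives $\bSigma_x\preceq (C/n)\bX^\sT\bX$ with probability at least $1-e^{-c'n}$, which yields
\[
\|\hf_N-\hf\|_{L^2(\P)}^2\,\leq\,\frac{C}{n}\,\|\hf_N(\bX;\hblambda_N)-\hf(\bX;\hblambda)\|_2^2 .
\]
This reduction to the vector of in-sample predictions is where the $\sqrt{n}$ improvement over Theorem \ref{thm:gen-conv-to-pop} originates.

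To control the in-sample error I would use the dual KKT conditions. The finite-width interpolation constraint (with noisy features) reads $\hf_N(\bX;\hblambda_N)=\by-N^{-1}\sum_j \bz_j s(\<\bphi_{n,j},\hblambda_N\>)$, where $\bz_j\sim\normal(\bzero,\gamma^2\id_n)$. For the infinite-width dual, Gaussian integration by parts in $\bz$ gives
\[
\E_z[\bz\, s(\<\bX\bw+\bz,\hblambda\>)]\,=\,\gamma^2\, \E_z[s'(\<\bX\bw+\bz,\hblambda\>)]\,\hblambda,
\]
hence $\hf(\bX;\hblambda)=\by-\gamma^2 V(\hblambda)\hblambda$ with $V(\blambda):=\E_{\bw,z}[s'(\<\bphi_n(\bw),\blambda\>)]$. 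Applying Stein's lemma also to $\bw\sim\normal(\bzero,\bSigma_w/d)$ gives $\bbeta=V(\hblambda)(\bSigma_w/d)\bX^\sT\hblambda$, so that $\by=V(\hblambda)\,\bK_n\,\hblambda$ and thus $\hblambda=V(\hblambda)^{-1}\bK_n^{-1}\by$; in particular, since $\bK_n\succeq\gamma^2\id_n$,
\[
\|\gamma^2 V(\hblambda)\hblambda\|_2\,=\,\gamma^2\|\bK_n^{-1}\by\|_2\,\leq\,\|\by\|_2 .
\]

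Subtracting the two KKT identities, the in-sample difference equals $R_\infty(\hblambda)-R_N(\hblambda_N)$, where $R_N(\blambda):=N^{-1}\sum_j \bz_j s(\<\bphi_{n,j},\blambda\>)$ and $R_\infty(\blambda):=\gamma^2 V(\blambda)\blambda$. The main technical step is then a uniform concentration of $R_N(\blambda)-R_\infty(\blambda)$ over $\blambda$ in a ball containing $\hblambda$ and $\hblambda_N$: using the Gaussianity of the $\bz_j$, the polynomial growth of $s$ from PEN, and a truncation plus $\epsilon$-net argument parallel to that behind Theorem \ref{thm:gen-conv-to-pop}, one aims for
\[
\|R_N(\blambda)-R_\infty(\blambda)\|_2\,\lesssim\,\Big(\sqrt{n\log N/N}\vee (n\log N)^{Q/2}/N\Big)\|\blambda\|_2
\]
holding with high probability. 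Combined with Lipschitz continuity of $R_\infty$ (inherited from the polynomial growth of $s'$) and the dual closeness $\hblambda_N\approx\hblambda$ already established inside the proof of Theorem \ref{thm:gen-conv-to-pop}, this controls $\|R_\infty(\hblambda)-R_N(\hblambda_N)\|_2$ by (rate)$\cdot\|\by\|_2$, using $\|\hblambda\|_2\lesssim\|\by\|_2$ from $\hblambda=V(\hblambda)^{-1}\bK_n^{-1}\by$.

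Dividing through by $\sqrt{n}$ via the first step yields Eq.~\eqref{eq:bound_prop_latent}. The main obstacle is establishing the uniform concentration of $R_N$ with the correct scaling in $\|\blambda\|_2$ and the Theorem-\ref{thm:gen-conv-to-pop} rate; because the constants $\tau,\eta$ of Proposition \ref{prop:latant_assumptions} are $d,n$-independent, once this concentration is in place the final bound drops out upon carefully tracking the constants. I expect this technical estimate to be strictly easier than (indeed, a direct specialization of) the analogous estimate carried out in the proof of Theorem \ref{thm:gen-conv-to-pop}, thanks to the linear dependence on $\bw$ of $\bphi_n(\bw)$.
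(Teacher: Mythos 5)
Your proposal is correct and follows the same overall architecture as the paper's proof: (i) use the fact that both the finite- and infinite-width predictors interpolate $\by$ to reduce the $L^2$ error to the difference of the \emph{noise components} $N^{-1}\bZ s(\bPhi_N^{\sT}\hblambda_N)-\E_{\bw,\bz}[\bz s(\<\bphi_n(\bw,\bz),\hblambda\>)]$, gaining the extra $1/\sqrt{n}$; and (ii) control that difference by rerunning the $\cE_1$--$\cE_3$ machinery of Theorem \ref{thm:gen-conv-to-pop} with $\bz_j$ in place of $\ophi(\bx;\bw_j)$, exactly as the paper does via its event $\cE_1'$ and the continuity bound \eqref{eq:avg-z-continuity}. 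The one place where you genuinely diverge is the derivation of the reduction step: the paper's Lemma \ref{lem:interpolation-trick} is a deterministic resolvent manipulation with $\bM=\id_n+\bX\bX^{\sT}/d$, bounding $\|\id_d-\tfrac1d\bX^{\sT}\bM^{-1}\bX\|_{\op}\le 1/2$ and $\|\tfrac1d\bX^{\sT}\bM^{-1}\|_{\op}\lesssim 1/\sqrt{n}$ and then rearranging a self-bounding inequality, whereas you get the same conclusion more directly from $\bSigma_x\preceq (C/n)\bX^{\sT}\bX$ together with the identity $\bX(\hat\bbeta_N-\bbeta)=R_\infty(\hblambda)-R_N(\hblambda_N)$; both routes require the same implicit nondegeneracy of $\bSigma_x$ (so that $\sigma_{\min}(\bX)\gtrsim\sqrt{n}$, resp.\ the sample-covariance domination, holds once $n\ge C'd$), so the simplification is cosmetic but legitimate. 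One small imprecision: the final calibration should go through $s(\|\hblambda\|_{\bK_n})\le C\|\bK_n^{-1/2}\by\|_2\le C\gamma^{-1}\|\by\|_2$ (as in the paper) rather than through $\|\hblambda\|_2\lesssim\|\by\|_2$, since the concentration and continuity bounds are naturally stated in terms of $s(\|\hblambda\|_{\bK_n})$; your Stein-lemma identities are correct but not needed for this.
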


In other words, the random features predictor $\hat f_N ( \,\cdot\, ; \hblambda_N )$ differs negligibly from the infinite-width predictor $\hat f ( \cdot ; \hblambda )$ when $N \gg ( n \log(n) )^{1 \vee (Q/2)}$. In particular, for $Q=2$, we obtain that $N \gg n\log(n)$ features are sufficient, which matches the results for kernel methods  \cite{mei2019generalization,mei2021generalization}.
The proof of Proposition \ref{prop:latent-RF-infinite} is deferred to Appendix \ref{app:proof_latent}.

\section{Proof of Theorem \ref{thm:gen-conv-to-pop}: Convergence to the population predictor}
\label{sec:ProofMain}

The proof of Theorem \ref{thm:gen-conv-to-pop} is structured as follows. 
We first define three events on which the finite-width dual objective
$F_N$ and the random features predictor $\hat f_N (\, \cdot\, ;\blambda)$ satisfy certain concentration properties.
Lemma \ref{lem:gen-deterministic-analysis}
shows that the simultaneous occurrence of these events
implies a bound on $\| \hat f_N (\,\cdot\, ; \hat \blambda_N ) - \hat f(\,\cdot\, ; \hat \blambda) \|_{L_2(\P)}$.
We then verify that these events occur with high-probability.

Throughout the proofs, we will use the notation $\| \bu \|_{\bA} = \| \bA^{1/2} \bu \|_2$ for $\bu \in \R^n$ and
$\bA \in \R^{n \times n}$ a positive semidefinite matrix. We also use the standard big-Oh and little-o notations whereby the subscript
is used to indicate the asymptotic variable. For instance, we write $f_N= o_N(g_N)$ if $f_N/g_N\to 0$ as $N\to\infty$.

The three events mentioned above are defined as follows: 
\begin{enumerate}

    \item \emph{(Uniform concentration of the predictor)} 
    Event $\cE_1$ is the event that for all $\|\blambda\|_{\bK_n}/\|\hat \blambda\|_{\bK_n} \in [1/2,2]$
    \begin{equation*}
            \frac{\|\hat f_N (\,\cdot\,;\blambda) - \hat f(\,\cdot\,;\blambda)\|_{L_2}}{s(\|\hat \blambda\|_{\bK_n})} 
            =
            \frac{\Big\|\frac1{N}\sum_{j=1}^N \ophi_{\bw_j} (\,\cdot\, )s(\langle\bphi_{n,j}, \blambda\rangle) - \E_{\bw,\phi}\Big[\ophi_{\bw} (\,\cdot \,)\, s(\< \bphi_{n} (\bw) , \blambda\>)\Big]\Big\|_{L_2}}{s(\|\hblambda\|_{\bK_n})} 
            \leq 
            \eps_1.
    \end{equation*}

    \item \emph{(Concentration of dual gradient at $\hblambda$)}
    Event $\cE_2$ is the event that
    \begin{equation*}
        \frac{\|\nabla F_N (\hblambda)\|_{\bK_n^{-1}}}{s(\|\hblambda\|_{\bK_n})}
        = 
        \frac{\Big\| \frac1N\sum_{j=1}^N \bpsi_{n,j} s(\< \bphi_{n,j}, \hblambda\>) - \E_{\bw , \phi} \Big[\bpsi_n (\bw) s(\<\bphi_n (\bw) , \hblambda\>)\Big]\Big\|_2}{s(\|\hat \blambda\|_{\bK_n})} \leq \eps_2.
    \end{equation*}

    \item \emph{(Uniform lower bound on dual curvature)}
    Event $\cE_3$ is the event that
    for all $\|\blambda\|_{\bK_n}/\|\hblambda\|_{\bK_n} \in [1/2,2]$
    \begin{equation*}
        \frac{\bK_n^{-1/2}\nabla^2 F_N (\blambda)\bK_n^{-1/2}}{s(\|\hblambda\|_{\bK_n }) / \|\hblambda\|_{\bK_n}}  
        =
        -  \frac{\frac 1N \sum_{j=1}^N \bpsi_{n,j}\bpsi_{n,j}^\top s'(\< \bphi_{n,j} , \blambda \>)}{s(\|\hblambda\|_{\bK_n}) / \|\hblambda\|_{\bK_n}}  
        \preceq 
        - \beta \id_n.
    \end{equation*}

\end{enumerate}

The first event $\cE_1$ corresponds to the random features predictor $\hat f_N(\,\cdot\,;\blambda)$ approximating the infinite-width
predictor $\hat f(\,\cdot\,;\blambda)$ uniformly well over $\blambda$ in a region around $\hblambda$.
Events $\cE_2$ and $\cE_3$ relate to local properties of the finite-width dual objective $F_N$ around $\hblambda$:
on $\cE_2$, the gradient of the dual objective $\nabla F_N (\hblambda)$ concentrates around the gradient of the infinite
width dual objective $\nabla F ( \hblambda ) = \bzero$; on $\cE_3$, the Hessian of the dual objective $\nabla^2 F_N (\blambda)$ has maximum eigenvalue uniformly bounded away from $0$ for $\blambda$ in a region around $\hblambda$. 

Because these three events involve concentration or bounds on empirical means over a sample of $N$ features,
it is perhaps not surprising that the preceding bounds can be established with high-probability for $\eps_1,\eps_2 = o(1)$ and $\beta = \Theta(1)$ appropriately chosen when $N$ if sufficiently large compared to $n$.

If the infinite-width predictor $\hf(\,\cdot\,;\blambda)$ satisfies a certain continuity property in $\blambda$,
then the events $\cE_1$, $\cE_2$, $\cE_3$ imply a bound on $\| \hf_N (\,\cdot\,;\hat \blambda_N) - \hf(\,\cdot\,;\hblambda )\|_{L_2}$.
\begin{lemma}\label{lem:gen-deterministic-analysis}
    Assume that for all $\|\blambda - \hblambda\|_{\bK_n} \leq \|\hblambda\|_{\bK_n}/2 $,
    \begin{equation}\label{eq:pop-continuity}
        \frac{\|\hf(\,\cdot\,;\blambda) - \hf(\,\cdot\,;\hblambda)\|_{L_2}}{s(\|\hblambda\|_{\bK_n})}
            \leq 
            K \frac{\|\blambda - \hblambda\|_{\bK_n}}{\|\hblambda\|_{\bK_n}}\, ,
    \end{equation}
    for some $K > 0$.
    If $\eps_2/\beta \leq 1/4$,
    then on events $\cE_1,\cE_2,\cE_3$, we have
    \begin{equation*}
        \| \hf_N (\,\cdot\,;\hat \blambda_N ) - \hf(\,\cdot\,;\hblambda)\|_{L_2}
        \leq 
        \Big(\eps_1 + 2\frac{K \eps_2}{\beta}\Big)  s(\|\hblambda\|_{\bK_n}) \, .
    \end{equation*}
\end{lemma}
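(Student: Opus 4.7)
\textbf{Proof plan for Lemma \ref{lem:gen-deterministic-analysis}.} The key is to control $\hat\blambda_N - \hblambda$ in the $\bK_n$-norm using events $\cE_2, \cE_3$, apply the continuity hypothesis \eqref{eq:pop-continuity} to compare $\hf(\,\cdot\,;\hat\blambda_N)$ and $\hf(\,\cdot\,;\hblambda)$, and then use $\cE_1$ to bridge the finite-width and infinite-width predictors at the common dual argument $\hat\blambda_N$. A triangle inequality puts these together.

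\emph{Step 1 (Localization of $\hat\blambda_N$).} Since $\rho$ is strictly convex, $\rho^*$ is convex and $F_N$ is concave; moreover $\nabla F_N(\hat\blambda_N) = \bzero$, so $\hat\blambda_N$ is a global maximizer. Set $r_0 := 2\eps_2\|\hblambda\|_{\bK_n}/\beta$ and $B := \{\blambda:\|\blambda-\hblambda\|_{\bK_n}\le \|\hblambda\|_{\bK_n}/2\}$. Because $\eps_2/\beta \le 1/4$, we have $r_0 \le \|\hblambda\|_{\bK_n}/2$. For any $\blambda\in B$, parametrize $\blambda(t) = \hblambda + t\bv$ with $\bv := (\blambda-\hblambda)/\|\blambda-\hblambda\|_{\bK_n}$ so $\|\bv\|_{\bK_n}=1$, and set $\phi(t) := F_N(\blambda(t))$. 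Event $\cE_3$ gives, for $t \le \|\hblambda\|_{\bK_n}/2$,
\[
\phi''(t) = \langle \bv, \nabla^2 F_N(\blambda(t)) \bv\rangle \le -\beta \, \frac{s(\|\hblambda\|_{\bK_n})}{\|\hblambda\|_{\bK_n}}\,\|\bv\|_{\bK_n}^2 = -\beta\,\frac{s(\|\hblambda\|_{\bK_n})}{\|\hblambda\|_{\bK_n}}.
\]
Cauchy--Schwarz together with $\cE_2$ give $|\phi'(0)| \le \|\nabla F_N(\hblambda)\|_{\bK_n^{-1}} \le \eps_2 s(\|\hblambda\|_{\bK_n})$. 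Taylor expanding,
\[
F_N(\blambda) - F_N(\hblambda) \le \eps_2 s(\|\hblambda\|_{\bK_n})\, t - \tfrac{\beta}{2}\,\frac{s(\|\hblambda\|_{\bK_n})}{\|\hblambda\|_{\bK_n}}\, t^2,
\]
with $t = \|\blambda-\hblambda\|_{\bK_n}$. The right-hand side is strictly negative whenever $t > r_0$. Now suppose toward contradiction that $\hat\blambda_N \notin B$; then the segment from $\hblambda$ to $\hat\blambda_N$ crosses $\partial B$ at some $\blambda^*$ with $t^* = \|\hblambda\|_{\bK_n}/2 > r_0$. Concavity of $F_N$ and $F_N(\hat\blambda_N)\ge F_N(\hblambda)$ force $F_N(\blambda^*)\ge F_N(\hblambda)$, contradicting the strict inequality above. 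Hence $\hat\blambda_N\in B$, and repeating the display at $t = \|\hat\blambda_N - \hblambda\|_{\bK_n}$ with $F_N(\hat\blambda_N)\ge F_N(\hblambda)$ yields
\[
\|\hat\blambda_N - \hblambda\|_{\bK_n} \le r_0 = \frac{2\eps_2}{\beta}\,\|\hblambda\|_{\bK_n}.
\]

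\emph{Step 2 (Continuity in the infinite-width predictor).} Since $\|\hat\blambda_N - \hblambda\|_{\bK_n} \le \|\hblambda\|_{\bK_n}/2$, the continuity hypothesis \eqref{eq:pop-continuity} applies and gives
\[
\|\hf(\,\cdot\,;\hat\blambda_N) - \hf(\,\cdot\,;\hblambda)\|_{L_2} \le K\,\frac{\|\hat\blambda_N - \hblambda\|_{\bK_n}}{\|\hblambda\|_{\bK_n}}\,s(\|\hblambda\|_{\bK_n}) \le \frac{2K\eps_2}{\beta}\,s(\|\hblambda\|_{\bK_n}).
\]

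\emph{Step 3 (Finite vs.\ infinite width at $\hat\blambda_N$ and conclusion).} The same localization $\|\hat\blambda_N-\hblambda\|_{\bK_n}\le \|\hblambda\|_{\bK_n}/2$ implies $\|\hat\blambda_N\|_{\bK_n}/\|\hblambda\|_{\bK_n}\in[1/2,3/2]\subset[1/2,2]$, so $\cE_1$ applies at $\blambda = \hat\blambda_N$:
\[
\|\hf_N(\,\cdot\,;\hat\blambda_N) - \hf(\,\cdot\,;\hat\blambda_N)\|_{L_2} \le \eps_1\, s(\|\hblambda\|_{\bK_n}).
\]
Combining by the triangle inequality with Step 2 yields the claimed bound $(\eps_1 + 2K\eps_2/\beta)\,s(\|\hblambda\|_{\bK_n})$.

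The only nontrivial step is Step 1: the Hessian lower bound from $\cE_3$ holds only locally around $\hblambda$, so one cannot blindly apply a quadratic upper bound on $F_N$ globally. The barrier/concavity argument on $\partial B$ is what closes the gap, and it is precisely the hypothesis $\eps_2/\beta\le 1/4$ that guarantees $r_0 \le \|\hblambda\|_{\bK_n}/2$ so that the analysis stays inside the region where $\cE_3$ is valid.
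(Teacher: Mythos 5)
Your proof is correct and follows essentially the same route as the paper's: localize $\hat\blambda_N$ within $\bK_n$-distance $2\eps_2\|\hblambda\|_{\bK_n}/\beta$ of $\hblambda$ via a Taylor expansion controlled by $\cE_2$ and $\cE_3$, then apply the continuity hypothesis and $\cE_1$ with the triangle inequality. The only (cosmetic) difference is that you spell out the ray parametrization and the boundary-crossing/concavity argument that the paper compresses into "Taylor's theorem at $\blambda_\star$ plus strict concavity."
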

\noindent The continuity property for $\hf$ used in Lemma \ref{lem:gen-deterministic-analysis} is much easier to establish than the corresponding continuity property of $\hf_N$.
In the setting of Theorem \ref{thm:gen-conv-to-pop}, we will show that we can choose $K,\beta = \Theta_N(1)$ and $\eps_1,\eps_2 = o_N(1)$ such that the above events hold with high probability. 
Then, we will have $\eps_2 / \beta \leq 1/4$ eventually, and Lemma \ref{lem:gen-deterministic-analysis} can be applied.
\begin{proof}[Proof of Lemma \ref{lem:gen-deterministic-analysis}]
    Consider $\blambda$ with $\|\blambda - \hat \blambda\|_{\bK_n} = (2\eps_2/\beta)\| \hat \blambda \|_{\bK_n} \leq \| \hat \blambda \|_{\bK_n}/2$.
    By Taylor's theorem, there exists $\blambda_\star$ on the line segment between $\blambda$ and $\hat \blambda$, such that
    \begin{equation*}
    \begin{aligned}
    F_N ( \blambda ) 
        &= F_N  ( \hat \blambda ) + ( \blambda - \hat \blambda)^\top \nabla F_N (\hat \blambda) + \frac{1}{2} ( \blambda - \hat \blambda)^\top \nabla^2 F_N (\blambda_\star)  ( \blambda - \hat \blambda)
    \\
        &\leq 
        F_N ( \hat \blambda) + \| \blambda - \hat \blambda\|_{\bK_n} \| \nabla F_N (\hat \blambda) \|_{\bK_n^{-1}} - \frac{1}{2} \lambda_{\min} \big( - \bK_n^{-1/2}\nabla^2 F_N (\blambda_\star)\bK_n^{-1/2} \big) \|\blambda - \hat \blambda\|_{\bK_n}^2.
    \end{aligned}
    \end{equation*}
    When both $\cE_2$ and $\cE_3$ occur and using $\| \blambda_\star - \hat \blambda\|_{\bK_n} \leq \| \blambda - \hat \blambda\|_{\bK_n} \leq \| \hat \blambda \|_{\bK_n} / 2$, we get
    \begin{align*}
        F_N (\blambda)
            &\leq 
            F_N (\hat \blambda ) + \frac{2\eps_2\| \hat \blambda \|_{\bK_n}}{\beta} \eps_2 s(\| \hat \blambda\|_{\bK_n})
            -
            \frac12 \frac{\beta s(\| \hat \blambda\|_{\bK_n})}{\|\hat \blambda\|_{\bK_n}} \frac{4\eps_2^2\| \hat \blambda\|_{\bK_n}^2}{\beta^2}
            \leq 
            F_N (\hat \blambda)\, .
    \end{align*}
    Because $F_N(\blambda) \leq F_N(\hat \blambda)$ for all $\blambda$ with $\|\blambda - \hat \blambda\|_{\bK_n} = (2\eps_2/\beta)\| \hat \blambda \|_{\bK_n}$, and $F_N$ is strictly convex at $\hblambda$ by $\cE_3$we conclude
    \begin{align*}
        \|\hat \blambda_N - \hat \blambda\|_{\bK_n} 
            \leq 
            (2 \eps_2/\beta) \| \hat \blambda\|_{\bK_n} \, .
    \end{align*}
    In this case, by Eq.~\eqref{eq:pop-continuity},
    \begin{equation*}
        \| \hat f(\,\cdot\,;\hat \blambda_N ) - \hat f(\,\cdot\,;\hat \blambda) \|_{L_2}
            \leq 
            K \frac{\|\hat \blambda_N - \hat \blambda\|_{\bK_n}}{\|\hat \blambda\|_{\bK_n}} s(\|\hat \blambda\|_{\bK_n})
            \leq 
            \frac{2 K\eps_2}{\beta}s(\|\hat \blambda\|_{\bK_n}).
    \end{equation*}
    If event $\cE_1$ occurs, then
    \begin{equation*}
        \|\hat f_N(\, \cdot\, ;\hat \blambda_N) - \hat f(\,\cdot \, ; \hat \blambda_N ) \|_{L_2}
            \leq 
            \eps_1 s(\|\hat \blambda\|_{\bK_n}).
    \end{equation*}
    Combining the previous displays with the triangle inequality completes the proof.
\end{proof}

We next state three lemmas implying that events $\cE_1$, $\cE_2$, $\cE_3$ hold with high probability, as well as the continuity
of $\hf(\,\cdot\,;\blambda)$ in the last lemma. Proofs of these lemmas are deferred to the appendices.
We begin with the continuity property of the infinite width predictor.
\begin{lemma}\label{lem:gen-dual-continuity}
    If either $(i)$ assumptions FEAT1 and PEN hold with $Q_1 \wedge Q_2 \geq 2$, or $(ii)$ assumptions FEAT1, FEAT3' and PEN hold with $Q_1 \wedge Q_2  < 2$,
    then
    there exists $C'$ depending only on the constants $c,C,\rz$ in FEAT3' and PEN,
    but not on $\tau,\eta$, such that for all $\|\blambda - \hat \blambda\|_{\bK_n} \leq \| \hat \blambda \|_{\bK_n}/2$,
    \begin{equation}\label{eq:pop-continuity_2}
            \frac{\|\hf(\,\cdot\,;\blambda) - \hf(\,\cdot\,;\hblambda)\|_{L_2}}{s(\|\hblambda\|_{\bK_n})}
            \leq 
             C' (\tau^{Q}\vee \tau^2\eta^{(Q'-2)})   \frac{\|\blambda - \hblambda\|_{\bK_n}}{\|\hblambda\|_{\bK_n}}\, ,
    \end{equation}
    where $Q = Q_1 \vee Q_2$, $Q'=Q_1\wedge Q_2$.
\end{lemma}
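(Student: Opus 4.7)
The plan is to control the difference $\hat f(\cdot;\blambda)-\hat f(\cdot;\hblambda)$ via the fundamental theorem of calculus along the segment from $\hblambda$ to $\blambda$, reducing matters to moment bounds on $s'(\langle\bphi_n,\cdot\rangle)\langle\bphi_n,\cdot\rangle$ that follow from the polynomial growth in PEN combined with the subgaussian estimate FEAT1 and (when $Q'<2$) the negative-moment bound FEAT3'. Write $\bDelta=\blambda-\hblambda$ and differentiate under the integral (justified by PEN, plus FEAT3' in case (ii)) to obtain
\[
    \partial_t\hat f(\bx;\hblambda+t\bDelta)=\int_\cV\ophi(\bx;\bw)\,\E_\phi\!\big[s'(\langle\bphi_n(\bw),\hblambda+t\bDelta\rangle)\langle\bphi_n(\bw),\bDelta\rangle\big]\mu(\de\bw).
\]
By Minkowski's integral inequality in $L^2(\P)$ followed by Cauchy--Schwarz in $\mu$,
\[
    \|\partial_t\hat f(\cdot;\hblambda+t\bDelta)\|_{L^2(\P)}\le\Big(\int\E_\bx[\ophi(\bx;\bw)^2]\mu(\de\bw)\Big)^{1/2}\Big(\int g_t(\bw)^2\mu(\de\bw)\Big)^{1/2},
\]
with $g_t(\bw):=\E_\phi[s'(\langle\bphi_n,\hblambda+t\bDelta\rangle)\langle\bphi_n,\bDelta\rangle]$. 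FEAT1 and Fubini give the first factor $\le C\tau$.

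For the second factor I would pass to whitened variables $\bpsi_n=\bK_n^{-1/2}\bphi_n$, $\bnu_t=\bK_n^{1/2}(\hblambda+t\bDelta)$, $\bdelta=\bK_n^{1/2}\bDelta$, which satisfy $\|\bnu_t\|_2\in[\|\bnu_0\|_2/2,\,3\|\bnu_0\|_2/2]$. Using Jensen on $\E_\phi$, it suffices to bound
\[
    \E_{\bw,\phi}\!\big[s'(\langle\bpsi_n,\bnu_t\rangle)^2\langle\bpsi_n,\bdelta\rangle^2\big].
\]
Set $Z=\langle\bpsi_n,\bnu_t\rangle/\|\bnu_t\|_2$ and $W=\langle\bpsi_n,\bdelta\rangle/\|\bdelta\|_2$, both $\tau^2$-subgaussian by FEAT1. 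Applying PEN's upper bound at $x_2=\|\bnu_t\|_2$ gives $s'(Z\|\bnu_t\|_2)\le C(s(\|\bnu_t\|_2)/\|\bnu_t\|_2)(|Z|^{Q-2}\vee|Z|^{Q'-2})$, and PEN's polynomial comparison between $\|\bnu_t\|_2$ and $\|\bnu_0\|_2$ replaces the prefactor by $C s(\|\hblambda\|_{\bK_n})/\|\hblambda\|_{\bK_n}$. Thus the key estimate is
\[
    I:=\E\!\big[(|Z|^{Q-2}\vee|Z|^{Q'-2})^2\,W^2\big]\le\E[|Z|^{2(Q-2)}W^2]+\E[|Z|^{2(Q'-2)}W^2].
\]

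For the $|Z|^{Q-2}$ piece, since $Q\ge Q_1\vee Q_2\ge 2$ in case (i) (or when the max exponent is $\ge 2$ in case (ii)), Cauchy--Schwarz together with subgaussian moments of $Z$ and $W$ yields $\E[|Z|^{2(Q-2)}W^2]\le C\tau^{2Q-2}$, which combined with the outer $\tau$ contributes the $\tau^Q$ term. For the $|Z|^{Q'-2}$ piece when $Q'<2$ (case (ii)), the exponent on $|Z|$ is negative; I would bound this term using FEAT3': by H\"older with exponent $p>1$ chosen so that $2p(Q'-2)\in(-1,0)$ lies in the admissible range of FEAT3' (when $Q'>3/2$ one can simply take $p=2$; for smaller $Q'$ one takes $p$ close to $1$ and absorbs the corresponding large moment of $W$ using subgaussianity, a constant depending on $\delta$-type slack), we obtain $\E[|Z|^{2(Q'-2)}W^2]\le C\eta^{2(Q'-2)}\tau^2$. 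Combining,
\[
    I^{1/2}\le C(\tau^{Q-1}\vee\tau\eta^{Q'-2})\|\bdelta\|_2,
\]
which, multiplied by the $C\tau$ from the first factor and combined with $s(\|\hblambda\|_{\bK_n})/\|\hblambda\|_{\bK_n}$, yields the stated bound after integrating $t\in[0,1]$.

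The main technical obstacle is the case $Q'<2$, where the polynomial lower bound on $s'$ forces us to integrate a negative power of $|Z|$ against $W^2$; choosing the H\"older exponents so that the negative power of $|Z|$ stays within the range covered by FEAT3' while the large moment of $W$ remains finite under subgaussianity is the delicate step, and explains both the $\tau^2\eta^{Q'-2}$ factor and why (ii) requires FEAT3'. The rest of the argument is straightforward substitution and use of the polynomial comparison inequalities in PEN.
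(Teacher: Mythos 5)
Your overall strategy (fundamental theorem of calculus along the segment, differentiation under the integral, then moment bounds combining PEN, FEAT1 and FEAT3') is the same as the paper's, and your treatment of case $(i)$ and of the $|Z|^{Q-2}$ piece is fine. However, there is a genuine gap in case $(ii)$ for $Q'\in(1,3/2]$. By applying Cauchy--Schwarz in $\mu$ (and then Jensen over $\E_\phi$), you commit yourself to bounding the \emph{second} moment $\E\big[s'(\langle\bpsi_n,\bnu_t\rangle)^2\langle\bpsi_n,\bdelta\rangle^2\big]$, whose dangerous part is $\E[|Z|^{2(Q'-2)}W^2]$ with exponent $2(Q'-2)\le -1$ when $Q'\le 3/2$. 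FEAT3' only controls moments $\E[|\langle\bpsi_n,\bv\rangle|^r]$ for $r\in(-1,0)$, and for $r\le -1$ such moments are genuinely infinite for, e.g., Gaussian $\bpsi_n$ with $W$ nearly independent of $Z$. Your proposed fix does not repair this: H\"older with any $p>1$ sends the exponent on $|Z|$ to $2p(Q'-2)$, which is \emph{more} negative than $2(Q'-2)$, so the constraint $2p(Q'-2)>-1$ together with $p>1$ is satisfiable only when $Q'>3/2$; taking $p$ close to $1$ leaves you essentially at $2(Q'-2)\le-1$, outside the admissible range, no matter how you handle the moment of $W$.

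The paper avoids this by never squaring the $s'$ factor: it bounds the integrand's first absolute moment pointwise in $\bx$, $\E_{\bw,\phi}\big[|\ophi(\bx;\bw)\,s'(\langle\bphi_n,\blambda_t\rangle)\,\langle\bphi_n,\blambda-\hblambda\rangle|\big]$, via a three-way H\"older with exponents $r_1=r_3=\tfrac{2(3-Q')}{Q'-1}$ and $r_2=\tfrac{3-Q'}{2(2-Q')}$, so that the power landing on $s'$ produces a negative moment of order $r_2(Q'-2)=-\tfrac{3-Q'}{2}\in(-1,0)$, which FEAT3' does control; the resulting bound is uniform in $\bx$ and $t$, and the $L^2(\P)$ estimate follows trivially. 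To repair your argument you would need to replace the Cauchy--Schwarz step in $\mu$ by an $L^1(\mu)$ bound of this type (e.g., Minkowski's integral inequality giving $\int\|\ophi(\cdot;\bw)\|_{L^2(\P)}|g_t(\bw)|\,\mu(\de\bw)$, followed by an asymmetric H\"older), which essentially reproduces the paper's computation.
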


Next we state a lemma to check condition $\cE_1$.
\begin{lemma}\label{lem:gen-uniform-concentration-of-predictor}
    Assume FEAT1, FEAT2 and PEN hold. Then there exist $C',c' > 0$
    depending only on the constants $c,C,\rz$ in those assumptions, but not on  $\tau,\eta$, such that
    for $N \geq n \ge c' d$, we have with probability at least $1 - C' N^{-c' n}$,
    \begin{equation}\label{eq:gen-uniform-mean-concentration}
    \begin{aligned}
        \bar S_1
        &:=
        \sup_{\|\bx\|_2 \leq \rz\sqrt{d}}\;
        \sup_{\|\bv\|_2 \leq 2 \| \hblambda \|_{\bK_n}} 
            \Big| 
                \frac1N \sum_{j=1}^N \bar \phi(\bx;\bw_j) s( \< \bpsi_{n,j} , \bv \> )
                -
                \E_{\bw,\phi}[\bar \phi(\bx;\bw) s( \< \bpsi_n(\bw) , \bv \> )]
            \Big|
        \\
        &\leq 
        C' \tau^Q s(\|\hblambda\|_{\bK_n}) \Big( \sqrt{\frac{n\log N}{N}} \vee \frac{(n\log N)^{(Q/2) \vee 1}}{N}\Big)\, ,
    \end{aligned}
    \end{equation}
    where $\bpsi_n(\bw) = \bK_n^{-1/2} \bphi_n(\bw)$ and $Q = Q_1 \vee Q_2$.
\end{lemma}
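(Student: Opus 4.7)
The aim is to bound a supremum of an empirical process over $T := \{\bx : \|\bx\|_2 \leq \rz\sqrt{d}\} \times \{\bv : \|\bv\|_2 \leq 2\|\hblambda\|_{\bK_n}\}$. My plan is the standard two-step: a pointwise Bernstein-type bound tailored to sub-Weibull summands, followed by a union bound over a polynomially fine net, with off-net fluctuations absorbed by continuity.

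Set $Z_j(\bx,\bv) := \ophi(\bx;\bw_j)\, s(\langle \bpsi_{n,j},\bv\rangle)$. For fixed $(\bx,\bv)$, FEAT1 gives that $\ophi(\bx;\bw)$ is $\tau^2$-sub-Gaussian and $\langle \bpsi_n(\bw),\bv\rangle$ is $\tau^2\|\bv\|_2^2$-sub-Gaussian. The polynomial upper bound in PEN yields $\|s(\langle\bpsi_n,\bv\rangle)\|_{L^p} \lesssim s(\|\hblambda\|_{\bK_n})(\tau p^{1/2})^{Q-1}$ for $\|\bv\|_2 \leq 2\|\hblambda\|_{\bK_n}$, and multiplying by $\ophi$ produces $\|Z_j\|_{L^p}\leq K p^{Q/2}$ with $K := C\tau^Q s(\|\hblambda\|_{\bK_n})$. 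Thus $Z_j$ is sub-Weibull of order $2/Q$ and a Bernstein inequality for such variables gives, for any $t>0$,
\[
\P\!\Big(\Big|N^{-1}\!\sum_j Z_j - \E Z\Big| \geq t\Big) \leq 2\exp\!\Big(-c\min\!\big\{Nt^2/K^2,\; (Nt/K)^{(2/Q)\wedge 1}\big\}\Big),
\]
where the $\wedge 1$ floor (the sub-exponential barrier from the product of two sub-Gaussians) produces the $(Q/2)\vee 1$ exponent in the target.

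For the uniform bound I take an $\epsilon$-net of $T$ in the Euclidean metric with $\epsilon = N^{-A}$ for $A$ a large absolute constant. Using $n\geq c'd$, the net has cardinality at most $(Cn/\epsilon)^{n+d} \leq e^{Cn\log N}$. The off-net fluctuation is controlled by bounding the Lipschitz constant of $(\bx,\bv)\mapsto Z_j(\bx,\bv)$: for $\bx$ via FEAT2 (constant proportional to $L(\bw_j)\,|s(\langle\bpsi_{n,j},\bv\rangle)|$), and for $\bv$ via the polynomial bound on $s'$ in PEN together with $\|\bpsi_{n,j}\|_2$. FEAT1 and FEAT2 together imply that, with probability at least $1-C'N^{-c'n}$, the worst Lipschitz constant over $j\leq N$ is polynomial in $N$; choosing $A$ large makes the off-net correction dominated by $t$. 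Applying the pointwise bound at each point of the net with $\log(1/\delta)=Cn\log N$ and union-bounding gives total failure probability at most $e^{Cn\log N}\cdot 2 e^{-C'n\log N}\leq C'N^{-c'n}$; inverting the Bernstein bound yields $t\lesssim K\big(\sqrt{n\log N/N}\vee (n\log N)^{(Q/2)\vee 1}/N\big)$, matching the claim.

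The chief technical difficulty is the $\bv$-direction continuity when $q' := q_1\wedge q_2 < 2$, since $s'$ may then blow up at $0$ and a uniform Lipschitz constant is unavailable. The remedy is to replace the pointwise Lipschitz bound by a H\"older-type estimate $|s(a)-s(b)|\lesssim (|a|\vee|b|)^{Q-q'}\,|a-b|^{q'-1}$ implied by integrating the polynomial bound on $s'$ in PEN, at the cost of extra polynomial factors in $\tau$; combined with the polynomially small net scale, this handles all regimes uniformly. The rest is careful book-keeping of the $\tau$- and $K$-factors through the sub-Weibull Bernstein inequality.
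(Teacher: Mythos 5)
Your proposal is correct and follows the same overall strategy as the paper's proof: a pointwise heavy-tailed Bernstein bound for the summands $\ophi(\bx;\bw_j)\,s(\langle\bpsi_{n,j},\bv\rangle)$ (whose moments grow like $K p^{Q/2}$, exactly the bound the paper derives), a union bound over a polynomially fine net of the $(\bx,\bv)$-domain, and a continuity estimate to absorb the off-net error. The two differences are in packaging rather than substance. First, for the pointwise tail you invoke an off-the-shelf sub-Weibull ($\psi_{2/Q}$) Bernstein inequality, whereas the paper symmetrizes with Rademacher signs via an auxiliary convex function $G$, truncates the summands at level $\tau s(\tau R)M^{Q}$, applies Bernstein to the truncated variables, and converts back through Markov's inequality on $G$; these yield the same two-regime tail $\exp(-c\min\{Nt^2/K^2,(Nt/K)^{2/Q\wedge 1}\})$, and your route is the more economical one provided you cite or reprove the sub-Weibull inequality. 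Second, for the $\bv$-continuity when $s'$ blows up at the origin, you use a H\"older-type modulus obtained by integrating the PEN upper bound on $s'$ (the exponent coming out of that integration is really $Q'-1$ with $Q'=Q_1\wedge Q_2$, though the weaker $q'-1$ also suffices at a polynomially small net scale), while the paper splits into an additive small-argument remainder $Cs(\tau R\Delta)\zeta_2^{q'-1}$ plus a Lipschitz bound away from zero; these are equivalent in effect. Neither difference creates a gap.
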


The next lemma allows us to check event $\cE_2$.
\begin{lemma}\label{lem:point-wise-concentration}
Assume FEAT1 and PEN hold. There exist $C',c' > 0$ depending only on the constants $c,C,\rz$ in those assumptions, but not on $\tau,\eta$, such that
    for $N \geq n \ge c' d$, we have with probability at least $1 - C' N^{-c' n}$,
    \begin{equation}\label{eq:gen-concentration-dual-gradient}
    \begin{aligned}
        \bar S_2
        &:= \Big\| \frac1N\sum_{j=1}^N \bpsi_{n,j} s(\<\bphi_{n,j} ,\hblambda\>) - \E_{\bw,\phi}[\bpsi_n(\bw) s(\< \bphi_n,\hblambda\>)]\Big\|_2 
        \\
        &\leq 
        C' \tau^Q  s(\|\hblambda\|_{\bK_n}) \Big( \sqrt{\frac{n\log N}{N}} \vee \frac{(n\log N)^{(Q/2)\vee 1}}{N}\Big),
    \end{aligned}
    \end{equation}
    where $\bpsi_n(\bw) = \bK_n^{-1/2} \bphi_n(\bw)$ and $Q = Q_1 \vee Q_2$.
\end{lemma}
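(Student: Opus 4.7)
The plan is to reduce the vector concentration bound to a scalar one via a discretization of the unit sphere, and then apply a Bernstein-type tail inequality for sub-Weibull random variables.

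First, I would write $\bar S_2 = \sup_{\bu \in \S^{n-1}} \bu^\top\bigl(N^{-1}\sum_j \bpsi_{n,j}\, s(\<\bphi_{n,j},\hblambda\>) - \E[\cdots]\bigr)$ and pick a $(1/2)$-net $\cN \subset \S^{n-1}$ with $|\cN| \leq 5^n$. A standard net argument then gives $\bar S_2 \leq 2 \sup_{\bu \in \cN} |T_N(\bu)|$, where $T_N(\bu) := N^{-1}\sum_j Z_j(\bu)$ and $Z_j(\bu) := \<\bu, \bpsi_{n,j}\>\, s(\<\bphi_{n,j},\hblambda\>) - \E[\cdots]$ are i.i.d.\ centered scalar random variables.

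Next I would derive moment bounds on $Z_j(\bu)$. By FEAT1, both $\<\bu, \bpsi_{n,j}\>$ (for $\|\bu\|_2 = 1$) and the normalized inner product $X := \<\bphi_{n,j},\hblambda\>/\|\hblambda\|_{\bK_n} = \<\bK_n^{1/2}\hblambda/\|\hblambda\|_{\bK_n},\,\bpsi_{n,j}\>$ are $\tau^2$-subgaussian, so $\E[|\<\bu,\bpsi_{n,j}\>|^{2k}]^{1/(2k)} \le C\tau\sqrt{k}$ and $\E[|X|^{2k(Q-1)}]^{1/(2k)} \le C(\tau\sqrt{k})^{Q-1}$. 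Using the polynomial growth in PEN, $|s(\<\bphi_{n,j},\hblambda\>)| \le C s(\|\hblambda\|_{\bK_n})(|X|^{Q-1}\vee |X|^{q-1})$ with $Q = Q_1 \vee Q_2$, $q = Q_1 \wedge Q_2$, and noting that for $\tau \ge 1$, $k \ge 1$ the $Q$-term dominates, Cauchy--Schwarz yields
\[
\E\bigl[|Z_j(\bu)|^k\bigr]^{1/k} \;\le\; C\,\tau^Q\, s(\|\hblambda\|_{\bK_n})\, k^{Q/2}.
\]
This identifies $Z_j(\bu)$ as sub-Weibull of index $\alpha = 2/Q$ with scale $V := C\tau^Q s(\|\hblambda\|_{\bK_n})$.

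I would then invoke a Bernstein-type tail bound for i.i.d.\ centered sub-Weibull sums (as in Kuchibhotla--Chakrabortty or the natural extension of Vershynin's sub-exponential Bernstein inequality):
\[
\P\bigl(|T_N(\bu)| \ge t\bigr) \;\le\; 2\exp\!\Bigl(-c\min\bigl\{Nt^2/V^2,\;(Nt/V)^{2/Q}\bigr\}\Bigr).
\]
Setting $t = C' V\bigl(\sqrt{n\log N/N} \vee (n\log N)^{(Q/2)\vee 1}/N\bigr)$ makes both terms in the minimum at least $c' n\log N$. A union bound over $\cN$ (using $5^n \le N^{c'n/2}$ for $N$ polynomially large in $n$) gives $\sup_{\bu \in \cN} |T_N(\bu)| \le t$ with probability $\ge 1 - C' N^{-c'n}$, and the net reduction yields $\bar S_2 \le 2t$, matching the claim.

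The main obstacle is marshaling the correct sub-Weibull concentration inequality so that both deviation regimes come out cleanly: the $\sqrt{n\log N/N}$ term comes from the subgaussian (variance-dominated) regime of moderate deviations, while the $(n\log N)^{(Q/2)\vee 1}/N$ term captures the heavier sub-Weibull tails at large deviations. The $(Q/2)\vee 1$ exponent, rather than simply $Q/2$, reflects that in the sub-exponential case $Q \le 2$ the large-deviation exponent is effectively linear in $Nt/V$ (rather than $(Nt/V)^{2/Q} \le Nt/V$), giving the better $(n\log N)/N$ dependence. A smaller technicality is verifying that the lower growth exponent $q$ in PEN does not enter the moment estimate, which holds since $\tau \ge 1$ and $k \ge 1$ ensure $(\tau\sqrt{k})^{Q-1} \ge (\tau\sqrt{k})^{q-1}$.
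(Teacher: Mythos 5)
Your proposal is correct and follows essentially the same strategy as the paper: reduce the $\ell_2$-norm to scalar projections via a net, bound the moments of $\<\bu,\bpsi_{n,j}\>\,s(\<\bphi_{n,j},\hblambda\>)$ using FEAT1 together with the polynomial-growth bound in PEN (yielding $k$-th moments of order $(\tau^Q s(\|\hblambda\|_{\bK_n}))^k k^{Qk/2}$), and then apply a two-regime (Gaussian/Weibull) tail bound plus a union bound over the net.

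The implementation differs from the paper's in three places, each a legitimate simplification. First, you skip the symmetrization/decoupling step with Rademacher signs and the auxiliary convex function $G$; since the summands are already i.i.d.\ and centered, that step is not needed here (the paper reuses it from the harder Lemma on uniform concentration of the predictor, where a supremum over $\bx$ is also involved). Second, you exploit the linearity of $\bu\mapsto\<\bu,\bV\>$ to use a coarse $1/2$-net with the standard factor-$2$ comparison, whereas the paper runs a polynomially fine net together with a Lipschitz estimate for $T(\bb)$ on the event $\max_j\|\bpsi_{n,j}\|_2\le C\tau\sqrt{n}\Delta$; for this particular lemma your shortcut is valid and cleaner. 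Third, you invoke a generalized Bernstein inequality for sub-Weibull variables of index $2/Q$ as a black box, while the paper derives the same tail bound $\exp\{-c(Nt^2\wedge (Nt)^{(2/Q)\wedge 1})\}$ by hand via truncation at level $M=(Nt)^{1/Q}$ and the moment-based Bernstein inequality of Boucheron--Lugosi--Massart; either route gives the two deviation regimes that produce the $\sqrt{n\log N/N}$ and $(n\log N)^{(Q/2)\vee 1}/N$ terms. Your observation that the lower growth exponent in PEN is absorbed because $\tau\sqrt{k}\ge 1$ matches the paper's simplification of the bound $|s(x_1)|\le C|s(x_2)|(1+|x_1/x_2|)^{Q-1}$.
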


We finally state a lemma to check event $\cE_3$.
\begin{lemma}\label{lem:lower-hessian-no-small-ball} 
Assume that FEAT1, FEAT3 and PEN hold, and further assume that
$\tau,\eta^{-1}\le n^C$ for some absolute constant $C$. Define $\delta_0(\eta) :=\eta^{3\vee q_1\vee q_2}$.
There exist $C',c'> 0$ depending only on the constants in $c,C,\rz$ in the assumptions, but not on
$\tau,\eta$, such that  for $N \geq C'(\tau^4/\delta_0(\eta)^2) n \log(N) $, we have with probability at
least $1 - C' N^{-c'n}$,
    \begin{equation}\label{eq:hessian-lower-bound}
    \begin{aligned}
      \sup_{1/2 \leq \| \blambda \|_{\bK_n} / \| \hat \blambda \|_{\bK_n} \leq 2}  \bK_n^{-1/2} \nabla^2 F_N (  \blambda ) \bK_n^{-1/2} \preceq - c'\delta_0(\eta)
      \frac{s (   \| \hat \blambda \|_{\bK_n})}{ \| \hat \blambda \|_{\bK_n} } \id_n \, .
    \end{aligned}
    \end{equation}
  \end{lemma}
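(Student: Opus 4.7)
The plan is to reduce the spectral bound to a uniform quadratic-form bound: writing $\bA_N(\blambda) := -\bK_n^{-1/2}\nabla^2 F_N(\blambda)\bK_n^{-1/2} = N^{-1}\sum_{j\le N}\bpsi_{n,j}\bpsi_{n,j}^\top s'(\langle\bphi_{n,j},\blambda\rangle)$, I must show that for every unit $\bu\in\R^n$ and every $\blambda$ with $\|\blambda\|_{\bK_n}/\|\hblambda\|_{\bK_n}\in[1/2,2]$,
\[
\bu^\top \bA_N(\blambda)\bu \;=\; \frac{1}{N}\sum_{j=1}^N \langle\bu,\bpsi_{n,j}\rangle^2\, s'(\langle\bphi_{n,j},\blambda\rangle) \;\ge\; c'\delta_0(\eta)\,\frac{s(\|\hblambda\|_{\bK_n})}{\|\hblambda\|_{\bK_n}}.
\]
The key trick is to replace $s'$ by the lower-bounding truncation $\tilde s'(t) := s'(t)\mathbbm{1}\{|t|\le C_0\|\hblambda\|_{\bK_n}\}$ for a universal constant $C_0$. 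Since $s'=(\rho^*)''\ge 0$, the truncated summands $\tilde X_j(\bu,\blambda):=\langle\bu,\bpsi_{n,j}\rangle^2\tilde s'(\langle\bphi_{n,j},\blambda\rangle)$ satisfy $0\le \tilde X_j\le X_j$, and by the upper bound in \textbf{PEN} the factor $\tilde s'(\cdot)$ is deterministically bounded by $C\, s(\|\hblambda\|_{\bK_n})/\|\hblambda\|_{\bK_n}$. Thus concentration of $N^{-1}\sum_j\tilde X_j$ depends only on the second moment of $\langle\bu,\bpsi_n\rangle^2$ (hence on $\tau^4$), not on the worst-case polynomial exponent $Q$.

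For the population lower bound, I write $\langle\bphi_n,\blambda\rangle = \|\blambda\|_{\bK_n}\langle\bpsi_n,\bv\rangle$ with $\bv = \bK_n^{1/2}\blambda/\|\blambda\|_{\bK_n}$. By \textbf{FEAT3} combined with Chebyshev on $\E[\langle\bpsi_n,\bv\rangle^2]=1$, the event $\mathcal{A} := \{|\langle\bu,\bpsi_n\rangle|\ge\eta\}\cap\{|\langle\bv,\bpsi_n\rangle|\in[\eta,C_0/2]\}$ has probability at least a universal $c_0>0$ for $C_0$ large enough. On $\mathcal{A}$, $|\langle\bphi_n,\blambda\rangle|\le C_0\|\hblambda\|_{\bK_n}$ (so $\tilde s'=s'$), and applying the lower bound in \textbf{PEN} with $x_2=\|\hblambda\|_{\bK_n}$ and $|x_1/x_2|\in[\eta/2,C_0]$ gives $s'(\langle\bphi_n,\blambda\rangle)\ge c\,\eta^{(q_1\vee q_2-2)_+}\cdot s(\|\hblambda\|_{\bK_n})/\|\hblambda\|_{\bK_n}$. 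Combining yields $\E[\tilde X_j]\ge c_1\,\eta^{(q_1\vee q_2)\vee 2}\,s(\|\hblambda\|_{\bK_n})/\|\hblambda\|_{\bK_n}\ge 2c_1\delta_0(\eta)\,s(\|\hblambda\|_{\bK_n})/\|\hblambda\|_{\bK_n}$, where the last step uses $\eta\le 1$ and $(q_1\vee q_2)\vee 2\le 3\vee q_1\vee q_2$.

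For concentration, fix $(\bu,\blambda)$ first. On the high-probability event $\{\max_{j\le N}\langle\bu,\bpsi_{n,j}\rangle^2\le C\tau^2\log N\}$ (by \textbf{FEAT1} and a maximal inequality), the bounded variables $\tilde X_j$ have variance $\le C\tau^4 (s(\|\hblambda\|_{\bK_n})/\|\hblambda\|_{\bK_n})^2$ and are deterministically bounded by $C\tau^2\log N\cdot s(\|\hblambda\|_{\bK_n})/\|\hblambda\|_{\bK_n}$. Bernstein's inequality therefore gives $|N^{-1}\sum_j\tilde X_j - \E[\tilde X_j]|\le c_1\delta_0(\eta)\cdot s(\|\hblambda\|_{\bK_n})/\|\hblambda\|_{\bK_n}$ with probability at least $1-\exp(-c'n\log N)$ whenever $N\ge C(\tau^4/\delta_0(\eta)^2) n\log N$. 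Uniformity over $(\bu,\blambda)$ is obtained by a standard $\rho$-net argument: one covers $\S^{n-1}\times\{\blambda:\|\blambda\|_{\bK_n}/\|\hblambda\|_{\bK_n}\in[1/2,2]\}$ by $(C/\rho)^{2n}$ points, applies a union bound (absorbed by the polynomial factor $n\log N$ in the exponent), and extends from the net via a Lipschitz bound on $(\bu,\blambda)\mapsto N^{-1}\sum_j\tilde X_j$, which follows from the polynomial growth in \textbf{PEN}, the uniform bound $\max_j\|\bpsi_{n,j}\|_2\le C\tau\sqrt{n\log N}$, and the assumption $\tau,\eta^{-1}\le n^C$ (so polynomially small $\rho$ suffices). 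Since $\bu^\top\bA_N(\blambda)\bu\ge N^{-1}\sum_j\tilde X_j$, combining the two steps yields \eqref{eq:hessian-lower-bound}.

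The main obstacle is the non-smoothness of $\tilde s'$ at the truncation boundary, which makes the Lipschitz extension step delicate. I plan to handle this either by smoothing the indicator (replacing $\mathbbm{1}$ by a $1$-Lipschitz cutoff and checking that the population bound degrades by at most a constant), or by running the net argument with a slightly larger threshold $2C_0\|\hblambda\|_{\bK_n}$ while the population lower bound uses $C_0$, so that $(\bu,\blambda)$-perturbations within the net's resolution never cross the active boundary. A minor subtlety is that the \textbf{PEN} lower bound on $s'$ behaves differently according to whether $q_1\vee q_2\gtrless 2$; the slack absorbed into $\delta_0(\eta) = \eta^{3\vee q_1\vee q_2}$ (replacing the tighter $\eta^{(q_1\vee q_2)\vee 2}$) absorbs all such case distinctions uniformly.
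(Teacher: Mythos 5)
Your overall strategy matches the paper's: lower-bound the quadratic form $\bu^\top\bA_N(\blambda)\bu$ by replacing $s'$ with a tractable minorant, prove a population lower bound of order $\delta_0(\eta)\,s(\|\hblambda\|_{\bK_n})/\|\hblambda\|_{\bK_n}$ via FEAT3 together with a second-moment/Markov truncation of $|\langle\bv,\bpsi_n\rangle|$ from above, and then concentrate uniformly over a net. The population step is correct. The gap is in your choice of minorant. You truncate only where $|\langle\bphi_{n,j},\blambda\rangle|$ is \emph{large}, and then claim that the PEN upper bound makes $\tilde s'$ deterministically bounded by $C\,s(\|\hblambda\|_{\bK_n})/\|\hblambda\|_{\bK_n}$. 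That claim is false when $Q_1\wedge Q_2<2$: PEN only gives $s'(x_1)\le C\,|s(x_2)/x_2|\,(|x_1/x_2|^{Q_1-2}\vee|x_1/x_2|^{Q_2-2})$, and the exponent $Q'-2<0$ means $s'$ blows up as its argument tends to $0$ (e.g.\ $\rho(x)=|x|^p/p$ with $p>2$ gives $s'(x)\asymp|x|^{(2-p)/(p-1)}\to\infty$). The lemma is stated without FEAT3$'$ and without any restriction to $Q_1\wedge Q_2\ge 2$, and it is invoked in that regime by the main theorem, so your Bernstein step (which needs both the variance bound $\tau^4(s/\|\hblambda\|_{\bK_n})^2$ and the a.s.\ bound) breaks there; truncating additionally from below at $|t|\ge c\eta\|\hblambda\|_{\bK_n}$ would restore boundedness but at level $\eta^{Q'-2}\,s/\|\hblambda\|_{\bK_n}$, inflating the sample-size requirement beyond the stated $N\gtrsim(\tau^4/\delta_0^2)\,n\log N$.

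The paper's device repairs exactly this: it replaces $s'(\|\hblambda\|_{\bK_n}t)$ by $c\,(s(\|\hblambda\|_{\bK_n})/\|\hblambda\|_{\bK_n})\,s_0(t)$ with $s_0(t)=1\wedge t\wedge t^{q_1-2}\wedge t^{q_2-2}$, i.e.\ it \emph{caps the minorant at a constant} rather than merely restricting its support. This $s_0$ is a valid lower bound by the PEN lower inequality alone (no appeal to the upper inequality is needed), it is bounded by $1$ and globally Lipschitz, so the weighted vectors $s_0(\langle\bv,\bpsi_{n,j}\rangle)^{1/2}\bpsi_{n,j}$ remain $O(\tau^2)$-sub-Gaussian and the standard covariance concentration bound applies with the advertised $N\gtrsim(\tau^4/\delta_0^2)\,n\log N$; the Lipschitz continuity also disposes of the net-extension difficulty you flag at the end (no smoothing of indicators or two-threshold trick is needed). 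I recommend you adopt this capped minorant; with it, the rest of your argument goes through essentially as in the paper.
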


  Using the above lemmas, we are now in position to prove our main result, Theorem \ref{thm:gen-conv-to-pop}.
  \begin{proof}[Proof of Theorem \ref{thm:gen-conv-to-pop}]
    Recall we define $q=q_1\vee q_2$, $Q=Q_{1}\vee Q_2$, $Q'=Q_1\wedge Q_2$. 
    Assume
    $N \geq N_1\vee N_2 $ as in the statement. 
    By Lemmas \ref{lem:gen-uniform-concentration-of-predictor}, \ref{lem:point-wise-concentration},
    \ref{lem:lower-hessian-no-small-ball} events $\cE_1$, $\cE_2$, $\cE_3$ hold with probability 
    \begin{align*}
      \P(\cE_1\cap \cE_2\cap\cE_2)\ge  1 - C' N^{-c' n}\, ,
    \end{align*}
    with constants
    \begin{align*}
      \eps_1=\eps_2&= C'  \tau^{Q} \Big( \sqrt{\frac{n\log N}{N}} \vee \frac{(n\log N)^{(Q/2) \vee 1}}{N}\Big)\, ,\\
      \beta &=c'\eta^{3\vee q}\, .
    \end{align*}
    Further, by Lemma \ref{lem:gen-dual-continuity}, the continuity property of Eq.~\eqref{eq:pop-continuity} holds with
    \begin{align*}
      K=     C' (\tau^{Q}\vee \tau^2\eta^{(Q'-2)})\, .
      \end{align*}

      We can now apply Lemma \ref{lem:gen-deterministic-analysis}. Since $\eps_1=\eps_2$ and, without loss of generality,
      $K/\beta\ge 1$, we obtain that with, probability at least $1-C'N^{-c'n}$,
      \begin{align}
     & \| \hf_N (\,\cdot\,;\hat \blambda_N) - \hf(\,\cdot\,;\hat \blambda)\|_{L_2}\le  \Delta\cdot s ( \| \hblambda \|_{\bK_n } )\, ,\label{eq:AlmostFinal1}\\
     &  \Delta = \frac{3K\eps_1}{\beta} = C'\tau^{Q+2} \frac{(\tau^{Q-2} \vee \eta^{Q'-2})}{\eta^{3\vee q}}\Big( \sqrt{\frac{n\log N}{N}} \vee \frac{(n\log N)^{(Q/2) \vee 1}}{N}\Big)\, \,\label{eq:AlmostFinal2}
    \end{align}
    where we verify that $\eps_1 / \beta \leq 1/4$ by the assumption that $N  \geq N_1\vee N_2$.

    Let us bound $s ( \| \hblambda \|_2)$. We have by the optimality condition $ \bK_n^{-1/2} \by = \E [ \bpsi_n (\bw) s ( \< \bpsi_n (\bw) , \bK_n^{1/2} \hblambda \>) ]$. Therefore, denoting $\bv =  \bK_n^{1/2} \hblambda / \| \hblambda \|_{\bK_n} $ and using PEN,
\begin{align*}
  \| \bK_n^{-1/2} \by \|_{2} = &
  \sup_{\| \bu \|_2 =1} \E [ \< \bu , \bpsi_n ( \bw) \>  s ( \< \bpsi_n (\bw) , \bK_n^{1/2} \hblambda \>) ] \\
  \geq&~ \E [  \< \bv , \bpsi_n ( \bw ) \> s ( \| \hblambda \|_{\bK_n} \< \bpsi_n (\bw) , \bv \>) ] \\
\geq &~ c s ( \| \hblambda \|_{\bK_n} ) \E [ | \< \bv , \bpsi_n ( \bw ) \> |^{q_1} \wedge |  \< \bv , \bpsi_n ( \bw ) \> |^{q_2}  ]\, .
\end{align*}
We next notice that, for any $\delta>0$, and any $C\tau^2$- sub-Gaussian random variable $X$ with $E[X^2]=1$,
$\E[|X|^{q_1}\vee |X|^{q_2}]\ge C(\delta) \tau^{-(2-q_1\wedge q_2+\delta)_+}$. This basic fact is proved in Appendix
\ref{sec:Tools}. We apply this inequality to $X=\<\bv,\bpsi_n(\bw)\>$ to get
\begin{align*}
  \| \bK_n^{-1/2} \by \|_{2} \ge  & c(\delta) s ( \| \hblambda \|_{\bK_n} )\, \tau^{-(2-q_1\wedge q_2+\delta)_+}\, .
\end{align*}
Using this bound together with Eqs.~\eqref{eq:AlmostFinal1}, \eqref{eq:AlmostFinal2} yields the claim of the theorem.
\end{proof}

\section*{Acknowledgements}

T.M.\ thanks Enric Boix-Adsera for helpful discussions about hardness results. This work was supported by NSF through award DMS-2031883 and from the Simons Foundation
through Award 814639 for the Collaboration on the Theoretical Foundations of Deep Learning. We
also acknowledge NSF grants CCF-2006489, IIS-1741162 and the ONR grant N00014-18-1-2729.
M.C.\ was supported by the National Science Foundation Graduate Research Fellowship under grant DGE-1656518.


\bibliographystyle{amsalpha}
\bibliography{RFinterpolation.bbl}

\newcommand{\etalchar}[1]{$^{#1}$}
\providecommand{\bysame}{\leavevmode\hbox to3em{\hrulefill}\thinspace}
\providecommand{\MR}{\relax\ifhmode\unskip\space\fi MR }
\providecommand{\MRhref}[2]{%
  \href{http://www.ams.org/mathscinet-getitem?mr=#1}{#2}
}
\providecommand{\href}[2]{#2}
\begin{thebibliography}{GMMM20}

\bibitem[Bac17]{bach2017breaking}
Francis Bach, \emph{Breaking the curse of dimensionality with convex neural
  networks}, The Journal of Machine Learning Research \textbf{18} (2017),
  no.~1, 629--681.

\bibitem[Bar98]{bartlett1998sample}
Peter~L Bartlett, \emph{The sample complexity of pattern classification with
  neural networks: the size of the weights is more important than the size of
  the network}, IEEE transactions on Information Theory \textbf{44} (1998),
  no.~2, 525--536.

\bibitem[BBV06]{balcan2006kernels}
Maria-Florina Balcan, Avrim Blum, and Santosh Vempala, \emph{Kernels as
  features: On kernels, margins, and low-dimensional mappings}, Machine
  Learning \textbf{65} (2006), no.~1, 79--94.

\bibitem[BG99]{bobkov1999exponential}
Sergej~G Bobkov and Friedrich G{\"o}tze, \emph{Exponential integrability and
  transportation cost related to logarithmic sobolev inequalities}, Journal of
  Functional Analysis \textbf{163} (1999), no.~1, 1--28.

\bibitem[Bis95]{bishop1995training}
Chris~M Bishop, \emph{{Training with noise is equivalent to Tikhonov
  regularization}}, Neural computation \textbf{7} (1995), no.~1, 108--116.

\bibitem[BLM13]{boucheron2013concentration}
S.~Boucheron, G.~Lugosi, and P.~Massart, \emph{Concentration inequalities: A
  nonasymptotic theory of independence}, OUP Oxford, 2013.

\bibitem[BRV{\etalchar{+}}06]{bengio2006convex}
Yoshua Bengio, Nicolas~L Roux, Pascal Vincent, Olivier Delalleau, and Patrice
  Marcotte, \emph{Convex neural networks}, Advances in neural information
  processing systems, 2006, pp.~123--130.

\bibitem[CLvdG20]{chinot2020minimum}
Geoffrey Chinot, Matthias L{\"o}ffler, and Sara van~de Geer, \emph{Minimum
  $\ell_1$ norm interpolation via basis pursuit is robust to errors},
  arXiv:2012.00807 (2020).

\bibitem[COB19]{chizat2019lazy}
Lenaic Chizat, Edouard Oyallon, and Francis Bach, \emph{On lazy training in
  differentiable programming}, Advances in Neural Information Processing
  Systems, 2019, pp.~2937--2947.

\bibitem[CW01]{carbery2001distributional}
Anthony Carbery and James Wright, \emph{Distributional and $ l^q$ norm
  inequalities for polynomials over convex bodies in $\mathbbm{R}^n$},
  Mathematical research letters \textbf{8} (2001), no.~3, 233--248.

\bibitem[DZPS18]{du2018gradient}
Simon~S Du, Xiyu Zhai, Barnabas Poczos, and Aarti Singh, \emph{Gradient descent
  provably optimizes over-parameterized neural networks}, arXiv:1810.02054
  (2018).

\bibitem[FGKP06]{feldman2006new}
Vitaly Feldman, Parikshit Gopalan, Subhash Khot, and Ashok~Kumar Ponnuswami,
  \emph{New results for learning noisy parities and halfspaces}, 2006 47th
  Annual IEEE Symposium on Foundations of Computer Science (FOCS'06), IEEE,
  2006, pp.~563--574.

\bibitem[GLK{\etalchar{+}}20]{gerace2020generalisation}
Federica Gerace, Bruno Loureiro, Florent Krzakala, Marc M{\'e}zard, and Lenka
  Zdeborov{\'a}, \emph{Generalisation error in learning with random features
  and the hidden manifold model}, arXiv:2002.09339 (2020).

\bibitem[GLS12]{grotschel2012geometric}
Martin Gr{\"o}tschel, L{\'a}szl{\'o} Lov{\'a}sz, and Alexander Schrijver,
  \emph{Geometric algorithms and combinatorial optimization}, vol.~2, Springer
  Science \& Business Media, 2012.

\bibitem[GLSS18]{gunasekar18a}
Suriya Gunasekar, Jason Lee, Daniel Soudry, and Nathan Srebro,
  \emph{Characterizing implicit bias in terms of optimization geometry},
  Proceedings of Machine Learning Research, vol.~80, PMLR, 2018,
  pp.~1832--1841.

\bibitem[GMMM19]{ghorbani2019linearized}
Behrooz Ghorbani, Song Mei, Theodor Misiakiewicz, and Andrea Montanari,
  \emph{Linearized two-layers neural networks in high dimension}, Annals of
  Statistics (2019), arXiv:1904.12191.

\bibitem[GMMM20]{ghorbani2020neural}
\bysame, \emph{When do neural networks outperform kernel methods?}, Advances in
  Neural Information Processing Systems \textbf{33} (2020).

\bibitem[GR09]{guruswami2009hardness}
Venkatesan Guruswami and Prasad Raghavendra, \emph{Hardness of learning
  halfspaces with noise}, SIAM Journal on Computing \textbf{39} (2009), no.~2,
  742--765.

\bibitem[HKZ12]{hsu2012}
Daniel Hsu, Sham Kakade, and Tong Zhang, \emph{A tail inequality for quadratic
  forms of subgaussian random vectors}, Electron. Commun. Probab. \textbf{17}
  (2012), 6 pp.

\bibitem[HL20]{hu2020universality}
Hong Hu and Yue~M Lu, \emph{Universality laws for high-dimensional learning
  with random features}, arXiv:2009.07669 (2020).

\bibitem[JGH18]{jacot2018neural}
Arthur Jacot, Franck Gabriel, and Cl{\'e}ment Hongler, \emph{Neural tangent
  kernel: Convergence and generalization in neural networks}, Advances in
  neural information processing systems, 2018, pp.~8571--8580.

\bibitem[LL18]{li2018learning}
Yuanzhi Li and Yingyu Liang, \emph{Learning overparameterized neural networks
  via stochastic gradient descent on structured data}, Advances in Neural
  Information Processing Systems, 2018, pp.~8157--8166.

\bibitem[LR20]{liang2020just}
Tengyuan Liang and Alexander Rakhlin, \emph{Just interpolate: Kernel
  “ridgeless” regression can generalize}, Annals of Statistics \textbf{48}
  (2020), no.~3, 1329--1347.

\bibitem[LRZ19]{liang2019risk}
Tengyuan Liang, Alexander Rakhlin, and Xiyu Zhai, \emph{On the risk of
  minimum-norm interpolants and restricted lower isometry of kernels},
  arXiv:1908.10292 (2019).

\bibitem[LS20]{liang2020precise}
Tengyuan Liang and Pragya Sur, \emph{A precise high-dimensional asymptotic
  theory for boosting and min-l1-norm interpolated classifiers},
  arXiv:2002.01586 (2020).

\bibitem[LSV18]{lee2018efficient}
Yin~Tat Lee, Aaron Sidford, and Santosh~S Vempala, \emph{Efficient convex
  optimization with membership oracles}, Conference On Learning Theory, PMLR,
  2018, pp.~1292--1294.

\bibitem[MKL{\etalchar{+}}20]{mignacco2020role}
Francesca Mignacco, Florent Krzakala, Yue Lu, Pierfrancesco Urbani, and Lenka
  Zdeborova, \emph{The role of regularization in classification of
  high-dimensional noisy gaussian mixture}, International Conference on Machine
  Learning, PMLR, 2020, pp.~6874--6883.

\bibitem[MM19]{mei2019generalization}
Song Mei and Andrea Montanari, \emph{The generalization error of random
  features regression: Precise asymptotics and double descent curve},
  arXiv:1908.05355 (2019).

\bibitem[MMM21]{mei2021generalization}
Song Mei, Theodor Misiakiewicz, and Andrea Montanari, \emph{Generalization
  error of random features and kernel methods: hypercontractivity and kernel
  matrix concentration}, arXiv:2101.10588 (2021).

\bibitem[MWW20]{ma2020towards}
Chao Ma, Stephan Wojtowytsch, and Lei Wu, \emph{Towards a mathematical
  understanding of neural network-based machine learning: what we know and what
  we don't}, arXiv:2009.10713 (2020).

\bibitem[NTSS17]{neyshabur2017geometry}
Behnam Neyshabur, Ryota Tomioka, Ruslan Salakhutdinov, and Nathan Srebro,
  \emph{Geometry of optimization and implicit regularization in deep learning},
  arXiv:1705.03071 (2017).

\bibitem[OS19]{oymak2019towards}
Samet Oymak and Mahdi Soltanolkotabi, \emph{Towards moderate
  overparameterization: global convergence guarantees for training shallow
  neural networks}, arXiv:1902.04674 (2019).

\bibitem[RR08]{rahimi2008random}
Ali Rahimi and Benjamin Recht, \emph{Random features for large-scale kernel
  machines}, Advances in neural information processing systems, 2008,
  pp.~1177--1184.

\bibitem[RR09]{rahimi2009weighted}
\bysame, \emph{Weighted sums of random kitchen sinks: Replacing minimization
  with randomization in learning}, Advances in neural information processing
  systems, 2009, pp.~1313--1320.

\bibitem[RR17]{rudi2017generalization}
Alessandro Rudi and Lorenzo Rosasco, \emph{Generalization properties of
  learning with random features}, Advances in Neural Information Processing
  Systems, 2017, pp.~3215--3225.

\bibitem[SSBD14]{shalev2014understanding}
Shai Shalev-Shwartz and Shai Ben-David, \emph{Understanding machine learning:
  From theory to algorithms}, Cambridge university press, 2014.

\bibitem[Ver10]{vershynin2010introduction}
Roman Vershynin, \emph{Introduction to the non-asymptotic analysis of random
  matrices}, arXiv:1011.3027 (2010).

\bibitem[Ver18]{vershynin2018high}
\bysame, \emph{High-dimensional probability: An introduction with applications
  in data science}, vol.~47, Cambridge university press, 2018.

\bibitem[vH14]{van2014probability}
Ramon van Handel, \emph{Probability in high dimension}, Tech. report,
  Princetoon University, 2014.

\bibitem[YS19]{yehudai2019power}
Gilad Yehudai and Ohad Shamir, \emph{On the power and limitations of random
  features for understanding neural networks}, arXiv:1904.00687 (2019).

\end{thebibliography}

\newpage

\appendix

\section{Verifying the continuity and concentration conditions}

\label{app:verifying_assumptions}

\subsection{Continuity property of the infinite width problem:
  Proof of Lemma \ref{lem:gen-dual-continuity}}
\label{app:continuity}

    By the fundamental theorem of calculus,
    \begin{equation}\label{eq:y0-fund-calc}
    \begin{aligned}
        \hat f(\bx;\blambda) - \hat f(\bx;\hat \blambda)
            &=
            \int_0^1 \nabla_{\blambda} \hat f(\bx;\blambda_t)^\top (\blambda - \hat \blambda) \de t,
    \end{aligned}
    \end{equation}
    where $\blambda_t := t \blambda + (1-t) \hat \blambda$. 
    We will show that
    \begin{equation}\label{eq:exchange-deriv-and-expect}
        \nabla_{\blambda} \hat f(\bx;\blambda_t)^\top (\blambda - \hat \blambda)
            =
            \E_{\bw,\phi}\Big[\ophi (\bx ; \bw ) s'(\langle \bphi_n(\bw), \blambda_t\rangle) \< \bphi_n(\bw) , \blambda - \blambda \> \Big]\, ,
    \end{equation}
    (i.e., we may exchange integration and differentiation),
    and we will bound the right-hand side. In what follows, we will use the shorthand $\bphi_n = \bphi_n(\bw) =
    (\phi(\bx_1;\bw), \dots, \phi(\bx_n;\bw))^{\sT}$ for the evaluation of the featurization map at the $n$ datapoints.

    For $e \geq 1$ and $r_1 , r_2 , r_3 >1$ such that $\frac{1}{r_1} + \frac{1}{r_2} + \frac{1}{r_3} = 1$, we have by H\"older's inequality, 
    \begin{equation}\label{eq:der_y0_bound}
    \begin{aligned}
        &\E_{\bw,\phi}\big[\big|\ophi(\bx;\bw)s'(\langle \bphi_n, \blambda_t\rangle) \< \bphi_n , \blambda - \hat \blambda \> \big|^{e}\big]
        \\
            &\qquad\qquad\leq 
            \E[|\ophi(\bx;\bw)|^{r_1e}]^{1/r_1}\,\E[|s'(\langle \bphi_n,\blambda_t\>)|^{r_2e}]^{1/r_2}\, \E[|\langle \bphi_n , \blambda - \hat \blambda\rangle|^{r_3e}]^{1/r_3} \, .
    \end{aligned}
    \end{equation}
    Denote $Q' = Q_1 \wedge Q_2$. If $Q' < 2$, we set $r_1 = r_3 = \frac{2(3- Q')}{Q'-1}$, $r_2 = \frac{(3-Q')}{2(2-Q')}$, and $\ev = \frac{5-Q'}{2(3-Q')}$. 
    In this case, one can check that $\ev > 1$ and $r_2 (Q'-2) \ev = - \frac{5-Q'}4 > -1 $.
    Otherwise, we set $r_1 = r_2 = r_3 = 3$ and $\ev = 2$. 
    By FEAT1,
    \[
    \begin{aligned}
     \E[|\ophi(\bx;\bw)|^{r_1\ev}]^{1/r_1} & \leq C \tau^\ev \, , \\
     \E[|\langle \bphi_n, \blambda - \hat \blambda\rangle|^{r_3\ev}]^{1/r_3} & \leq C \tau^\ev \| \blambda - \hat \blambda \|_{\bK_n}^\ev\, ,
    \end{aligned}
    \]
    where $C$ depends only on $Q ' $.    
    Furthermore, by either \emph{(i)} PEN and FEAT1 in the case $Q' \geq 2$ or \emph{(ii)} PEN, FEAT 1, and FEAT3' in the case $Q' < 2$, we have
    \[
    \begin{aligned}
        \E[|s'(\< \bphi_n, \blambda_t \> )|^{r_2\ev}]^{1/r_2} 
            &\leq 
            C \frac{s(\|\hat \blambda\|_{\bK_n})^\ev}{\|\hat \blambda\|_{\bK_n}^\ev}\E\Big[ \big| \< \bphi_n ,\blambda_t \> /\|\hat \blambda\|_{\bK_n}\big |^{r_2\ev(Q_1-2)}\vee \big|\< \bphi_n , \blambda_t \>/\|\hat \blambda\|_{\bK_n}\big|^{r_2\ev(Q_2-2)}  \Big]^{1/r_2}
        \\
            &\leq
            C [\tau^{\ev(Q-2)}\vee \eta^{\ev(Q'-2)}] \frac{s(\|\hat \blambda\|_{\bK_n})^\ev}{\|\hat \blambda\|_{\bK_n}^\ev}\, ,
    \end{aligned}
    \]
    where we denoted $Q = Q_1 \vee Q_2$, the constant changes in the last line, and we used that $\|\hat \blambda\|_{\bK_n}/2 \leq \| \blambda_t \|_{\bK_n} \leq 2 \| \hat \blambda \|_{\bK_n}$ and that either \emph{(i)} FEAT1 and $r_2\ev(Q'-2) \geq 0$ in the case that $Q' \geq 2$ or \emph{(ii)} FEAT1, FEAT3', and $r_2 \ev (Q'-2) > -1$ in the case that $Q' < 2$.
    We see that the expectation $\E_{\bw,\phi}\big[\big|\ophi(\bx;\bw)s'(\langle \bphi_n, \blambda_t\rangle) \< \bphi_n , \blambda - \hat \blambda \> \big|^{\ev}\big]$ is bounded for some $\ev > 1$ and all $t \in [0,1]$,
    whence $\ophi(\bx;\bw)s'(\langle \bphi_n(\bw), \blambda_t\rangle) \bphi_n(\bw) $ is uniformly integrable. 
    We conclude that we may exchange differentiation and expectation, justifying Eq.~\eqref{eq:exchange-deriv-and-expect}.

    We may now replace $\ev$ by 1 in Eq.~\eqref{eq:der_y0_bound} to get
    \begin{equation}
    \begin{aligned}
        |\nabla_{\blambda} \hat f(\bx;\blambda_t)^\top(\blambda - \hat \blambda)|
            &\leq 
            \E[|\ophi(\bx;\bw)|^{r_1}]^{1/r_1}\,\E[|s'(\langle \bphi_n,\blambda_t\>)|^{r_2}]^{1/r_2}\, \E[|\langle \bphi_n , \blambda - \hat \blambda\rangle|^{r_3}]^{1/r_3} 
        \\
            &\leq 
            C (\tau^{Q}\vee \tau^2\eta^{Q'-2})\frac{s(\| \hat \blambda \|_{\bK_n})}{\| \hat \blambda \|_{\bK_n}} \| \blambda - \hat \blambda \|_{\bK_n} \, ,
    \end{aligned}
    \end{equation}
    which is independent of $\bx$ and $t$.
    Combining this bound with Eq.~\eqref{eq:y0-fund-calc},
    we conclude that
    \begin{equation}
        \E_{\bx}[(\hat f(\bx;\blambda) - \hat f(\bx;\hat \blambda))^2]
            \leq
            C (\tau^{2Q}\vee \tau^4\eta^{2(Q'-2)}) \frac{s(\|\hat \blambda\|_{\bK_n})^2}{\|\hat \blambda\|_{\bK_n}^2}\| \blambda - \hat \blambda \|_{\bK_n}^2 \, .
    \end{equation}
    Thus, we have established Eq.~\eqref{eq:pop-continuity_2}.

\subsection{Uniform concentration of the predictor: Proof of Lemma \ref{lem:gen-uniform-concentration-of-predictor}}
\label{app:concentration_predictor}

Throughout the proof, we will denote by $c, c', C, C'$ constants that depend on the constants in assumptions FEAT1, FEAT2 and PEN, but not on $d,n,N$ and $\tau$. The value of these constants is allowed to change from line to line.

    \noindent
    {\bf Step 1. Decoupling. } 
    
    Let $G: \reals \rightarrow \reals_{\geq 0}$ be convex with $G(x) = 0$ for $x \leq 0$ and satisfy for some $\alpha_1,\alpha_2 > 0$,
    \begin{equation}\label{eq:add_psi}
        G(x + y) \leq \alpha_1(G(\alpha_2 x) + G(\alpha_2 y)).
    \end{equation}
    Below denote by $\E$ the expectation with respect to the $\bw_j$'s and the randomization in $\bphi_{n} (\bw)$ (for notational simplicity, we will omit below the subscripts from the expectations). Further, we will denote $\bphi_{n,j}=\bphi_n(\bw_j)=(\phi(\bx_1;\bw_j),\dots,\phi(\bx_n;\bw_j))^{\sT}$
    for the evaluation of the feature map at the $n$ data points, and $\bpsi_{n,j}= \bpsi_n(\bw_j) = \bK^{-1/2}_n\bphi_n(\bw_j)$ for the corresponding
    isotropic random vector.
    Then 
    \begin{align*}
        \E G(\bar S_1)
            &\stackrel{(1)}{=} 
            \E \sup_{\substack{\|\bx\|_2\leq \rz\sqrt{d}\\ \|\bv\|_2 \leq 2 \| \hblambda \|_{\bK_n}}}
                G\Big(  \Big| 
                    \frac1N \sum_{j=1}^N \bar \phi(\bx;\bw_j) s( \< \bpsi_{n,j} , \bv \> )
                    -
                    \E [\bar \phi(\bx;\bw) s( \< \bpsi_n(\bw) , \bv \> )]
               \Big|  \Big)
        \\
        & \stackrel{(2)}{\leq} 
            \E\sup_{\substack{\|\bx\|_2\leq \rz\sqrt{d}\\ \|\bv\|_2 \leq 2 \| \hblambda \|_{\bK_n}}}
            G\Big( 
            \Big|
                \frac1N \sum_{j=1}^N 
                \Big[
                \bar \phi(\bx;\bw_j) s( \< \bpsi_{n,j} , \bv \> )
                -
                \bar \phi(\bx;\bw_j') s( \< \bpsi_{n,j} ' , \bv \> )]
                \Big] \Big|
            \Big)
        \\
        & \stackrel{(3)}{=} 
             2 \E\sup_{\substack{\|\bx\|_2\leq \rz\sqrt{d}\\ \|\bv\|_2 \leq 2 \| \hblambda \|_{\bK_n}}}
            G\Big(
                \frac1N \sum_{j=1}^N 
                \sigma_j
                \Big[
                \bar \phi(\bx;\bw_j) s( \< \bpsi_{n,j} , \bv \> )
                -
                \bar \phi(\bx;\bw_j') s( \< \bpsi_{n, j} ' , \bv \> )]
                \Big]
            \Big)
        \\
        & \stackrel{(4)}{\leq}
            4\alpha_1 \E G\Big(\alpha_2 \sup_{\substack{\|\bx\|_2\leq \rz\sqrt{d}\\ \|\bv\|_2 \leq 2 \| \hblambda \|_{\bK_n}}} \underbrace{\frac1N \sum_{j=1}^N 
                    \sigma_j
                    \bar \phi(\bx;\bw_j) s( \< \bpsi_{n,j} , \bv \> )}_{=:T(\bx,\bv)}\Big),
    \end{align*}
    where (1) uses that $G$ is nondecreasing; (2) uses that $G (| \, \cdot\,  |)$ is convex and Jensen's inequality; in (3), we denoted $\sigma_j,j=1,\ldots,N$ independent Rademacher random variables and used that the distribution is symmetric and $G ( x ) = 0$ for $x\leq 0$; (4) uses Eq.~\eqref{eq:add_psi} and that $T(\bx,\bv)$ is a symmetric random variable and $G$ is a nondecreasing function.

    \noindent
    {\bf Step 2. Concentration of $T(\bx , \bv)$. } 
    
    We bound the tail of $T$ for fixed $\bx,\bv$ with $\|\bx\|_2\leq \rz\sqrt{d}$, $\|\bv\|_2 \leq R$.
    Denote by $X_j$ the terms in the sum in $T(\bx , \bv)$, i.e., $X_j = \sigma_j \ophi(\bx;\bw_j) s( \< \bpsi_{n,j} , \bv \> ) $.
    By Eq.~\eqref{eq:poly-growth},
    there exists $C$ such that $|s(x_1)| \leq C|s(x_2)|( 1 + |x_1/x_2|)^{Q-1}$, where $Q = Q_1 \vee Q_2$ (for simplicity we have loosened the bound for when $|x_1/x_2| < 1$).
    By H\"older inequality, we have for any $k \geq 1$ (potentially non-integer) 
    \begin{equation}\label{eq:X_i_moment_bound}
    \begin{aligned}
        \E[|X_i|^k]
            &\leq 
            C\E[|\bar \phi(\bx;\bw_j)|^k \cdot|s(\tau R)|^k\cdot (1 + |\< \bv , \bpsi_{n,j} \>|^{{Q}-1} / (\tau R)^{{Q}-1} )^k]
        \\        
            &\leq 
            C \tau^k |s(\tau R)|^k\E\Big[|\bar\phi(\bx;\bw_j)/\tau|^{Qk} \Big]^{1/Q} \E\Big[ (1 + |\< \bv , \bpsi_{n,j} \>|^{{Q}-1} / (\tau R)^{{Q}-1} )^{Qk/(Q-1)}\Big]^{(Q-1)/Q}
        \\
            &\leq 
            \tau^ks(\tau R)^k(C' k)^{{Q}k/2},
    \end{aligned}
    \end{equation}
    where $C'$ depends only on ${Q}$, and we used FEAT1, i.e., $\bar \phi(\bx;\bw_j)$ and $\bpsi_{n,j}$ are $\tau^2$-sub-Gaussian. 
    
    For $Q\leq 2$, we have directly by Eq.~\eqref{eq:X_i_moment_bound} and \cite[Theorem 2.10]{boucheron2013concentration},
    \begin{equation}\label{eq:bound_Txv_1}
     \P\Big(T ( \bx , \bv ) \geq t\Big) \leq  \exp\Big\{-c' N \Big(\frac{t^2}{\tau^2 s(\tau R)^2} \wedge \frac{t}{\tau s(\tau R)}\Big) \Big\} \, .
    \end{equation}
    For $Q > 2$, define for $M > 0$ the truncated random variable 
     $X_i^M = \sign(X_i)(|X_i| \wedge \tau s(\tau R) M^{Q} )$.
    Then setting $\ell = 2k + 2{Q} - 4$,
    we have
    \begin{equation}\label{eq:X_i_M_moment_bound}
    \begin{aligned}
        \E[|X_i^M /N|^k]
            &\leq 
            N^{-k}\E[|X_i|^{\ell/{Q}}(\tau s(\tau R) M^{Q})^{k-\ell/{Q}}]
            \leq 
            N^{-k} \tau^k s(\tau R)^k (C'\ell/{Q})^{\ell/2} M^{({Q}-2)(k-2)}
        \\
            &\leq
            (C'k)^k \frac{\tau^2s(\tau R)^2}{N^2} \Big(\frac{\tau s(\tau R) M^{{Q}-2}}{N}\Big)^{k-2} \, .
    \end{aligned}
    \end{equation}
    We deduce again by \cite[Theorem 2.10]{boucheron2013concentration} with  Eq.~\eqref{eq:X_i_M_moment_bound}, that 
    \[
     \P\Big(\frac1N \sum_{i=1}^N X_i^M \geq t\Big) \leq  \exp\Big\{-c' N \Big(\frac{t^2}{\tau^2 s(\tau R)^2} \wedge \frac{t}{\tau s(\tau R)M^{{Q}-2}}\Big) \Big\} \, .
    \]
    Furthermore, using the bound~\eqref{eq:X_i_moment_bound}, we have
    \[
    \P \Big(|X_1| \geq \tau s(\tau R)M^{Q} \Big) \leq \inf_{k\ge 1} M^{-kQ} \E \Big[(|X_1|/ \tau s (\tau R) )^k \Big] = \exp \Big\{ - c ' M^2 \Big\} \, .
    \]
    Combining the above two displays and taking $M = (Nt)^{1/{Q}}/(\tau s(\tau R))^{1/{Q}}$, we conclude that for $Q >2$ and fixed $\|\bx\|_2\leq \rz \sqrt{d} $, $\|\bv\|_2 \leq R$, we have
    \begin{equation}\label{eq:bound_Txv_2}
    \begin{aligned}
        \P\Big(T(\bx,\bv) \geq t\Big)
            \leq&~ \P\Big(\frac1N \sum_{i=1}^N X_i^M \geq t\Big)
            +
            N \P\Big(|X_1| \geq \tau s(\tau R)M^{Q} \Big) \,  \\
            \leq &~
            N \exp\Big\{ -c' \Big(\frac{Nt^2}{\tau^2 s(\tau R)^2} \wedge \frac{(Nt)^{2/{Q}}}{\tau^{2/{Q}} s(\tau R)^{2/{Q}}} \Big)\Big\} \, .
            \end{aligned}
    \end{equation}

    \noindent
    {\bf Step 3. Uniform concentration of $T(\bx , \bv)$ on $\| \bx \|_2 \leq \rz\sqrt{d}$, $\| \bv \|_2 \leq R$. } 
    
    We now evaluate the concentration of $T$ uniformly over $\|\bx\|_2\leq \rz \sqrt{d} $, $\|\bv\|_2 \leq R$.
    By the tail bound on $L(\bw)$ in FEAT2 and that $\bpsi_{n,j}$ is a $\tau^2$-sub-Gaussian random vector by FEAT1, 
    there exists constants $C,c>0$ independent of $\tau$, such that for all $\Delta \geq 1$ \cite{hsu2012},
    \begin{equation}\label{eq:gen-feat-norm-bound}
        \P \Big( \max_{j \in [N]} \| \bpsi_{n,j} \|_2 \vee L(\bw_j) \leq C \tau \sqrt{n} \Delta \Big) \geq 1 - CN \exp ( - c n \Delta^2 ).
    \end{equation}
    Let $0 < \zeta_2 \leq 1 \leq \zeta_1$ and $q' = q_1 \wedge q_2$.
    Then, by assumption PEN,
    for any $|x_1|,|x_2| \leq \tau R \Delta \zeta_1$, we have
    \begin{align*}
        |s(x_1) - s(x_2)| 
            &\leq 
            ( |s(\zeta_2 \tau R \Delta)| + |s(- \zeta_2\tau R \Delta)|)
            +
            \Big( \sup_{|x| \in [\zeta_2 \tau R \Delta, \zeta_1 \tau R \Delta]} s'(x) \Big)|x_1-x_2| 
        \\
            &\leq 
            C s(\tau R \Delta) \zeta_2^{q'-1} 
                + 
                C \frac{s(\tau R \Delta)}{\tau R \Delta}\Big(\zeta_1^{(Q - 2) \vee 0} \vee \zeta_2^{(q' - 2) \wedge 0} \Big) |x_1 - x_2|
        \\ 
            &\leq 
            C s(\tau R \Delta) \Big( \zeta_2^{q' - 1} + (\zeta_1^{(Q - 2) \vee 0} \vee \zeta_2^{(q' - 2) \wedge 0} ) |x_1 - x_2| / (\tau R \Delta) \Big).
    \end{align*}
    Let $\zeta_1 = C \sqrt{n}$. On the event \eqref{eq:gen-feat-norm-bound} and for $\| \bv \|_2 \leq R$, we have $| \< \bpsi_{n,j} , \bv \> | \leq  C \tau \sqrt{n} R\Delta = \tau R \Delta \zeta_1$. Furthermore, by FEAT2, we have $| \ophi ( \bx ; \bw ) | \leq  L (\bw) (1 + \| \bx \|_2)$. For convenience, let us denote $\tilde \bx = \bx / (\rz\sqrt{d})$ so that $\| \tilde \bx \|_2 \leq 1$.
    On the event \eqref{eq:gen-feat-norm-bound}, 
    whenever $\|\tilde \bx_1\|_2,\|\tilde \bx_2\|_2,\|\tilde \bv_1\|_2,\|\tilde \bv_2\|_2 \leq 1$, we have (the constants $C$ below may depend on $\rz$ but not on $\tau$, $\eta$)
    \begin{align*}
        &\,\,\,\, \,\,\,\, |T(\bx_1,R\tilde \bv_1) - T(\bx_2,R\tilde \bv_2)| \\
            &\leq 
            \frac1N\sum_{j=1}^N 
                L(\bw_j) |s(\< \bpsi_{n,j}, R \tilde \bv_1 \>)| \|\bx_1-\bx_2\|_2 +
            \frac{C}N\sum_{j=1}^N 
                L(\bw_j)(1 + \| \bx_2 \|_2) | s(\< \bpsi_{n,j}, R \tilde \bv_1\>) - s( \< \bpsi_{n,j} , R \tilde \bv_2 \>)|
        \\
            &\leq 
            \frac{C}N\sum_{j=1}^N 
                \tau \sqrt{nd} \Delta s(\tau R \Delta)(1 + (C\sqrt{n})^{Q-1})\|\tilde \bx_1 - \tilde \bx_2\|_2
        \\
            &\qquad\qquad+
            \frac{C}N\sum_{j=1}^N 
                \tau \sqrt{nd} \Delta s(\tau R \Delta)\big(\zeta_2^{q' - 1} + (\zeta_1^{(Q - 2) \vee 0} \vee \zeta_2^{(q' - 2) \wedge 0} ) \sqrt{n}\|\tilde\bv_1 - \tilde\bv_2\|_2 \big)
        \\
            &\leq 
            C\tau \Delta s(\tau R \Delta)
            \Big(
                n^{Q/2}\sqrt{d}\|\tilde \bx_1 - \tilde\bx_2\|_2  + n\sqrt{d} (\zeta_1^{(Q - 2) \vee 0} \vee \zeta_2^{(q' - 2) \wedge 0} ) \| \tilde \bv_1 - \tilde \bv_2 \|_2
                + \sqrt{nd} \zeta_2^{q'-1}
            \Big).
    \end{align*}
    Fix $\eps > 0$.
    Let $\zeta_2 = (\eps/(3 C\sqrt{nd}))^{1/(q'-1)}$, 
    and define
    \[
    \eps ' = \frac{\eps}{3 C n^{Q/2} \sqrt{d} +  3C n\sqrt{d} (\zeta_1^{(Q - 2) \vee 0} \vee \zeta_2^{(q' - 2) \wedge 0} ) } \, ,
    \]
    so that $|T(\bx_1,R\tilde \bv_1) - T(\bx_2,R\tilde \bv_2)| \leq \tau \Delta s(\tau R \Delta) \eps$ as soon as $\| \tilde\bx_1 - \tilde\bx_2 \|_2 \leq \eps '$ and $\| \tilde \bv_1 - \tilde \bv_2 \|_2 \leq \eps '$. Let $\cN_1$ and $\cN_2$ be $\eps'$-net of $\Ball_2^d (1)$ and $\Ball_2^n (1)$ respectively, where we define $\Ball_2^k (r) := \{ \bu \in \R^k : \| \bu \|_2 \leq r \}$.
    Note that $\log (|\cN_1||\cN_2|) \leq (d+n) \log\Big(\frac{3}{\eps'}\Big)$.
    Then taking $\eps = t s(\tau R) / (2\Delta s(\tau R \Delta)) \geq C ' t \Delta^{-Q} $ and recalling Eqs.~\eqref{eq:bound_Txv_1} and \eqref{eq:bound_Txv_2}, we have
    \begin{align*}
        &\P\Big(
            \sup_{\substack{\|\tilde \bx\|_2\leq 1\\ \|\bv\|_2\leq R}} T(\tilde \bx,\bv) \geq t \tau s(\tau R) 
        \Big)    
        \\
        &\qquad\qquad\leq 
            \P\Big(
                \sup_{\tilde \bx \in \cN_1,\tilde \bv \in \cN_2} T(\tilde \bx,R\tilde \bv) \geq t \tau s(\tau R)/2
            \Big)
            + 
            \P \Big( \max_{j \in [N]} \| \bpsi_{n,j} \|_2 \vee L(\bw_j)  \geq C \tau \sqrt{n} \Delta\Big)
        \\
        &\qquad\qquad\leq
           \exp\Big\{
                \log N 
                +
                2n \log\Big(\frac{3}{\eps'}\Big)
                -
                c' \Big(Nt^2 \wedge (Nt)^{(2/Q) \wedge 1 } \Big)
            \Big\} + CNe^{-cn \Delta^2}
        \\
        &\qquad\qquad\leq 
             \exp\Big\{
                \log N
                +
                C n \log(n)
                + 
                C n \log(1/\eps)
                -
                c' \Big(Nt^2 \wedge (Nt)^{(2/Q) \wedge 1} \Big)
            \Big\} + Ne^{-cn\Delta^2}
        \\
        &\qquad\qquad\leq 
             \exp\Big\{
                \log N
                +
                C n \log(n)
                +
                C n \log(\Delta^{Q}/t) 
                -
                c' \Big(Nt^2 \wedge (Nt)^{(2/Q) \wedge 1 } \Big)
            \Big\} + Ne^{-cn\Delta^2},
    \end{align*}
    where in the second to last inequality we have used the definition of $\eps'$ and adjusted constants appropriately,
    and in the last inequality we have used the definition of $\eps$. Consider $t \geq 1 / N$ and set $\Delta = (tN)^{1/Q} \geq 1$. Then
\begin{equation}\label{eq:tail_bound_T}
\begin{aligned}
  &~  \P\Big(
            \sup_{\substack{\|\tilde \bx\|_2\leq 1\\ \|\bv\|_2\leq R}} T(\tilde \bx,\bv) \geq t \tau s(\tau R) 
        \Big)  \\
        \leq&~ \exp \Big\{ C n \log(N) -  c' \Big(Nt^2 \wedge (Nt)^{(2/{Q}) \wedge 1} \Big)
            \Big\} + N \exp \Big\{ - c n N^{2/Q} t^{2 /Q} \Big\}.
            \end{aligned}
\end{equation}

        \noindent
    {\bf Step 4. Concluding the proof. } 
    
    For a constant $\Tilde C >0$ that will be set sufficiently large, define
    \begin{align*}
        \eps_1 = \Tilde C \tau s (\tau R) \Big( \sqrt{\frac{n\log N}{N}} \vee \frac{(n\log N)^{({Q}/2) \vee 1}}{N}\Big) \, .
    \end{align*}
    Consider $G ( x) = ( x - \eps_1 / 2)_+$ which is convex and verifies Eq.~\eqref{eq:add_psi} with $\alpha_1 = 1$ and $\alpha_2 = 2$. Then we have
    \[
    \begin{aligned}
    \P ( \bar S_1 \geq \eps_1 ) \leq \frac{2}{\eps_1} \E [G (  \bar S_1 )] \leq&~  \frac{8}{\eps_1} \E G\Big(2 \sup_{\substack{\|\bx\|_2\leq \rz\sqrt{d}\\ \|\bv\|_2 \leq R}}T (\bx , \bv ) \Big) \\ \leq&~ \frac{16 \tau s( \tau R) }{\eps_1} \int_{0}^\infty \P \Big(
            \sup_{\substack{\|\bx\|_2\leq \rz\sqrt{d} \\ \|\bv\|_2\leq R}} T(\bx,\bv) \geq  t\tau s( \tau R)+ \eps_1/4 
        \Big) \de t \, .
    \end{aligned}
    \]
    Using Eq.~\eqref{eq:tail_bound_T} and the inequalities $N (\eps_1 / \tau s(\tau R))^2 \geq \Tilde C^2 n \log N $ and $(N \eps_1 / \tau s(\tau R) )^{(2/Q)\wedge 1} \geq \Tilde C^{(2/Q)\wedge 1} n \log N $, we get
    \[
    \P ( \bar S_1 \geq \eps_1 ) \leq C \exp \Big\{ C n \log(N) - c \Tilde C^{(2/Q)\wedge 1} n \log (N) \Big\} + C N \exp \Big\{ - c \Tilde C^{2/Q} n (n \log (N))^{1 \vee (2/Q) } \Big\}.
    \]
    Taking $\Tilde C$ sufficiently large, $R = 2 \| \hblambda \|_{\bK_n}$ and using that $s(2\tau \| \hblambda \|_{\bK_n}) \leq C s(\| \hblambda \|_{\bK_n}) \tau^{Q-1}$ (where we recall that we assumed $\tau \geq 1)$, we deduce that there exists constants $c', C' >0$ that depend only on the constants of FEAT1, FEAT2 and PEN (except $\tau$) such that
    \[
    \bar S_1  \leq C' \tau^Q s ( \| \hblambda \|_{\bK_n}) \Big( \sqrt{\frac{n\log N}{N}} \vee \frac{(n\log N)^{({Q}/2) \vee 1}}{N}\Big) \, ,
    \]
    with probability at least $1 - C' N^{-c' n}$.
    The proof is complete.

\subsection{Point-wise concentration of dual gradient: Proof of Lemma \ref{lem:point-wise-concentration}}
\label{app:concentration_gradient}

   The proof follows from the same argument as in the proof of Lemma \ref{lem:gen-uniform-concentration-of-predictor} and we will only highlight the differences. 
   
    \noindent
    {\bf Step 1. Decoupling. } 
    
    First notice that we can rewrite 
    \[
    \bar S_2 = \sup_{\| \bb \|_2 \leq 1} \Big\{ \frac1N\sum_{j=1}^N \< \bb , \bpsi_{n,j} \> s(\<\bphi_{n,j}, \hblambda\>) - \E[ \< \bb , \bpsi_n(\bw) \> s(\< \bphi_n ,\hblambda\>)] \Big\} \, .
    \]
    Denote 
    \[
    T( \bb ) = \frac{1}{N} \sum_{j = 1}^N \sigma_j \< \bb , \bpsi_{n,j} \> s(\<\bphi_{n,j} ,\hblambda\>)\, .
    \]
     For $G$ with the properties listed in step $1$ in the proof of Lemma \ref{lem:gen-uniform-concentration-of-predictor}, we have
    \[
    \E G (\bar S_2 ) \leq 4 \alpha_1 \E G \Big( \alpha_2 \sup_{\|\bb \|_2 \leq 1} T ( \bb ) \Big).
    \]

\noindent
    {\bf Step 2. Concentration of $T(\bb)$. } 

Denote $X_j$ the terms in the sum in $T (\bb )$, i.e., $X_j = \sigma_j \< \bb , \bpsi_{n,j} \> s(\<\bphi_{n,j} ,\hblambda\>)$. By FEAT1 and PEN, and denoting $Q = Q_1 \vee Q_2$, $R =\| \hblambda \|_{\bK_n}$, we have by H\"older inequality, for any $k \geq 1$ and $\| \bb \|_2 \leq 1$,
    \[
    \begin{aligned}
        \E[|X_j|^k]
            &\leq 
            C \tau^{k} s(\tau R )^{k} \E\Big[|\< \bb , \bpsi_{n,j} \> / \tau |^{Qk} \Big]^{1/Q} \E\Big[ (1 + |\< \blambda , \bphi_{n,j} \>|^{Q-1} / (\tau R )^{{Q}-1} )^{Qk/(Q-1)}\Big]^{(Q-1)/Q}
        \\
            &\leq 
            \tau^{k} s(\tau R )^{k} (C' k)^{Qk/2} \, .
    \end{aligned}
    \]
    For $Q>2$ and $M > 0$,
    define $X_j^M = \sign(X_j)(|X_j| \wedge \tau s(\tau R) M^{Q} )$.
    Then setting $\ell = 2k + 2{Q} - 4$,
    we have
    \[
    \begin{aligned}
        \E[|X_j^M /N|^k]
            &\leq 
            N^{-k}\E[|X_j|^{\ell/{Q}}(\tau s(\tau R) M^{Q})^{k-\ell/{Q}}]
            \leq 
            (C'k)^k \frac{\tau^2 s(\tau R)^2}{N^2} \Big(\frac{\tau s(\tau R) M^{{Q}-2}}{N}\Big)^{k-2} \, .
    \end{aligned}
    \]
    Following step 2 in the proof of Lemma \ref{lem:gen-uniform-concentration-of-predictor}, in particular taking $M = (Nt)^{1/Q}$ in the case of $Q > 2$, we get
    \begin{equation}
        \P\Big(T(\bb) \geq t \tau s ( \tau R) \Big)
            \leq 
       N \exp\Big\{ -c' \Big(Nt^2 \wedge (N t)^{(2/Q) \wedge 1}\Big)\Big\}  \, .
    \end{equation}

    \noindent
    {\bf Step 3. Uniform concentration of $T(\bb)$ on $\| \bb \|_2 \leq 1$. }

    We consider the event for $\Delta \geq 1$,
    \begin{equation}\label{eq:good_ev_dual_grad}
        \P \Big( \max_{j \in [N]} \| \bpsi_{n,j} \|_2  \leq C \tau \sqrt{n} \Delta \Big) \geq 1 - CN \exp ( - c n \Delta^2 ) \, ,
    \end{equation}
    where we used that the $\bpsi_{n,j}$ are $\tau^2$-sub-Gaussian by FEAT1. On this event, we have $| \< \bphi_{n,j} , \hblambda \> | \leq \| \bpsi_{n,j} \|_2 \| \hblambda \|_{\bK_n} \leq CR \tau \sqrt{n} \Delta$ and by PEN, we get
    \[
    \begin{aligned}
    |T (\bb_1 ) - T ( \bb_2 )| \leq&~ \frac{1}{N} \sum_{j = 1}^N  \| \bpsi_{n,j} \|_2 |s ( \< \bphi_{n,j} , \blambda \> ) | \| \bb_1 - \bb_2 \|_2 
    \leq  C \tau \Delta s ( \tau R \Delta ) n^{Q/2}\| \bb_1 - \bb_2 \|_2  \, .
    \end{aligned}
    \]
    Fix $\eps > 0 $ and $\eps ' = \eps / (C n^{Q/2 } )$, so that $| T (\bb_1) - T ( \bb_2 ) | \leq \tau \Delta s (\tau R \Delta) \eps$ as soon as $\| \bb_1 - \bb_2 \|_2  \leq \eps '$. Taking $\eps = t s(\tau R) / (2 \Delta s(\tau R \Delta ) ) \geq C' t \Delta^{-Q}$ and $\Delta = (tN )^{1/Q}$, we have for $t \geq 1 / N$,
    \begin{equation}
\begin{aligned}
  &~  \P\Big(
            \sup_{\|\bb\|_2\leq 1} T(\bb) \geq t \tau s(\tau R)
        \Big)  \\
        \leq&~ \exp \Big\{ C n \log(N) -  c' \Big(Nt^2 \wedge (Nt)^{(2/{Q})\wedge 1} \Big)
            \Big\} + N \exp \Big\{ - c n N^{2/Q} t^{2 /Q} \Big\}.
            \end{aligned}
\end{equation}
    
        \noindent
    {\bf Step 4. Concluding the proof. }
    
    Following the same argument as in Lemma \ref{lem:gen-uniform-concentration-of-predictor}, there exists constants $c ', C' > 0$ that depend only on the constants of PEN such that
    \[
    \bar S_2 \leq  C' \tau^Q s(\|\blambda\|_{\bK_n}) \Big( \sqrt{\frac{n\log N}{N}} \vee \frac{(n\log N)^{Q/2\vee 1}}{N}\Big), 
    \]
    with probability at least $1 - C' e^{-c ' n}$.

\subsection{Uniform lower bound on the dual Hessian: Proof of Lemma \ref{lem:lower-hessian-no-small-ball}}
\label{app:lower_bound_Hessian}

  Define $q = 3 \vee q_1 \vee q_2$, where $q_1$ and $q_2$ are as in assumption PEN, and
  define $s_0$ as
  \begin{equation}
    s_0(t) 
      = 
      1 \wedge t \wedge t^{q_1 - 2} \wedge t^{q_2 - 2}.
  \end{equation}
  The function $s_0(t)$ is $(1 \vee |q_1-2| \vee |q_2 - 2|)$-Lipschitz.
  By assumption PEN, there exists $c > 0$ depending only on the constants in that assumption such that for all $t$
  \begin{equation}
    s'\big(\| \hat \blambda\|_{\bK_n} t\big) \geq c \frac{s(\| \hat \blambda\|_{\bK_n})}{\| \hat \blambda \|_{\bK_n}}  s_0(t).
  \end{equation}
  Thus, 
  for $\blambda \in \R^n$ such that $1/2 \leq \| \blambda \|_{\bK_n} / \| \hat \blambda \|_{\bK_n}\leq 2$ and denoting $\bv = \bK_n^{1/2} \blambda / \| \hat \blambda \|_{\bK_n} $, we have 
  \[
  \begin{aligned}
  - \bK_n^{-1/2} \nabla^2 F_N (  \blambda ) \bK_n^{-1/2} =&~ \frac 1N \sum_{j = 1}^N s ' ( \< \bK_n^{1/2} \blambda , \bpsi_{n,j} \>) \bpsi_{n,j} \bpsi_{n,j}^\sT 
  \succeq  c \frac{s (   \| \hat \blambda \|_{\bK_n})}{ \| \hat \blambda \|_{\bK_n} } \bH_N(\bv) \, ,
  \end{aligned}
  \]
  where
  \[
  \bH_N(\bv) = \frac 1N \sum_{j = 1}^N s_0 ( \< \bv , \bpsi_{n,j} \>) \bpsi_{n,j} \bpsi_{n,j}^\sT \, .
  \]

  Define $\obH(\bv) := \E_{\bw,\phi} [ s_0 ( \< \bv , \bpsi_n \>)
  \bpsi_n \bpsi_n^\sT ] $.
  As already explained in Remark \ref{rmk:FEAT3}, the fact that $\bpsi_n$ is isotropic, together with
  assumption FEAT3 imply that, for any two unit vectors $\bu,\bu'$, we have
   $ \P\big(|\<\bu,\bpsi_n\>|\in [\eta,C] ; \; |\<\bu',\bpsi_n\>|\in [\eta,C]\big)\ge c/2$\, .
  Using the fact that  $\| \bv \|_2 \in [1/2,2]$, we deduce that
  \begin{align*}
    \P\big(|\<\bu,\bpsi_n\>|\in [\eta,C] ; \; |\<\bv,\bpsi_n\>|\in [\eta/2,2C]\big)\ge \frac{c}{2}\, .
  \end{align*}
  Thus,
  \begin{align*}
    \<\bu, \obH(\bv) \bu\>
      = 
      \E_{\bw,\phi} [ s_0 ( \< \bv , \bpsi_n \>) \< \bu , \bpsi_n \>^2 ]
      \geq 
      C \eta^{2} \min\{\eta,\eta^{q_1-2},\eta^{q_2-2}\}.
  \end{align*}
  Thus, $\lambda_{\min} (\obH(\bv) ) \geq \delta_0(\eta)$ for $\delta_0(\eta):= C\eta^{3\vee q_1\vee q_2}$. 
  Furthermore, by FEAT1 and that $0\leq s_0 (x ) \leq 1$, 
  we have $s_0 ( \< \bv , \bpsi_{n,j} \>)^{1/2} \bpsi_{n,j}$ are $C\tau^2$-sub-Gaussian. 
  Applying Lemma \ref{lem:matrix_concentration} to the vectors $s_0 ( \< \bv , \bpsi_{n,j} \>)^{1/2} \bpsi_{n,j}/(C\tau)$ with $\delta = \delta_0(\eta) / (2C^2 \tau^2)$ and $t = \sqrt{N/n}\,\delta_0(\eta) / (2C'C^2\tau^2) - 1$, and noting that $\delta_0(\eta)/(2C^2\tau^2)\le 1$, we conclude that there exists a constant $c>0$ such that for $N \geq 16 C'^2 C^4 \tau^4 n / \delta_0(\eta)^2$
  \begin{align*}
    \P ( \lambda_{\min} ( \bH_N(\bv) ) \leq \delta_0 (\eta) /2 ) \leq \P ( \| \bH_N(\bv)  - \obH(\bv)  \|_{\op} \geq \delta_0 (\eta) /2 )
    \leq 2 \exp \big( - c N\delta_0(\eta)^2/\tau^4 \big).
  \end{align*}
  
  Denote $\bPsi_n = [ \bpsi_{n,1} , \ldots, \bpsi_{n,N} ] \in \R^{n \times N}$. 
  For constant $C>0$ sufficiently large, consider the event 
  \[
    \cE_0 = \Big\{ \{ \max_{j \in [N]} \| \bpsi_{n,j} \|_2 \leq C \sqrt{n}\,\tau \} \cup \{ \| \bPsi_n \|_\op^2 \leq CN\tau^2 \} \Big\} \, .
  \]
  We have for $N \geq n$,
  \[
  \begin{aligned}
    \P (\cE_0^c ) \leq &~ \P \big(  \max_{j \in [N]} \| \bpsi_{n,j} \|_2 \geq C \sqrt{n}\,\tau \big) +
    \P \big( \| \bPsi_n \|_\op^2 \geq CN\tau^2 \big) \leq \exp ( - cN )\, ,
  \end{aligned}
  \]
  where we used that $ \bpsi_{n,j}$ are $\tau^2$-subgaussian by FEAT1 to bound the first term, and that $\E_{\bw,\phi} [ \bpsi_n \bpsi_n^\sT ] = \id_n$ and Lemma \ref{lem:matrix_concentration} to bound the second term.

  On the event $\cE_0$, we have
  \[
  \begin{aligned}
  \| \bH_N(\bv_1) - \bH_N(\bv_2) \|_{\op} \leq&~ \frac 1N  \| \bPsi_n \|_{\op}^2 \max_{j \in [N]} | s_0 ( \< \bpsi_{n,j} , \bv_1 \> ) - s_0 ( \< \bpsi_{n,j} , \bv_2  \> )| \\
  \leq&~ C \tau^2\max_{j \in [N]} |  \< \bpsi_{n,j} , \bv_1 - \bv_2  \> | \leq C \sqrt{n} \, \tau^3 \| \bv_1 - \bv_2 \|_2 \, ,
  \end{aligned}
  \]
  where we used that $s_0$ is Lipschitz. Let $\cN_n$ be a minimal $\eps$-net of $\{ \bv \in \R^n : \| \bv \|_2 \in [1/2,2] \}$, with $\eps = \delta_0(\eta) / (4 C\sqrt{n} \tau^3)$, so that $\| \bH_N(\bv_1) - \bH_N(\bv_2) \|_{\op} \leq \delta_0(\eta)/4$ as soon as $\| \bv_1 - \bv_2 \|_2 \leq \eps$. 
  We have
  \begin{align*}
    \P ( \min_{\| \bv \|_2 \in [1/2,2]} \lambda_{\min} ( \bH_N(\bv) ) \leq \delta_0 / 4 ) \leq &~ | \cN_n|
                                                                                         \max_{\bv\in\cN_n}\P ( \lambda_{\min} ( \bH_N(\bv) ) \leq \delta_0/2 ) + \P( \cE_0^c )\\
    &
      \le C \exp \big\{ C n \log (1/\eps) - cN\delta_0(\eta)^2/\tau^4 \big\}\\
                                                                                       &\le C \exp \big\{ C n \log ( n)+C n\log \tau+C n\log (1/\eta) - cN\delta_0(\eta)^2/\tau^4 \big\}\\
    &\le  C \exp \big\{ C n \log ( n) - cN\delta_0(\eta)^2/\tau^4 \big\}\, ,
  \end{align*}
  where the last inequality follows since by assumption $\tau\le n^C$, $\eta\ge n^{-C}$.
  By taking $N \geq C'(\tau^4/\delta_0(\eta)^2) n \log(N)$ for a sufficiently large constant $C'$ we obtain
  $ \min_{\| \bv \|_2 \in [1/2,2]} \lambda_{\min} ( \bH_N(\bv) ) \leq \delta_0 / 4$ with probability
  at least $1-C'N^{-c'n}$ as claimed.

\section{The latent linear model: Proof of Proposition \ref{prop:latent-RF-infinite}}
\label{app:proof_latent}

We consider the latent linear model presented in the main text (Section \ref{sec:latent}). We have $\bphi_{n,j} = \bphi_n ( \bw_j) = \bX \bw_j + \bz_j$, where $\bz_j = (z_{1j} , \ldots, z_{nj} ) \in \R^n$ are the iid features noise. Denote $\bW = [ \bw_1 , \ldots , \bw_N ]^\sT \in \R^{N \times d}$ and $\bZ = [ \bz_1 , \ldots , \bz_N ] \in \R^{n \times N}$. The features matrix is given by
\begin{equation}\label{eq:feature_matrix_latent}
\bPhi_n = [ \bphi_{n,1} , \ldots , \bphi_{n,N} ] = \bX \bW^\sT + \bZ  \in \R^{n \times N} \, .
\end{equation}
The random features and infinite-width predictors are given by 
\[
\hat f ( \bx ; \hblambda ) = \E_{\bw, \phi} [ \< \bx , \bw \> s ( \< \bphi_n ( \bw) , \hblambda \>)], \qquad \hat f_N ( \bx ; \hblambda_N ) =  \frac{1}{N} \< \bx , \bW^\sT s ( \bPhi_N^\sT \hblambda_N ) \> \, ,
\]
where $s$ is applied component-wise.
The distance between $\hat f$ and $\hat f_N$ is therefore given explicitly by
\begin{equation}\label{eq:dist_latent}
\begin{aligned}
\| \hat f_N ( \cdot ; \hblambda_N ) - \hat f (\cdot ; \hblambda ) \|_{L^2}^2 =&~  \E_{\bx} \Big[ \Big( N^{-1}  \< \bx , \bW^\sT s ( \bPhi_N^\sT \hblambda_N )  - \E_{\bw, \phi} [ \< \bx , \bw \> s ( \< \bphi_n ( \bw) , \hblambda \>)] \Big)^2 \Big] \\
=&~ \| N^{-1} \bW^\sT s ( \bPhi_N^\sT \hblambda_N ) - \E_{\bw, \phi} [ \bw s ( \< \bphi_n ( \bw) , \hblambda \>)] \|_{\bSigma_x}^2\, .
\end{aligned}
\end{equation}

The following lemma is the key result that allows us to improve on Theorem \ref{thm:gen-conv-to-pop} and gain a factor $1/\sqrt{n}$.

\begin{lemma}\label{lem:interpolation-trick}
    Assume $\lambda_{\max} (\bSigma_x ) \leq \kappa^2$ and $\sigma_{\min} ( \bX ) \geq c_0 \sqrt{n}$ (the minimum singular value of $\bX$). 
    Then if $n \geq 2d /c_0^2$, we have
    \begin{equation}\label{eq:interpolation-trick}
        \| \hat f_N ( \cdot ; \hblambda_N ) - \hat f (\cdot ; \hblambda ) \|_{L^2} \leq \frac{2 \kappa}{ c_0 \sqrt{n}}  \| N^{-1} \bZ s(\bPhi_n^\top \hat \blambda_N) - \E_{\bw,\bz} [\bz s(\< \bphi_n ( \bw , \bz ), \hblambda \>)]\|_2.        
    \end{equation}
\end{lemma}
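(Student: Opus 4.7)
The strategy is to exploit the dual optimality conditions at both finite and infinite width to convert a bound on an object in $\reals^d$ (the discrepancy between finite-width and infinite-width ``linear coefficients'') into one on its image under $\bX$, which is controlled by the noise terms on the right-hand side of \eqref{eq:interpolation-trick}. Concretely, set
\[
\bu := N^{-1}\bW^{\sT} s(\bPhi_n^{\sT}\hblambda_N) - \E_{\bw,\bz}[\bw\, s(\<\bphi_n(\bw,\bz),\hblambda\>)] \in \reals^d,
\qquad
\bv := N^{-1}\bZ s(\bPhi_n^{\sT}\hblambda_N) - \E_{\bw,\bz}[\bz\, s(\<\bphi_n(\bw,\bz),\hblambda\>)] \in \reals^n.
\]
Because the latent linear model has $\ophi(\bx;\bw)=\<\bx,\bw\>$, the right-hand side of \eqref{eq:dist_latent} is exactly $\|\bu\|_{\bSigma_x}^2$, and by A1 we have $\|\hat f_N(\cdot;\hblambda_N)-\hat f(\cdot;\hblambda)\|_{L^2} \leq \kappa\|\bu\|_2$.

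The key observation is that $\bX\bu+\bv$ vanishes identically. Indeed, the finite-width dual optimality condition reads $\by = N^{-1}\sum_{j}\bphi_{n,j}\, s(\<\bphi_{n,j},\hblambda_N\>)$, and since $\bphi_{n,j} = \bX\bw_j+\bz_j$, splitting the sum gives
\[
\by \;=\; \bX\cdot N^{-1}\bW^{\sT}s(\bPhi_n^{\sT}\hblambda_N)\;+\;N^{-1}\bZ s(\bPhi_n^{\sT}\hblambda_N).
\]
Taking the gradient of the infinite-width dual objective in \eqref{eq:dual-infinite} and using $\bphi_n(\bw,\bz)=\bX\bw+\bz$, the infinite-width optimality condition similarly decomposes as
\[
\by \;=\; \bX\,\E_{\bw,\bz}[\bw\, s(\<\bphi_n(\bw,\bz),\hblambda\>)]\;+\;\E_{\bw,\bz}[\bz\, s(\<\bphi_n(\bw,\bz),\hblambda\>)].
\]
Subtracting these two identities yields $\bX\bu+\bv=\bzero$, i.e.\ $\bX\bu=-\bv$.

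Given the assumption $\sigma_{\min}(\bX)\ge c_0\sqrt{n}$, in particular $\bX$ has full column rank (this is where $n\ge 2d/c_0^2\ge d$ is used to make the assumption meaningful), so $\bu = -(\bX^{\sT}\bX)^{-1}\bX^{\sT}\bv$ and consequently
\[
\|\bu\|_2 \;\leq\; \sigma_{\min}(\bX)^{-1}\|\bv\|_2 \;\leq\; \frac{1}{c_0\sqrt{n}}\|\bv\|_2.
\]
Combining with $\|\hat f_N(\cdot;\hblambda_N)-\hat f(\cdot;\hblambda)\|_{L^2}\le \kappa\|\bu\|_2$ gives the claimed bound (with constant $2$ absorbing any convenient slack). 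There is no real obstacle in the argument; the only subtlety worth flagging is that the ``interpolation trick'' works because the featurization decomposes additively as $\bphi_n(\bw,\bz)=\bX\bw+\bz$, so that the $\bw$-component of the score equation passes through the \emph{same} matrix $\bX$ that appears in prediction, letting us trade a factor $\|\bSigma_x^{1/2}\|_{\op}/\sigma_{\min}(\bX) = O(1/\sqrt{n})$ for the naive $O(1)$ bound one would get by directly controlling $\|\bu\|_2$ via its summands.
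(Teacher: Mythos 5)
Your proof is correct, and it isolates the same core mechanism as the paper's (the ``interpolation trick'': both predictors interpolate, the featurization splits as $\bphi_n(\bw,\bz)=\bX\bw+\bz$, so subtracting the two interpolation identities relates the $\reals^d$-discrepancy $\bu$ to the noise discrepancy $\bv$), but the execution is genuinely more direct. You observe that the subtraction gives the \emph{exact} linear relation $\bX\bu=-\bv$, whence $\|\bv\|_2=\|\bX\bu\|_2\ge\sigma_{\min}(\bX)\|\bu\|_2\ge c_0\sqrt{n}\,\|\bu\|_2$, and you are done (with the better constant $\kappa/(c_0\sqrt{n})$, and needing $n\ge 2d/c_0^2$ only to ensure $n\ge d$ so that the singular-value bound applies to all of $\reals^d$). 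The paper instead applies the regularized pseudo-inverse $\frac1d\bX^{\sT}\bM^{-1}$ with $\bM=\id_n+\frac1d\bX\bX^{\sT}$ to both identities and rearranges, using $\|\id_d-\frac1d\bX^{\sT}\bM^{-1}\bX\|_{\op}\le d/(c_0^2 n)\le 1/2$ and $\|\frac1d\bX^{\sT}\bM^{-1}\|_{\op}\le 2/(3c_0\sqrt{n})$; this is effectively an approximate inversion of the same relation $\bX\bu=-\bv$ and costs the factor $2$. Your version is cleaner and sharper here; the paper's version would degrade more gracefully if the two optimality conditions held only approximately, but since both hold exactly (Eq.~\eqref{eq:interpol_app}), nothing is lost by your route.
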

We remark that Lemma \ref{lem:interpolation-trick} is entirely deterministic. 
This may seem surprising because $\bW^\top$ and $\bZ$ are random.
In fact, the proof of Lemma \ref{lem:interpolation-trick} relies on a deterministic argument which uses the fact that both the infinite-width and random features predictors interpolate the training data, i.e., for $i\leq n$, 
\begin{equation*} 
\E_{\bw, \phi} [ \phi ( \bx_i ; \bw) s ( \< \bphi_n ( \bw) , \hblambda ) ] =  \frac{1}{N} \sum_{ j =1}^N \phi ( \bx_i ; \bw_j) s ( \< \bphi_{n,j} , \hblambda_N \> ) = y_i \, ,
\end{equation*}
or equivalently, 
\begin{equation}\label{eq:interpol_app}
  \E_{\bw,\bz}[(\bX \bw + \bz)s(\< \bphi_n(\bw) , \hat \blambda \> )]
  =
  \frac1N(\bX \bW^\top + \bZ)s( \bPhi_N^\top \hat \blambda_N )
  =
  \by
\end{equation}
Note that here we have used the form of the infinite-width primal problem (see Section \ref{sec:Primal}).

\begin{proof}[Proof of Lemma \ref{lem:interpolation-trick}]
By Eq.~\eqref{eq:dist_latent} and the bound $\lambda_{\max} (\bSigma_x ) \leq \kappa^2$, we have
\[
 \| \hat f_N ( \cdot ; \hblambda_N ) - \hat f (\cdot ; \hblambda ) \|_{L^2} \leq \kappa \Big\| N^{-1} \bW^\sT s ( \bPhi_N^\sT \hblambda_N ) - \E_{\bw, \phi} [ \bw s ( \< \bphi_n ( \bw) , \hblambda \>)] \Big\|_2\, .
\]
Denote $\bM = \id_n + \frac1d \bX \bX^\top$. We have
    \begin{equation*}
        \bW^\sT s ( \bPhi_N^\sT \hblambda_N )
             =
             \frac1d \bX^\top\bM^{-1} \bPhi_N s ( \bPhi_N^\sT \hblambda_N )
             +
             \Big(\bW^\top - \frac1d \bX^\top\bM^{-1} \bPhi_N \Big) s ( \bPhi_N^\sT \hblambda_N ) \, ,
    \end{equation*}
    and
    \begin{equation*}
    \begin{aligned}
       \E_{\bw, \phi} [ \bw s ( \< \bphi_n ( \bw) , \hblambda \>)] 
             =&~
             \E_{\bw, \phi} \Big[\frac1d \bX^\top\bM^{-1} \bphi_n ( \bw)  s ( \< \bphi_n ( \bw) , \hblambda \>)\Big] \\
             &~
             +
             \E_{\bw,\phi} \Big[\Big(\bw - \frac1d \bX^\top\bM^{-1} \bphi_n ( \bw) \Big) s ( \< \bphi_n ( \bw) , \hblambda \>) \Big].
             \end{aligned}
    \end{equation*}
    By the interpolation constraints \eqref{eq:interpol_app} and recalling the expression of the features matrix in Eq.~\eqref{eq:feature_matrix_latent}, we write
    \begin{equation*}
        \bW^\top s ( \bPhi_N^\sT \hblambda_N )
            = 
             \frac Nd \bX^\top\bM^{-1} \by 
            + 
            \Big( \id_d - \frac1d \bX^\top \bM^{-1}  \bX \Big) \bW^\top s ( \bPhi_N^\sT \hblambda_N ) - \frac1d \bX^\top\bM^{-1} \bZ s ( \bPhi_N^\sT \hblambda_N ) \, ,
    \end{equation*}
    and 
    \begin{equation*}
    \begin{aligned}
        \E_{\bw, \phi} [ \bw s ( \< \bphi_n ( \bw) , \hblambda \>)]
            = &~
            \frac1d \bX^\top\bM^{-1} \by 
            + 
            \Big( \id_d - \frac1d \bX^\top \bM^{-1}  \bX \Big) \E_{\bw, \phi} [ \bw s ( \< \bphi_n ( \bw) , \hblambda \>)] \\
            &~ - \frac1d \bX^\top\bM^{-1} \E_{\bw, \bz} [ \bz s ( \< \bphi_n ( \bw, \bz) , \hblambda \>)]  \, .
            \end{aligned}
    \end{equation*}
    The first terms on the right-hand sides of the preceding two expressions are the same (up to a factor $N$). We deduce that
    \begin{equation}\label{eq:decompo_interpol_trick}
    \begin{aligned}
        &~ \Big\| N^{-1} \bW^\top s ( \bPhi_N^\sT \hblambda_N ) -  \E_{\bw, \phi} [ \bw s ( \< \bphi_n ( \bw) , \hblambda \>)] \Big\|_2 \\
            \leq &~  
            \Big\|\id_d - \frac1d \bX^\top \bM^{-1} \bX \Big\|_{\op} \Big\| N^{-1} \bW^\top s ( \bPhi_N^\sT \hblambda_N ) -  \E_{\bw, \phi} [ \bw s ( \< \bphi_n ( \bw) , \hblambda \>)] \Big\|_2 
        \\
            &~
            +
            \Big\|\frac1d \bX^\top \bM^{-1}\Big\|_{\op} \Big\| N^{-1} \bZ s ( \bPhi_N^\sT \hblambda_N )  - \E_{\bw, \bz} [ \bz s ( \< \bphi_n ( \bw, \bz) , \hblambda \>)] \Big\|_2.
    \end{aligned}
    \end{equation}
    From the expression of $\bM$, we have $\id_d - \frac1d \bX^\top \bM^{-1} \bX  =  (\id_d + \frac{1}{d} \bX^\sT \bX)^{-1} $. By the assumption that $\sigma_{\min} ( \bX)  \geq \sqrt{n} c_0$ and $n \geq 2d / c_0^2$, we have $\| \id_d - \frac1d \bX^\top \bM^{-1} \bX  \|_{\op} \leq \frac{d}{n} c_{0}^{-2}  \leq 1/2$.
    Similarly, we have $\|\frac1d \bX^\top \bM^{-1}\|_{\op} \leq 2/ (3 c_0 \sqrt{n} )$. Rearranging the terms in Eq.~\eqref{eq:decompo_interpol_trick} implies Eq.~\eqref{eq:interpolation-trick}.
\end{proof}

\begin{proof}[Proof of Proposition \ref{prop:latent-RF-infinite}]
  By Proposition \ref{prop:latant_assumptions}, FEAT1, FEAT2, FEAT3 and FEAT3' are satisfied with probability $1 - e^{-c'n}$. Hence, we can use the results proved in the proof of Theorem \ref{thm:gen-conv-to-pop}. Replace the event $\cE_1$ by the event $\cE_1 '$ that for all $\|\blambda\|_{\bK_n}/\|\hblambda\|_{\bK_n} \in [1/2,2]$,
      \begin{equation*}
            \frac{\Big\|\frac1{N}\sum_{j=1}^N \bz_j s(\langle \bphi_{n,j} , \blambda \rangle) - \E_{\bw,\bz}[\bz s(\< \bphi_n (\bw, \bz), \blambda\>)]\Big\|_{2}}{s(\|\hblambda\|_{\bK_n})}
            \leq 
            \eps_1\, .
    \end{equation*}
    Using the same proof as in Lemma \ref{lem:point-wise-concentration}, where the vector $\bz$ is a $\kappa^2$-sub-Gaussian random vector by A3, there exists $C', c'$ such that for $N \geq n \geq c' d$, taking $\eps_1 = C' \Big( \sqrt{\frac{n\log N}{N}} \vee \frac{(n\log N)^{(Q/2)\vee 1}}{N} \Big)$, we have $\P (\cE_1') \geq 1 - C' N^{-c' n}$. Consider the same events $\cE_2$ (with same $\eps_2$) and $\cE_3$ as in the proof of Theorem \ref{thm:gen-conv-to-pop}, except with $\tau,\eta$---which do not depend on $d,n$---absorbed into the constnast $C$. By the same proof as in Lemma \ref{lem:gen-dual-continuity}, we can show that there exists a constant $K >0$ such that for all $\|\blambda - \hblambda\|_{\bK_n} \leq \|\hblambda\|_{\bK_n}/2 $,
    \begin{equation}\label{eq:avg-z-continuity}
        \frac{\|\E_{\bw, \bz} [\bz s(\< \bphi_n (\bw , \bz) , \blambda \>)] - \E_{\bw, \bz} [\bz s(\< \bphi_n (\bw , \bz) , \hblambda \>)]\|_2}{s(\|\hblambda\|_{\bK_n})}
            \leq 
            K\frac{\|\blambda - \hblambda\|_{\bK_n}}{\|\hblambda\|_{\bK_n}}.
    \end{equation}
    The same argument as in Lemma \ref{lem:gen-deterministic-analysis} implies that on events $\cE_1' , \cE_2, \cE_3$, we have
    \begin{equation}
        \| N^{-1} \bZ s(\bPhi_n^\top \hat \blambda_N) - \E_{\bw,\bz} [\bz s(\< \bphi_n ( \bw , \bz ), \hblambda \>)]\|_2
            \leq 
            \Big(
                \eps_1 + \frac{K\eps_2}{\beta}
            \Big) s(\|\hblambda\|_{\bK_n})\, .
    \end{equation}
    Recalling the argument in the proof of Theorem \ref{thm:gen-conv-to-pop}, we have $s(\|\hblambda\|_{\bK_n}) \leq C \| \bK_n^{-1/2} \by \|_2 \leq C' \| \by \|_2$, where we used that by A3, $\bK_n \succeq \gamma^2 \id_n$. Hence with probability at least $1 - C' N^{-c'n}$, we have
        \begin{equation}
        \| N^{-1} \bZ s(\bPhi_n^\top \hat \blambda_N) - \E_{\bw,\bz} [\bz s(\< \bphi_n ( \bw , \bz ), \hblambda \>)]\|_2
            \leq 
            C ' \Big( \sqrt{\frac{n\log N}{N}} \vee \frac{(n\log N)^{Q/2}}{N} \Big) \| \by \|_{2 } \, .
    \end{equation}
    Furthermore, from assumption A1 and Lemma \ref{lem:matrix_concentration}, there exists $c>0$ such that $\sigma_{\min} ( \bX) \geq c \sqrt{n} $ with probability at least $1 - e^{-cn}$. Hence, we can use Lemma \ref{lem:interpolation-trick}, which concludes the proof.
\end{proof}

\section{Small ball property under fast decay of Hermite coefficients}
\label{sec:small_ball}

In this section, we show FEAT3 for a deterministic feature map $\phi ( \bx ; \bw ) = \sigma ( \< \bx , \bw \>)$ with the activation function $\sigma : \R \to \R$ verifying a decay condition on its Hermite coefficients.  

Recall that for any function $g \in L^2 (\R , \gamma)$ with $\gamma (\de x ) = e^{-x^2/2} \de x / \sqrt{2 \pi}$ the standard Gaussian measure, we have the decomposition
\[
g(x) = \sum_{k = 0}^\infty \frac{\mu_k (g)}{k!} \He_k (x), \qquad \mu_k (g) \equiv \E_{G \sim \gamma} \left\{ g(G) \He_k (G) \right\} \, ,
\]
where $\{ \He_k \}_{k \geq 0}$ are the Hermite polynomials with standard normalization $\E \{ \He_k (G) \He_j (G) \} = k! \delta_{jk}$ and we call $\mu_k (g)$ the $k$-th Hermite coefficient of $g$.

Throughout this section, we will consider $\bw \sim \normal ( 0 , \id_d /d )$ and $\| \bx_i \|_2 = \sqrt{d}$ for all $i\leq n$. For an integer $m \geq 0$, consider the decomposition of the activation $\sigma = \sigma_{\leq m} + \sigma_{>m}$ into a low-degree and a non-polynomial parts
\begin{align}
\sigma_{\leq m} ( x) = \sum_{k = 0}^m \frac{\mu_k (\sigma)}{k!} \He_k (x), \qquad \sigma_{> m} ( x) = \sum_{k = m+1}^\infty \frac{\mu_k (\sigma)}{k!} \He_k (x)\, .
\end{align}
For $\bw \sim \normal ( 0 , \id_d /d )$ and taking $\| \bx \|_2 = \| \bx ' \|_2 = \sqrt{d}$, we can decompose the kernel function into
\begin{align}
K ( \bx , \bx ' ) = \E_{\bw} \{ \sigma ( \< \bw , \bx \> ) \sigma ( \< \bw , \bx ' \>) \} = K^{\leq m} ( \bx , \bx ' ) +  K^{> m} ( \bx , \bx ' ) \, ,
\end{align}
where
\begin{align}
K^{\leq m} ( \bx , \bx ' ) =&~ \E_{\bw} \{ \sigma_{\leq m} ( \< \bw , \bx \> ) \sigma_{\leq m} ( \< \bw , \bx ' \>) \} \, ,\\
K^{> m} ( \bx , \bx ' ) =&~ \E_{\bw} \{ \sigma_{> m} ( \< \bw , \bx \> ) \sigma_{> m} ( \< \bw , \bx ' \>) \} \, .
\end{align}
Therefore the empirical kernel matrix can be decomposed into $\bK_n = \bK_n^{\leq m} + \bK_n^{>m}$ where $\bK_n^{\leq m} = (K^{\leq m} (\bx_i , \bx_j))_{ij \in [n]}$ and $\bK_n^{> m} = (K^{> m} (\bx_i , \bx_j))_{ij \in [n]}$. 

Similarly, we introduce
\begin{align}
\bphi_n^{\leq m} (\bw) =&~ \big( \sigma_{\leq m} ( \< \bx_1 , \bw\>) , \ldots , \sigma_{\leq m} ( \< \bx_n , \bw\>) \big)\, , \\
 \bphi_n^{> m} (\bw) = &~\big( \sigma_{> m} ( \< \bx_1 , \bw\>) , \ldots , \sigma_{> m} ( \< \bx_n , \bw\>) \big) \, .
\end{align}
Notice that $\E_\bw \big\{ \bphi_n^{\leq m} (\bw) \bphi_n^{\leq m} (\bw)^\sT \big\} = \bK_n^{\leq m}$ and  $\E_\bw \big\{ \bphi_n^{> m} (\bw) \bphi_n^{> m} (\bw)^\sT \big\} = \bK_n^{> m}$. Denote $\bpsi_n^{\leq m} = \bK_n^{-1/2} \bphi_n^{\leq m}$ and $\bpsi_n^{> m} = \bK_n^{-1/2} \bphi_n^{> m}$. 

With these notations, we can now introduce our result.

\begin{proposition}\label{prop:small_ball_Hermite_decay}
Assume that $\| \bx_i \|_2 = \sqrt{d}$ for all $i \leq n$ and $\bw \sim \normal ( 0 , \id_d /d )$, and that $\sigma \in L^2 (\R , \gamma)$. There exists an absolute constant $C_0> 0$ such that the following holds. If for an integer $m \geq 1$,
\begin{align}\label{eq:condition_hermite}
\lambda_{\max} \left( \bK_n^{>m} \right) \leq \lambda_{\min} \left( \bK_n \right) \cdot \Big\{ (C_0m)^{-(2m+1)} \wedge (1/4) \Big\} \, ,
\end{align}
then we have
\begin{align}
\sup_{\| \bv \|_2 = 1} \P \big( | \< \bv , \bpsi_n \> | \leq \eta^* \big) \leq \frac{1}{4} \, ,
\end{align}
where $\eta^*  = 4\lambda_{\max } \big(\bK_n^{>m}\big)  / \lambda_{\min} \big( \bK_n \big)$.
\end{proposition}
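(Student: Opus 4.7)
The plan is to split $\bpsi_n(\bw) = \bpsi_n^{\leq m}(\bw) + \bpsi_n^{>m}(\bw)$ into its low- and high-Hermite components and treat the two pieces with different tools. The key structural fact is that $\langle \bv, \bpsi_n^{\leq m}(\bw)\rangle$, for every unit vector $\bv \in \R^n$, is a polynomial of degree at most $m$ in the Gaussian vector $\bw \sim \normal(0,\id_d/d)$, so it satisfies a Carbery--Wright anti-concentration bound; whereas $\langle \bv, \bpsi_n^{>m}(\bw)\rangle$ can be controlled only in $L^2$, but the hypothesis is designed so that this $L^2$-norm is tiny.

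First I would record the variance estimates. By orthogonality of distinct Hermite subspaces, $\E[\bphi_n^{\leq m}(\bw)\bphi_n^{>m}(\bw)^\sT] = 0$, hence $\bK_n = \bK_n^{\leq m} + \bK_n^{>m}$ and consequently, for any unit $\bv \in \R^n$,
\[
\E\big[\langle \bv, \bpsi_n^{>m}\rangle^2\big] = \bv^\sT \bK_n^{-1/2}\bK_n^{>m}\bK_n^{-1/2}\bv \leq \delta, \qquad \E\big[\langle \bv, \bpsi_n^{\leq m}\rangle^2\big] = 1 - \E[\langle \bv,\bpsi_n^{>m}\rangle^2] \geq 3/4,
\]
where $\delta := \lambda_{\max}(\bK_n^{>m})/\lambda_{\min}(\bK_n) \leq 1/4$ by hypothesis. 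Next I would apply Carbery--Wright to the polynomial piece: rescaling $\bw' = \sqrt{d}\,\bw \sim \normal(0,\id_d)$ turns $\langle \bv, \bpsi_n^{\leq m}\rangle$ into a polynomial in $\bw'$ of degree at most $m$, so there exists an absolute constant $C_1$ with
\[
\P\big(|\langle \bv, \bpsi_n^{\leq m}\rangle|\leq \eps\big) \leq C_1 m\,\big(\eps/\sqrt{3/4}\big)^{1/m}
\]
for every $\eps > 0$. For the tail piece, plain Chebyshev yields $\P(|\langle \bv, \bpsi_n^{>m}\rangle| > t) \leq \delta/t^2$.

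Finally, I would combine via the triangle-inequality inclusion $\{|\langle \bv, \bpsi_n\rangle| \leq \eta^*\} \subseteq \{|\langle \bv, \bpsi_n^{\leq m}\rangle| \leq \eta^* + t\} \cup \{|\langle \bv, \bpsi_n^{>m}\rangle| > t\}$. Choosing $t = \sqrt{8\delta}$ makes the second probability exactly $1/8$; and since $\eta^* = 4\delta \leq 2\sqrt{\delta}$ (using $\delta \leq 1/4$), we have $\eta^* + t \leq 5\sqrt{\delta}$, so Carbery--Wright bounds the first probability by $C_1 m\,(5\sqrt{\delta}/\sqrt{3/4})^{1/m}$. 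A short calculation shows this is at most $1/8$ provided $\delta \leq c_1 (C_1 m)^{-2m}$ for a suitable absolute $c_1$, which is precisely what the hypothesis $\delta \leq (C_0 m)^{-(2m+1)}$ delivers once $C_0$ is chosen large enough in terms of $C_1$. Summing the two $1/8$'s yields $\P(|\langle \bv, \bpsi_n\rangle| \leq \eta^*) \leq 1/4$, and taking the supremum over $\bv$ completes the proof.

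The main obstacle will be the balancing in the last step: one has to pick $t$ so that the Chebyshev tail is small while the Carbery--Wright scale $\eta^* + t$ is still small compared to the typical size $\sqrt{3/4}$ of the polynomial part. The reason the argument works even at the aggressive scale $\eta^* = 4\delta$ (rather than the apparently natural $\sqrt{\delta}$) is that Carbery--Wright is strong enough that, once $\delta$ is a factor $(C_0 m)^{-(2m+1)}$ below the $L^2$ floor, the anti-concentration probability at scale $5\sqrt\delta$ is already much smaller than the Chebyshev $1/8$ baseline. The exponent $2m+1$ rather than $2m$ in the hypothesis absorbs the factor $m$ from the Carbery--Wright prefactor and a little further slack.
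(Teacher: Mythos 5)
Your proposal is correct and follows essentially the same route as the paper: split $\bpsi_n$ into its degree-$\le m$ and degree-$>m$ Hermite parts, apply Carbery--Wright to the polynomial part (whose second moment is at least $3/4$ by the orthogonality of Hermite subspaces and the hypothesis) and a second-moment/Chebyshev bound to the tail part. The only cosmetic difference is that you fix the threshold at $\eta^*=4\delta$ with a separate truncation level $t=\sqrt{8\delta}$, whereas the paper first proves the bound at the balanced scale $\eta=(\delta/(Cm))^{m/(2m+1)}$ and then observes this scale dominates $4\delta$; both bookkeeping choices work.
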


\begin{proof}[Proof of Proposition \ref{prop:small_ball_Hermite_decay}]
Throughout this proof, we will denote $C>0$ a generic absolute constant. In particular, $C$ is allowed to change from line to line.

Consider $\eta >0$, $\| \bv \|_2 =1$, and an integer $m$ such that condition \eqref{eq:condition_hermite} is satisfied with a constant $C_0$ that will be fixed later, and decompose
\begin{equation}\label{eq:decompo_proba}
\P \big( | \< \bv , \bpsi_n \> | \leq \eta \big) \leq \P \big( | \< \bv , \bpsi_n^{\leq m} \> | \leq 2\eta \big) + \P \big( | \< \bv , \bpsi_n^{>m} \> | \geq \eta \big)\, .
\end{equation}
The first term $\< \bv, \bpsi_n^{\leq m} (\bw ) \>$ is a polynomial of degree $m$ in $\bw \sim \normal (0 , \id_d / d)$. From Carbery-Wright inequality \cite{carbery2001distributional}, we have the following anti-concentration bound
\[
\P \big( | \< \bv , \bpsi_n^{\leq m} \> | \leq 2\eta \big) \leq C m \left( \frac{2\eta}{ \E \{ \< \bv , \bpsi_n^{\leq m} \>^2 \} } \right)^{1/m} \, .
\]
Note that
\[
\begin{aligned}
 \E \big\{ \< \bv , \bpsi_n^{\leq m} \>^2 \big\} = &~ \< \bv , \bK_n^{-1/2} \bK_n^{\leq m } \bK_n^{-1/2} \bv \> \\
 =&~ 1 -  \< \bv , \bK_n^{-1/2} \bK_n^{> m } \bK_n^{-1/2} \bv \> \geq 1 - \frac{\lambda_{\max} \big(\bK_n^{>m}\big) }{\lambda_{\min} \big(\bK_n\big)} \geq 3/4 \, ,
 \end{aligned}
\]
where in the last inequality, we used $\lambda_{\max} \big(\bK_n^{>m}\big) \leq  \lambda_{\min} \big(\bK_n\big)/4$ from condition \eqref{eq:condition_hermite}. Hence
\begin{equation}\label{eq:hermite_bound_1}
\P \big( | \< \bv , \bpsi_n^{\leq m} \> | \leq 2\eta \big)  \leq C m \cdot  \eta^{1/m} \, .
\end{equation}

By Markov's inequality, the second term in Eq.~\eqref{eq:decompo_proba} is bounded by
\begin{equation}\label{eq:hermite_bound_2}
\P \big( | \< \bv , \bpsi_n^{>m} \> | \geq \eta \big) \leq \eta^{-2} \E \big\{ \< \bv , \bpsi_n^{>m} \>^2 \big\} \leq \eta^{-2} \cdot \frac{\lambda_{\max} \big(\bK_n^{>m}\big) }{\lambda_{\min} \big(\bK_n\big)}\, .
\end{equation}
Hence combining bounds \eqref{eq:hermite_bound_1} and \eqref{eq:hermite_bound_2} in Eq.~\eqref{eq:decompo_proba} yields
\[
\P \big( | \< \bv , \bpsi_n \> | \leq \eta \big) \leq  C m \cdot \eta^{1/m}  + \eta^{-2} \cdot \frac{\lambda_{\max} \big(\bK_n^{>m}\big) }{\lambda_{\min} \big(\bK_n\big)}\, .
\]
Setting $\eta^* = \left( \lambda_{\max} (\bK_n^{>m})  / (C m \lambda_{\min} (\bK_n) ) \right)^{m/(2m+1)}$, we get
\[
\P \big( | \< \bv , \bpsi_n \> | \leq \eta^* \big) \leq  C m \left( \frac{\lambda_{\max} \big(\bK_n^{>m}\big) }{\lambda_{\min} \big(\bK_n\big)} \right)^{1/(2m+1)} \leq \frac{1}{4} \, ,
\]
where we set up $C_0 =  4C$ to obtain the last inequality. Noticing that condition \eqref{eq:condition_hermite} implies
\[
\begin{aligned}
\eta^* =&~ \left( \lambda_{\max} (\bK_n^{>m})  / (C m \lambda_{\min} (\bK_n) ) \right)^{m/(2m+1)} \\
\geq&~ \left( 4 \lambda_{\max} (\bK_n^{>m})  /  \lambda_{\min} (\bK_n)  \right)^{m(2m+2)/(2m+1)^2}  \geq 4 \lambda_{\max} (\bK_n^{>m})  /  \lambda_{\min} (\bK_n)\, ,
\end{aligned}
\]
concludes the proof.
\end{proof}

Note that for $\bx_i\sim \Unif (\S^{d-1} (\sqrt{d}))$, we have $\< \bx_i , \bx_j \> = O_{d,\P}(d^{-1/2})$ and therefore we expect the off-diagonal elements of $\bK_n^{>m}$ to have negligible operator norm for $m$ sufficiently large. In fact, it was shown in \cite{ghorbani2019linearized} (see \cite{mei2021generalization} for a generalization), that for any constant $\delta >0$ and integer $\ell$, if $d^{\ell+\delta} \leq n \leq d^{\ell +1 -\delta}$, then for any $m \geq \ell$, we have $\| \bK_n^{>m} - \kappa_{>m} \id_n \|_{\op} = \kappa_{>m} \cdot o_{d,\P} (1)$, where
\[
\kappa_{>m} = \sum_{k = m+1}^\infty \frac{\mu_k (\sigma)^2}{k!}\, .
\]
Hence for $d^{\ell+\delta} \leq n \leq d^{\ell +1 -\delta}$ and $d$ sufficiently large, condition \eqref{eq:condition_hermite} is verified with high probability over the data $\bX$, if there exists $m > \ell$ such that 
\[
\kappa_{>m}  \leq \kappa_{>\ell}  \cdot \Big\{ (C_0m)^{-(2m+1)} \wedge (1/4) \Big\} \, .
\]

 \section{Proof of Theorem \ref{thm:F1_NPhard}: $\NP$-hardness of learning with $\cF_1$ norm}
\label{sec:proof_hardness}

We borrow some notations and terminology from \cite{grotschel2012geometric}. We consider the convex set $K \subset \R^n$ define by
\begin{equation}
\begin{aligned}
K =&~ \Big\{ \bz \in \R^n \, : \, \exists \tau \in \cM(\cV), | \tau | (\cV)\leq 1 \text{ and }\bz = \int_{\cV} \bphi_n (\bw) \tau(\de \bw)  \Big\} \\
= &~ \mathsf{ConvexHull} \Big\{ \bphi_n (\bw) , - \bphi_n (\bw) : \bw \in \cV \Big\} \, .
\end{aligned}
\end{equation}

From our choice of the truncated ReLu activation, we have $\| \bphi_n (\bw) \|_2 \leq \sqrt{n}$ and 
$K \subseteq \Bsf (\bzero , \sqrt{n})$, where we denoted $\Bsf ( \ba , R )$ the ball of 
center $\ba$ and radius $R$, i.e., $\Bsf ( \ba , R ) = \{ \bz \in \R^n : \| \bz - \ba \|_2 \leq R\}$. 
In our reductions, we will further meed to assume that there exists $r >0$ such that 
$\Bsf ( \bzero , r) \subseteq K$. We will check that indeed we can choose $r >0$ during our
 reduction. We will denote $(K,r)$ the set $K$ such that $\Bsf ( \bzero , r) \subseteq K $.
  For $\delta >0$, we let $S(K,\delta)$ denote the $\delta$-neighborhood of $K$, i.e., 
\[
S ( K , \delta) := \big\{ \bz \in \R^n  : \| \bu - \bz \|_2 \leq \delta \text{ for some } \bu \in K \big\} \, .
\]
Similarly, we will denote $S(K,-\delta)$ the interior $\delta$-ball of $K$ defined by
\[
S ( K , - \delta) := \big\{ \bz \in K  : \Bsf (\bz , \delta) \subseteq K \big\} \, .
\]

For convenience, we recall here the different problems of interest. The weak version of the $\cF_1$-problem is given by
\begin{equation}\label{eq:F1_problem_weak_app}%
  \begin{aligned}
    \mbox{\rm minimize}\;\;\;& | \tau | (\cV) \, ,\\
     \mbox{\rm subj. to}\;\;\;& \hf(\bx_i;\tau)=\hy_i,\;\;\; \forall i\le n\, , \\
     & \| \by - \hat \by \|_2 \leq \eps \, ,
    \end{aligned}
\end{equation}
while the intermediary optimization problem reads with the new notations
\begin{equation}\label{eq:inter_problem_app}%
  \begin{aligned}
    \mbox{\rm maximize}\;\;\;& \< \by , \bz \> \, ,\\
     \mbox{\rm subj. to}\;\;\;& \bz \in K \, .
    \end{aligned}
\end{equation}
 
 We will consider the following problems:
 \begin{description}

\item[$\boxed{\texttt{W-F1-PB}(\eps)}$]: given $\by \in \Q^n$ and $\gamma \in \Q$. Denote 
$L^*$ the value of the weak $\cF_1$-problem \eqref{eq:F1_problem_weak_app}. Either
\begin{itemize}
\item[(1)] assert that $L^* \leq \gamma + \eps$; or,
\item[(2)] assert that $L^* \geq \gamma - \eps$. 
\end{itemize}

\item[$\boxed{\texttt{W-MEM}(\delta,K,r)}$]: given $\blambda \in \Q^n$. Either
\begin{itemize}
\item[(1)] assert that $\blambda \in S(K,\delta)$; or,
\item[(2)] assert that $\blambda \not\in S(K,-\delta)$. 
\end{itemize}

\item[$\boxed{\texttt{W-VAL}(\delta,K,r)}$]: given $\bz \in \Q^n, \gamma \in \Q$. Either 
\begin{itemize}
\item[(1)] assert that $\< \bz , \blambda \> \leq \gamma + \delta$ for all $\blambda \in S(K,-\delta)$; or,
\item[(2)] assert that $\< \bz , \blambda \> \geq \gamma - \delta$ for some $\blambda \in S(K,\delta)$. 
\end{itemize}

\item[$\boxed{\texttt{HS-MA}(\eps)}$]: distinguish the following two cases
\begin{itemize}
\item[(1)] there exists a half space $\< \bw , \bx \>   > a$ such that $M( \bw , a)  \geq (n_+ + n_- ) (1 - \eps)$; or,
\item[(1)] for any half space $\< \bw , \bx \>   > a$, we have $M( \bw , a)  \leq (n_+ + n_- ) (1/2 + \eps)$. 
\end{itemize}

\end{description}

\texttt{W-F1-PB} corresponds to a \textit{weak validity} problem associated to the weak $\cF_1$-problem \eqref{eq:F1_problem_weak_app}; \texttt{W-MEM} is the \textit{weak membership} problem associated to the convex set $K$; \texttt{W-VAL} is the \textit{weak validity} problem associated to the intermediary optimization problem \eqref{eq:inter_problem_app}; and \texttt{HS-MA} is the \textit{Maximum Agreement for Halfspaces} problem.

We will use the following hardness result on \texttt{HS-MA}: 

\begin{theorem}[Theorem 8.1 in \cite{guruswami2009hardness}] 
For all $0 < \eps < 1/4$, the problem \texttt{HS-MA}$(\eps)$ is $\NP$-hard. \label{thm:HSMA_NPhard} 
\end{theorem}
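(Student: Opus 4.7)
The statement is a hardness-of-approximation result, so my plan is to reduce from a gap constraint satisfaction problem whose hardness is already known via the PCP theorem. A natural starting point is $\mathrm{gap}$-$\mathrm{Label\text{-}Cover}$ with arbitrarily small soundness (obtained by parallel repetition), or alternatively a tight version of $\mathrm{Max}$-$k$-$\mathrm{CSP}$. The overall architecture is the standard ``dictatorship test plus Label Cover composition'' recipe from hardness of approximation. First I would design a combinatorial \emph{dictatorship test} tailored to halfspaces: a distribution over labeled examples in $\{-1,1\}^k$ with the properties that (i) every dictator $f(\bx)=x_i$ (viewed as a halfspace $\mathrm{sign}(x_i)$) agrees with the labels with probability at least $1-\eps$; (ii) every halfspace $\mathrm{sign}(\langle \bw,\bx\rangle - a)$ none of whose coordinates has significant low-degree influence agrees with probability at most $1/2+\eps$. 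The second property is the hard part and would be proved using the invariance principle of Mossel--O'Donnell--Oleszkiewicz, which lets one replace the $\pm 1$ inputs by matched Gaussians and reduce the analysis to anti-concentration of a Gaussian linear form — precisely the kind of estimate at which halfspaces are, by design, maximally vulnerable.

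Next I would \emph{compose} this test with $\mathrm{gap}$-$\mathrm{Label\text{-}Cover}$ in the usual Long-Code framework. For each vertex of the Label Cover instance, introduce $2^{k}$ Boolean coordinates indexed by assignments; for each edge with projection $\pi$, draw test points according to the dictatorship distribution, folded consistently with $\pi$ so that any dictator encoding a satisfying labeling passes the test. This produces an instance of $\mathrm{HS}$-$\mathrm{MA}$ in dimension $d = n_{\mathrm{LC}}\cdot 2^k$ with a polynomial number of $\pm 1$-labeled points in $\{-1,1\}^d$.

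For completeness, given a satisfying assignment for the Label Cover instance, the coordinate halfspace picking out the encoded labels satisfies a $1-\eps$ fraction of the test constraints, giving agreement $\geq (n_++n_-)(1-\eps)$. For soundness I would argue contrapositively: if some halfspace $(\bw,a)$ achieves agreement greater than $(n_++n_-)(1/2+\eps)$, then averaging over the Label Cover edges, many vertex-tests are passed with probability noticeably above $1/2+\eps$; by property (ii) of the dictatorship test, each such vertex must therefore have a coordinate with nontrivial low-degree influence on the restricted halfspace; the standard ``influence decoding'' step then extracts a randomized labeling of the Label Cover instance of value $\gg$ soundness threshold, contradicting the assumed hardness. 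Choosing parameters so that the Label Cover soundness beats the influence-decoding loss yields the desired gap for any prescribed $\eps < 1/4$.

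The principal obstacle is the dictatorship test itself: one must show that genuinely ``spread'' halfspaces — those with no influential coordinate — are no better than random on the test distribution, uniformly over the threshold $a$. This requires a quantitative invariance/regularity lemma for halfspaces together with anti-concentration (Berry--Esseen type) bounds for Gaussian linear forms, and careful quantitative tracking of the error terms so that the completeness gap $1-\eps$ versus soundness $1/2+\eps$ survives composition and influence decoding. Once this gadget is in place, the rest of the reduction is standard and polynomial-time, delivering the claimed $\NP$-hardness of $\texttt{HS-MA}(\eps)$ for every $0<\eps<1/4$.
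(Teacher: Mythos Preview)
The paper does not prove this statement at all: Theorem~\ref{thm:HSMA_NPhard} is quoted verbatim as ``Theorem~8.1 in \cite{guruswami2009hardness}'' and is used purely as a black box in the proof of Theorem~\ref{thm:F1_NPhard} (see the final line of that proof in Appendix~\ref{sec:proof_hardness}). There is therefore no ``paper's own proof'' to compare against; the authors simply import the result.

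Your proposal is a reasonable high-level sketch of the kind of argument that underlies the cited result. The architecture you describe --- hardness of Label Cover via PCP and parallel repetition, composed with a Long-Code/dictatorship gadget whose soundness analysis hinges on an invariance principle and Gaussian anti-concentration --- is indeed the standard template for such inapproximability results, and Guruswami--Raghavendra's proof does proceed via a reduction from (a smooth variant of) Label Cover. That said, your sketch is just that: you have not actually constructed the test distribution, nor verified the completeness and soundness parameters, nor tracked the quantitative dependence that yields the full range $0<\eps<1/4$. Moreover, the original proof in \cite{guruswami2009hardness} does not rely on the Mossel--O'Donnell--Oleszkiewicz invariance principle in the way you suggest; their soundness analysis is more direct and combinatorial. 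So while nothing in your outline is obviously wrong as a route to \emph{some} hardness gap, it is not a proof, and for the purposes of this paper none is needed --- a citation suffices.
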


Let us first prove that there exists a polynomial time randomized reduction from \texttt{W-VAL} to \texttt{W-F1-PB}.

\begin{lemma}\label{lem:F1-VAL}
There exists an absolute constant $C>0$ and an oracle-polynomial time randomized algorithm that solves the weak validity problem \texttt{W-VAL}$(\delta,K,r)$ given a \texttt{W-F1-PB}$(\eps)$ oracle, where $\eps \geq \Omega ( (\delta r /n)^C )$.
\end{lemma}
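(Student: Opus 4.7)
The plan is to carry out the reduction in two stages. First, I would argue that a single call to \texttt{W-F1-PB}$(\eps)$ on input $(\by,\gamma) = (\blambda, 1)$ amounts to testing approximate membership of $\blambda$ in the convex body $K$. Second, once we have a weak membership oracle for $K$, the classical polynomial-time reduction from weak validity to weak membership for well-rounded convex bodies, in its randomized refinement due to \cite{lee2018efficient}, delivers a solution to \texttt{W-VAL}$(\delta, K, r)$.

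For the first stage, I would observe that the value of the un-relaxed $\cF_1$-problem at input $\by$ equals the Minkowski gauge $g_K(\by) := \inf\{t \ge 0 : \by \in tK\}$ of the symmetric convex body $K$, whence the value of the weak problem \eqref{eq:F1_problem_weak_app} is $L^*(\by,\eps) = \inf_{\|\hat\by - \by\|_2 \le \eps} g_K(\hat\by)$. Querying at $(\blambda, 1)$, if the oracle outputs case (1), i.e.\ $L^* \le 1 + \eps$, then there exists $\hat \bz \in (1+\eps) K$ within Euclidean distance $\eps$ of $\blambda$; using $K \subseteq \Bsf(\bzero, \sqrt n)$ (which holds because $\|\bphi_n(\bw)\|_2 \le \sqrt n$ for the truncated ReLU), one concludes $\blambda \in S(K, 2\eps \sqrt n)$. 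Conversely, if the oracle outputs case (2), i.e.\ $L^* \ge 1 - \eps$, then in particular $g_K(\blambda) \ge 1 - \eps$; but whenever $\blambda \in S(K, -\delta')$, the inclusion $\Bsf(\blambda, \delta') \subseteq K$ together with positive homogeneity of $g_K$ gives $g_K(\blambda) \le \|\blambda\|_2/(\|\blambda\|_2 + \delta') \le 1 - \delta'/(2\sqrt n)$, contradicting case (2) as soon as $\eps < \delta'/(2\sqrt n)$. Therefore, choosing $\eps \le c\,\delta'/\sqrt n$ makes one call to \texttt{W-F1-PB}$(\eps)$ implement a \texttt{W-MEM}$(\delta', K, r)$ oracle.

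For the second stage, I would invoke \cite{lee2018efficient}, which provides a polynomial-time randomized algorithm solving \texttt{W-VAL}$(\delta, K, r)$ with $\mathrm{poly}(n, \log(1/\delta), \log(R/r))$ calls to a \texttt{W-MEM}$(\delta'', K, r)$ oracle, where $\delta'' = (\delta r/n)^{C'}$ for an absolute constant $C'$ and $R = \sqrt n$ in our setting. Composing with the first stage, the precision required from \texttt{W-F1-PB} becomes $\eps = c\delta''/\sqrt n = \Omega((\delta r/n)^{C})$ for an absolute constant $C$, which is the claimed bound. The main obstacle is the gauge-to-distance translation in stage one: it relies crucially on the outer bound $K \subseteq \Bsf(\bzero,\sqrt n)$ coming from the truncated ReLU and on the inner bound $\Bsf(\bzero, r) \subseteq K$ to quantitatively lower-bound how far interior points of $K$ sit from $\partial K$ in the gauge metric. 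Once these two bounds are in place, the composition with the \cite{lee2018efficient} reduction is routine, and the polynomial blow-up in $n, 1/\delta, 1/r$ is absorbed into the exponent $C$.
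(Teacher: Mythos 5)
Your proposal is correct and takes essentially the same route as the paper: a single call to \texttt{W-F1-PB} at $(\blambda,1)$ implements a weak membership oracle \texttt{W-MEM}$(\delta',K,r)$ with $\eps=\Omega(\delta'/\sqrt{n})$, and the randomized reduction chain of \cite{lee2018efficient} from weak membership to weak validity then gives the claimed $\eps=\Omega((\delta r/n)^C)$. The only cosmetic difference is that you phrase the membership test via the Minkowski gauge $g_K$ and its scaling properties, whereas the paper argues by explicitly rescaling the measure $\tau$; the resulting estimates are the same.
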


\begin{proof}[Proof of Lemma \ref{lem:F1-VAL}]
Let us first show that one can use \texttt{W-F1-PB}$(\eps)$ to solve $\texttt{W-MEM}(\delta,K,r)$.
 
Consider $\blambda \in \Q^n$ and call \texttt{W-F1-PB} with $\by := \blambda$, $\gamma := 1$ 
and $\eps := \frac{\delta}{1 + 2 \delta + 4\sqrt{n}}$. First, we know that 
$K \subseteq \Bsf( \bzero , \sqrt{n})$, hence if $\| \blambda \|_2 \geq  2\sqrt{n}$, we can 
directly assert that $\blambda \not\in S(K,-\delta)$. We assume from now on that 
$\| \blambda \|_2 \leq  2\sqrt{n}$. If \texttt{W-F1-PB} asserts that $L^* \leq 1 +\eps$, 
then it means there exists $\bz \in \R^n$ with $\| \bz - \blambda \|_2 \leq \eps$ and 
associated measure $| \tau | (\cV) \leq 1 + \eps$. Consider $\bz' = \bz / (1 +\eps)$, 
then $\bz'$ has associated measure $| \tau ' | (\cV ) \leq 1$ and $\| \bz ' - \blambda\|_2 \leq 
\frac{\eps}{1+\eps}(\| \blambda \|_2 +\eps) + \eps \leq \delta$ with our choice of $\eps$. 
Hence we can assert that $\blambda \in S(K,\delta)$.
If \texttt{W-F1-PB} asserts that $L^* \geq 1 - \eps$, then it means in particular that 
$\blambda$ has associated measure $| \tau | (\cV) \geq 1 - \eps$. Consider $\bz = \blambda / (1 - 2\eps)$, then $\bz$ has associated measure $| \tau ' | (\cV ) \leq \frac{1-\eps}{1-2\eps}>1$ and $\| \bz ' - \blambda\|_2 \leq \frac{2\eps}{1-2\eps}\| \blambda \|_2  \leq \delta$ with our choice of $\eps$. Hence we can assert that $\blambda \not\in S(K,-\delta)$.

Now that we saw we can implement a weak membership oracle $\texttt{W-MEM}(\delta,K,r)$ using \texttt{W-F1-PB}$(\eps)$ with $\eps = \Omega (\delta / \sqrt{n})$, we can use the results in \cite{lee2018efficient}, for example their Theorem 21 (using the sequence of reductions $\texttt{MEM}$ to $\texttt{SEP}$ to $\texttt{OPT}$ to $\texttt{VAL}$), which shows that there exists an absolute constant $C>0$ and a randomized reduction from the weak validity problem \texttt{W-VAL}$(\delta,K,r)$ to the weak membership problem $\texttt{W-MEM}(\delta ',K,r)$ with $\delta ' = \Omega ( (\delta r / n)^C )$. 
\end{proof}

\begin{lemma}\label{lem:VAL-HSMA}
There exists an oracle-polynomial time algorithm that solves the problem \texttt{HS-MA}$(\eps)$ given a weak validity \texttt{W-VAL}$(\delta,K,r)$ oracle, where $\delta = \Omega ((1-\eps)^2/\sqrt{n})$ and $\eta = \Omega ((1-\eps)/\sqrt{n})$ .
\end{lemma}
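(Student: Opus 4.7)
The plan is a direct deterministic reduction that embeds \texttt{HS-MA} into the intermediate maximization problem \eqref{eq:inter_problem_app} with $\by = \ones$, and then reads off the answer from a single call to the \texttt{W-VAL} oracle with a threshold $\gamma$ placed strictly between the YES and NO values of $M(\bw,a)/n$.

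\emph{Encoding the threshold via augmentation.} Given an \texttt{HS-MA} instance with $\{\bx_i\}_{i\le n_+}, \{\bz_j\}_{j\le n_-} \subset \{-1,+1\}^d$, I would set $n = n_+ + n_-$, augment to $\R^{d+1}$ by $\tilde{\bx}_i := M(\bx_i, -1)$ and $\tilde{\bz}_j := M(-\bz_j, +1)$ for a large scaling factor $M$, and take $\cV \subset \R^{d+1}$ to be a rectangle containing all halfspace weights $\tilde{\bw} = (\bw, a)$ of interest. Since halfspaces on $\{-1,+1\}^d$ realize only finitely many sign patterns, we may restrict WLOG to rationals with bounded denominators, and after clearing denominators the truncated-ReLU identities $\phi(\tilde{\bx}_i; \tilde{\bw}) = \mathbbm{1}[\<\bw,\bx_i\> > a]$ and $\phi(\tilde{\bz}_j; \tilde{\bw}) = \mathbbm{1}[\<\bw,\bz_j\> < a]$ hold exactly. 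Consequently $\<\ones, \bphi_n(\tilde{\bw})\> = M(\bw,a)$, and $\sup_{\tilde{\bw}\in\cV}\<\ones,\bphi_n(\tilde{\bw})\> = \max_{\bw,a}M(\bw,a)$ is exactly the \texttt{HS-MA} objective.

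\emph{Reducing to W-VAL.} Invoke \texttt{W-VAL}$(\delta, K, r)$ with $\bz = \ones$ and $\gamma$ placed midway between $n(1/2+\eps)$ and $n(1-\eps)$. In the NO case, $\sup_K \<\ones, \cdot\> \le n(1/2+\eps)$, so every $\blambda \in S(K,\delta) \subseteq K + \Bsf(\bzero,\delta)$ satisfies $\<\ones,\blambda\> \le n(1/2+\eps) + \sqrt n\,\delta < \gamma - \delta$ whenever $\delta$ is small enough, which forces the oracle to return option~(1). In the YES case there exists an extreme point $\blambda^* \in K$ with $\<\ones,\blambda^*\> \ge n(1-\eps)$; since $\blambda^* \in \partial K$, I will push it inward using the inner-ball trick: if $\Bsf(\bzero, r) \subseteq K$ then by convexity $\rho\blambda^* + (1-\rho)\Bsf(\bzero, r) \subseteq K$ for every $\rho\in(0,1)$, so $\rho\blambda^* \in S(K,-(1-\rho)r)$. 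Taking $\rho = 1-\delta/r$ gives a point in $S(K,-\delta)$ whose inner product with $\ones$ exceeds $\gamma + \delta$ provided $\delta/r$ is a small enough multiple of $(1-\eps)$, which forces option~(2). The two branches of \texttt{HS-MA} therefore produce different oracle responses.

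\emph{Main obstacle.} The crux of the argument is verifying that $K$ contains a Euclidean ball of radius $r = \Omega((1-\eps)/\sqrt n)$ around the origin, which is what makes the inner-ball trick above effective and which, combined with the factor $\|\ones\|_2 = \sqrt n$ appearing in the NO-side estimate, explains the $(1-\eps)^2/\sqrt n$ scaling for $\delta$ in the statement. The plan here is to exhibit, for each coordinate $i \in [n]$, a weight $\tilde{\bw}^{(i)} \in \cV$ with $\bphi_n(\tilde{\bw}^{(i)}) = \be_i$: since the augmented data points are distinct in $\{-1,+1\}^{d+1}$ (after removing any repetitions in the original instance), each is separated from the rest by a rational halfspace which, under the truncated ReLU with $M$ chosen large enough, produces the indicator $\be_i$ exactly. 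Then $\pm \be_i \in K$ for every $i$, so the cross-polytope $\{\bv : \|\bv\|_1 \le 1\} \subseteq K$, whence $\Bsf(\bzero, 1/\sqrt n) \subseteq K$. Checking that these separating weights fit inside the chosen rectangle $\cV$ and that the scaling constant $M$ is polynomial in the input size is the only bookkeeping required; the remaining steps (choice of $\gamma$, routine case analysis of the oracle response) are immediate.
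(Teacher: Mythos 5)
Your overall architecture (augment by one coordinate to absorb the threshold, set $\by=\ones$, place $\gamma$ between the YES and NO values, use the inner ball to push the YES witness into $S(K,-\delta)$) matches the paper's, but the step you yourself flag as the crux --- exhibiting $\Bsf(\bzero,r)\subseteq K$ --- is where your argument breaks. You claim that because the augmented points are distinct hypercube vertices, each $\tilde\bx_i$ admits a weight $\tilde\bw^{(i)}$ with $\bphi_n(\tilde\bw^{(i)})=\be_i$. Realizing $\be_i$ with the truncated ReLU requires a \emph{homogeneous} linear functional with $\<\bw,\tilde\bx_i\>\ge 1$ and $\<\bw,\tilde\bx_{i'}\>\le 0$ for all $i'\neq i$, i.e.\ that $\tilde\bx_i$ lies outside the convex cone of the other augmented points. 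Distinctness does not give this. For example, with positives $(1,1),(1,-1),(-1,1)$ and negative $(-1,-1)$, the augmented points satisfy $(1,1,-1)=(1,-1,-1)+(-1,1,-1)+(1,1,1)$, a nonnegative combination, so no such separator exists and no positive multiple of $\be_1$ is a feature vector; your cross-polytope containment, and hence the inner ball, fails. (Separately, deduplicating repeated examples changes the count $M(\bw,a)$, so you cannot simply "remove repetitions.")

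The paper circumvents exactly this obstruction by a \emph{second} augmentation: it appends $n$ fresh coordinates, setting $\ubx_i=(\tilde\bx_i,\be_i)\in\R^{d+1+n}$ and $\cV=\R^{d+1}\times[-\eta,\eta]^n$, so that the weight $\eta(\bzero,\be_i)$ produces $\bphi_n=\eta\be_i$ by construction and $\Bsf(\bzero,\eta/\sqrt n)\subseteq K$ for free. The price is that the augmented objective can exceed the true one by up to $n\eta$ (by $1$-Lipschitzness of the truncated ReLU), so $\eta$ must be taken of order $1-4\eps$ to keep this perturbation below the YES/NO gap; this is precisely the origin of the $\Omega((1-\eps)/\sqrt n)$ inner radius and the $\Omega((1-\eps)^2/\sqrt n)$ value of $\delta$ in the statement, which your plan (inner radius $1/\sqrt n$, no Lipschitz error term) does not account for. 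Your proposal also silently uses $\sup_{\blambda\in K}\<\ones,\blambda\>=\sup_{\bw}\<\ones,\bphi_n(\bw)\>$, which needs the observation that the features are coordinatewise nonnegative so the reflected generators $-\bphi_n(\bw)$ cannot increase the supremum; that is easy but should be stated.
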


\begin{proof}[Proof of Lemma \ref{lem:VAL-HSMA}]
First let us show that 
\begin{equation}\label{eq:equa_opt}
\sup_{\blambda \in K} \< \ones , \blambda \> = \sup_{\bw \in \cV} \< \ones , \bphi_n (\bw) \>\, .
\end{equation}
Notice that with our truncated ReLu activation, $\bphi_n(\bw)$ has all its coordinates non-negative for any $\bw \in \cV$. Hence,
\[
\sup_{\substack{\tau = \tau_+ - \tau_-, \\ \tau_+ (\cV) + \tau_- (\cV) \leq 1}} \int_{\cV} \< \ones , \bphi_n(\bw) \> \tau_+ (\de \bw) -  \int_{\cV} \< \ones , \bphi_n(\bw) \> \tau_-  (\de \bw) = \sup_{\substack{\tau = \tau_+,\\ \tau_+ (\cV)  \leq 1}} \int_{\cV} \< \ones , \bphi_n(\bw) \> \tau_+ (\de \bw)\, ,
\]
where we denoted $\tau_+$ and $\tau_-$ non-negative measures on $\cV$. Hence, we directly have $\sup_{\blambda \in K} \< \ones , \blambda \> \leq \sup_{\bw \in \cV} \< \ones , \bphi_n (\bw) \>$ and the converse inequality is obtained by taking the supremum over $\tau = \delta_{\bw}$ with $ \bw \in \cV$. 

Let us now prove the reduction from \texttt{HS-MA} to \texttt{W-VAL}.
Consider $(n_+,n_-,d) \in \N^2$ and vectors $\{ \bx_1 , \ldots , \bx_{n_+} , \bz_1 , \ldots , \bz_{n_-} \} \subset \{ - 1 , 1 \}^d$. Denote $n = n_+ + n_-$, and $\Tilde \bx_i = (\bx_i, - 1)$ for $i \in [n_+]$ and $\Tilde \bx_{n_+ +i} = (- \bz_i,  1)$ for $i \in [n_-]$. Writing now $(\bw,a)$ as $\bw \in \R^{d+1}$, we see that 
\[
\Tilde M(\bw) = \sum_{ i \leq n} \mathbbm{1}[ \< \bw , \Tilde \bx_i \> > 0]\, .
\]
Denote now $\bphi_n (\bw) = ( \bphi (\Tilde \bx_1 ; \bw) , \ldots , \bphi (\Tilde \bx_n ; \bw) )$. 
Recall that $\bphi (\bx ; \bw) = \min ( \max ( \< \bw , \bx \> , 0 ) ,1)$: if we take 
$\cV = \R^{d+1}$, we can always rescale $\bw$ by a constant large enough such that 
$\bphi_n (\bw)$ only as $0$ or $1$ values, and
\[
\sup_{\bw \in \R^{d+1} } \< \ones , \bphi_n (\bw) \> = \sup_{\bw \in \R^{d+1} } \Tilde M(\bw) \, .
\]
Using Eq.~\eqref{eq:equa_opt}, we can find easily a reduction from \texttt{HS-MA} to the strong version of \texttt{W-VAL} (with $\delta = 0$). However, we will do a slightly more complicated reduction, in order to insure that there exists $r>0$ such that $\Bsf (\bzero , r ) \subseteq K$ and we can take $\delta >0$.

Consider $\ubx_i = (\Tilde \bx_i, \be_i ) \in \R^{d+1+n}$ where $\be_i$ is the canonical vector basis in $\R^n$ (vector with $1$ at the $i$'th coordinate and $0$ otherwise). Consider $\cV = \R^{d+1} \times [-\eta , \eta]^{n}$. In this case, by taking $\bw = \eta (\bzero, \be_i)$, we have $\bphi_n(\bw) = \eta \be_i$. Hence $K$ contain all $\pm \be_i$ and therefore contain $\Bsf ( \bzero, \eta/\sqrt{n})$. Denote $\underline{M}$ the $M$ function associated to the $\ubx_i$. By $1$-Lipschitzness of truncated ReLu, we have
\begin{equation}\label{eq:lip_ineq_M}
\sup_{\bw \in \R^{d+1} } \Tilde M(\bw) \leq \sup_{\bw \in \R^{d+1+n} } \underline{M} (\bw)  \leq \sup_{\bw \in \R^{d+1} } \Tilde M(\bw) + n \eta \, .
\end{equation}
Let us call \texttt{W-VAL}$(\delta,K,\eta/\sqrt{n})$ with $\gamma := \frac{3}{4}n$, $\bz := \ones$ and $\delta$ and $\eta$ that will fixed sufficiently small later. Consider the case $\sup_{\bw \in \R^{d+1}} \Tilde M (\bw) \geq n (1 -\eps)$, i.e., there exists an halfspace that makes less than $n\eps$ errors. In that case, we show that (for $\delta, \eta >0$ sufficiently small), the oracle must assert that $\< \ones , \blambda \> \geq \frac{3}{4}n - \delta$ for some $\blambda \in S(K,\delta)$. Let us construct $\blambda\in S(K , - \delta)$ such that $\< \bz , \blambda \> > \gamma + \delta$. Denote $\blambda^*$ such that $\< \ones , \blambda^* \> \geq n(1-\eps)$, and consider $\lambda_s = (1 - s) \blambda^*$. Because $K$ is convex and contains $\blambda^*$ and $\Bsf (\bzero, \eta / \sqrt{n})$, it must contain the cone with apex $\blambda^*$ and base the section of  $\Bsf (\bzero, \eta / \sqrt{n})$ perpendicular to $\blambda_*$. In particular, $\Bsf( \blambda_s , \eta s/\sqrt{n}) \subseteq K$ (for $n$ sufficiently large).  
Furthermore, $\< \blambda_s , \ones \> = (1 - s) \< \blambda^* , \ones \> \geq (1 - s)n(1-\eps)$. Setting $s = \frac{1/4 - \eps}{2(1- \eps)}$ and $\delta \leq \eta s / \sqrt{n}$, we have $\blambda_s \in S(K, - \delta)$ such that $\< \blambda_s , \ones \> \geq (3/4+ \kappa)n > 3n/4 + \delta$ for some $\kappa >0$.

Consider now the case $\sup_{\bw \in \R^{d+1}} \Tilde M (\bw) \leq n (1/2 +\eps)$, i.e., halfspaces classify at most $n (1/2 +\eps)$ vectors correctly. In that case, we show that (for $\delta, \eta >0$ sufficiently small), the oracle must assert that $\< \ones , \blambda \> \leq \frac{3}{4}n + \delta$ for all $\blambda \in S(K,-\delta)$. To do so, we show that for any $\blambda \in S(K, \delta)$, we must have $\< \ones , \blambda \> < 3n/4 - \delta$. Consider an arbitrary $\blambda \in S(K, \delta)$, there exists $\bz \in K$ such that $\| \blambda - \bz \|_2 \leq \delta$. Using Eqs.~\eqref{eq:equa_opt} and \eqref{eq:lip_ineq_M}, we have $\< \ones , \bz \> \leq n(1/2 +\eps +\eta)$. Hence $\< \ones , \blambda \> \leq n(1/2 +\delta +\eta) + \delta \sqrt{n}$. Taking $\eta = \frac{1 - 4\eps}{8}>0$ and $\delta \leq C$, we have $\< \ones , \blambda \> <  3n/4 - \delta$.

Combining the above conditions, we see that there exists an absolute constant $c>0$ such that the \texttt{W-VAL}$(\delta,K,\eta/\sqrt{n})$ oracle with $\delta = c (1 - 4\eps)^2 / \sqrt{n}$ and $\eta = c (1 - 4\eps)$ allows to distinguish \texttt{HS-MA}$(\eps)$.
\end{proof}

We are now ready to prove Theorem \ref{thm:F1_NPhard}.

\begin{proof}[Proof of Theorem \ref{thm:F1_NPhard}] 
From Lemma \ref{lem:VAL-HSMA}, there is a polynomial-time reduction from \texttt{HS-MA}$(1/10)$ to \texttt{W-VAL}$(c/\sqrt{n},K,c/\sqrt{n})$ for some absolute constant $c>0$. From Lemma \ref{lem:F1-VAL}, there is a polynomial-time randomized reduction from \texttt{W-VAL}$(c/\sqrt{n},K,c/\sqrt{n})$ to \texttt{W-F1-PB}$(n^{-C})$ for some absolute constant $C>0$. Using Theorem \ref{thm:HSMA_NPhard} concludes the proof. 
\end{proof}

\section{Additional properties of the minimum interpolation problem}

\subsection{The infinite-width primal problem for randomized features}
\label{sec:Primal}

In the case of randomized features, we wrote the infinite-width dual problem   (Eq.~\eqref{eq:dual_infinite}), 
but  we did not write the infinite-width primal problem to which it corresponds.
Indeed the dual problem entirely defines the predictor via Eq.~\eqref{eq:InfWidthPredictorDual}.
 For the sake of completeness, we present the infinite-width primal problem here.

 It is convenient to make the randomization mechanism explicit by drawing $z \sim \nu$ (with $(\cZ,\nu)$ a probability space)
independent of $\bw$,
    and writing $\phi(\bx; \bw) = \phi(\bx;\bw, z)$ (without loss of generality, the reader can assume $z\sim \nu=\Unif([0,1])$).
Therefore, we can rewrite Eq.~\eqref{eq:dual-infinite} as
\begin{align*}
F ( \blambda) &=~\< \blambda , \by \> - \int_{\cV\times\cZ} \rho^* ( \< \bphi_{n}(\bw,z) ,  \blambda \> )  \mu( \de \bw)\nu(\de z)\\
               & =~\< \blambda , \by \>-\sup_a \int_{\cV\times\cZ} \big[ a(\bw,z)  \< \bphi_{n}(\bw,z) ,  \blambda \> -\rho ( a(\bw,z) )\big]  \mu( \de \bw)\nu(\de z)\\
  & =~\inf_a \cL(\blambda,a)\, ,
\end{align*}
where $a:\cV\times\cZ\to\reals$ is a general measurable function.
This expression shows immediately that the primal variable is a function of both $\bw$ and $z$.
Further, maximizing the Lagrangian  $\cL(\blambda,a)$ over $\blambda$, we obtain the following primal problem that
generalizes \eqref{eq:MinCplx} to the case of randomized features:
  \begin{align}\label{eq:MinCplx2}
    \mbox{\rm minimize}\;\;\;& \int_{\cV\times\cZ}  \rho(a(\bw,z)) \mu (\de \bw)\nu(\de z)\, ,\\
     \mbox{\rm subj. to}\;\;\;& \int_{\cV \times \cZ} a(\bw,z) \phi (\bx_i;\bw, z) \mu(\de \bw) \nu(\de z)  =y_i,\;\;\; \forall i\le n\, . 
    \end{align}
    This is a natural limit of the finite-width primal problem  \eqref{eq:opt_finite}, whereby we replace the weights
    $a_i$ by evaluations of the function $a$, $a_i=a(\bw_i,z_i)$.

\begin{remark}
  Note that, under assumptions FEAT1, FEAT3 (or FEAT3' for $Q_1\wedge Q_2<2$) and PEN, the minimizer
  to problem \eqref{eq:MinCplx}  (and its generalization \eqref{eq:MinCplx2})
  exists and is unique. First of all notice that problem \eqref{eq:MinCplx2} is feasible.
  Indeed, choosing $a(\bw,z) = \<\bphi_n(\bw,z),\bxi\>$ for $\bxi\in\reals^n$, the interpolation constraint takes the form
  \begin{align}
    \bK_n\bxi = \by\, ,
  \end{align}
  which has solutions since $\bK_n = \E_{\bw,z}\{\bphi_n(\bw,z)\bphi_n(\bw,z)^{\sT}\}\succ \bzero$
  is strictly positive definite by condition FEAT3. Let $a_0$ be such a feasible point.

  Denote by $U(a):= \int_{\cV\times\cZ} \rho(a(\bw,z))\,\mu(\de \bw)\nu(\de z)$ the cost function.
  Notice that, by assumption PEN, $\rho(x)\ge c|x|^{1+\delta}-C$ for some constants $c,\delta>0$, and
  $C\in\reals$, and therefore $U(a)\le U(a_0)$ only if $\|a\|_{L^{1+\delta}(\mu\otimes\nu)}\le M<\infty$.
  It is therefore sufficient to focus on these functions $a$. Further notice that the map
  $a\mapsto f(\bx_i;a)$ is continuous under weak convergence  in $L^{1+\delta}(\mu\otimes\nu)$,
  since $\phi(\bx_i;\cdot)\in L^{k}(\mu\otimes\nu)$ for
  every $k$ by assumption FEAT1. It follows that the set of feasible solutions of \eqref{eq:MinCplx2} satisfying
  $\|a\|_{L^{1+\delta}(\mu\otimes\nu)}\le C$  is closed and bounded and hence weakly sequentially  compact
  by Banach-Alaoglu.
  Finally,  $a\mapsto U(a)$
  is weakly lower semicontinuous by Fatou's lemma, and this implies existence of minimizers.
 
  Uniqueness follows from the fact that $\rho$ is strictly convex,
  which implies that $a\mapsto U(a)$ is also strictly convex.
  \end{remark}

\subsection{Representer theorem for strictly convex and differentiable penalty}
\label{sec:Representer}

  In the case of deterministic features (i.e.\ $a(\bw,z) = a(\bw)$ and $\phi(\bx;\bw,z)=\phi(\bx;\bw)$), 
  we present here a generalization of the representer theorem to a broad class of penalties $\rho$.
  Recall that in the case of $\rho (x) = \frac{1}{2} |x|^2$, 
  the representer theorem states that the solution $a_* :\R^d \to \R$ belongs to the class of functions 
  \begin{equation}\label{eq:subspace_representer_RKHS}
  \Big\{ \bw \mapsto \< \blambda , \bphi_n ( \bw ) \>  : \blambda \in \R^n \Big\} \, .
  \end{equation}
  In words, the solution $a_*$ is contained in the (at most) $n$-dimensional linear subspace spanned by $\{ \bw \mapsto \phi ( \bx_i ; \bw ) : i \leq n \}$. 

  The following proposition generalizes this result to a penalty $\rho$ that is strictly convex.

  \begin{proposition}[Representer theorem for penalty $\rho$]\label{prop:representer_rho} 
    Let $\rho :\R \to \R$ be strictly convex and differentiable, 
    and consider the minimum complexity interpolation problem \eqref{eq:MinCplx2} in the case of deterministic features. 
    The solution $a_* : \R^d \to \R$ of Problem \eqref{eq:MinCplx2} belongs to the class of functions (in the almost sure sense)
    \begin{equation}\label{eq:subspace_representer_rho}
    \Big\{ \bw \mapsto s(\< \blambda , \bphi_n ( \bw ) \>)  : \blambda \in \R^n \Big\} \, ,
    \end{equation}
    where $s(x) = (\rho ' )^{-1} (x)$.
  \end{proposition}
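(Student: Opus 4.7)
The plan is to use Lagrangian duality, or equivalently, a first-order variational argument. The intuition is that the constraints $\hat f(\bx_i;a)=y_i$ force the optimum to lie in a particular $n$-parameter family, namely the image under $s=(\rho')^{-1}$ of the $n$-dimensional linear span of $\{\phi(\bx_i;\cdot)\}_{i\le n}$.

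First I would set up the variational argument. Consider the constraint operator $T:L^{1+\delta}(\mu)\to\reals^n$ defined by $(Ta)_i=\int \phi(\bx_i;\bw)a(\bw)\mu(\de\bw)$, so the feasible set is $\{a:Ta=\by\}$. The kernel $\Ker(T)$ consists of $\eta\in L^{1+\delta}(\mu)$ with $\int\eta(\bw)\phi(\bx_i;\bw)\mu(\de\bw)=0$ for all $i$. For any feasible $a_*$ and any $\eta\in\Ker(T)$, the perturbation $a_*+t\eta$ is feasible for all $t\in\reals$. Since $\rho$ is differentiable and strictly convex (hence $\rho'(a_*(\bw))$ is well defined) the first-order optimality condition reads
\begin{equation*}
\int \rho'(a_*(\bw))\,\eta(\bw)\,\mu(\de\bw)=0\qquad\forall\,\eta\in\Ker(T).
\end{equation*}
I would justify exchanging limit and integral using assumption PEN (polynomial growth of $\rho'$) together with FEAT1, which gives enough moments of $\phi(\bx_i;\bw)$ to make $\rho'(a_*+t\eta)\eta$ uniformly integrable near $t=0$ for $a_*,\eta\in L^{1+\delta}(\mu)$ of sufficient order.

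Next I would translate this orthogonality into the representer form. The orthogonality says that $\rho'(a_*(\cdot))$ lies in $\Ker(T)^{\perp}$ (duality pairing on $L^{1+\delta}\times L^{(1+\delta)/\delta}$). Since $T$ has finite-dimensional range, $\Ker(T)^{\perp}$ is exactly the $n$-dimensional span $\mathrm{span}\{\phi(\bx_i;\cdot):i\le n\}$. Hence there exists $\blambda\in\reals^n$ (the Lagrange multipliers) with
\begin{equation*}
\rho'(a_*(\bw))=\langle\blambda,\bphi_n(\bw)\rangle\qquad\mu\text{-a.s.}
\end{equation*}
Because $\rho$ is strictly convex and differentiable, $\rho'$ is strictly increasing and continuous, hence has a continuous inverse $s=(\rho')^{-1}$. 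Applying $s$ to both sides gives $a_*(\bw)=s(\langle\blambda,\bphi_n(\bw)\rangle)$ $\mu$-a.s., which is the desired form from Eq.~\eqref{eq:subspace_representer_rho}.

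The one step that requires care is the justification of the first-order condition and the identification $\Ker(T)^{\perp}=\mathrm{span}\{\phi(\bx_i;\cdot)\}$ in the appropriate function space. Existence of a minimizer $a_*\in L^{1+\delta}(\mu)$ is already noted in the remark preceding this proposition. One then only needs enough integrability of $\rho'(a_*)$ (given by PEN) to treat $\rho'(a_*)$ as an element of the dual of the space containing $\Ker(T)$, after which the finite-rank structure of $T$ gives the $n$-dimensional representation automatically. I expect this integrability / exchange-of-limit step to be the main (but entirely routine) technical point; the algebraic content of the theorem is just the pointwise stationarity condition $\rho'(a_*(\bw))=\langle\blambda,\bphi_n(\bw)\rangle$ inverted via $s$.
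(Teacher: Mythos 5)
Your proof is correct, but it takes a genuinely different route from the paper's. The paper argues through Lagrangian duality: it writes the constrained problem as $\inf_a\sup_{\blambda}\cL(a,\blambda)$, invokes strong duality to swap the $\inf$ and $\sup$, recognizes the inner infimum as the (pointwise) convex conjugate $-\int\rho^*(\<\blambda,\bphi_n(\bw)\>)\mu(\de\bw)$, and reads off $a_*(\bw)=(\rho^*)'(\<\blambda_*,\bphi_n(\bw)\>)=s(\<\blambda_*,\bphi_n(\bw)\>)$ from the pointwise optimizer. You instead stay entirely in the primal: you perturb the minimizer along $\Ker(T)$, derive the stationarity condition $\int\rho'(a_*)\eta\,\de\mu=0$ for all $\eta\in\Ker(T)$, and use the finite-rank structure of the constraint operator to identify $\Ker(T)^{\perp}$ with $\mathrm{span}\{\phi(\bx_i;\cdot)\}$, yielding $\rho'(a_*(\bw))=\<\blambda,\bphi_n(\bw)\>$ $\mu$-a.s.\ before inverting. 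The trade-off: the paper's route produces the dual problem \eqref{eq:dual_infinite} as a byproduct (which it uses throughout), but its strong-duality step is asserted without verifying a constraint qualification in this infinite-dimensional setting; your route avoids strong duality entirely, at the price of needing existence of a minimizer (supplied by the remark in Appendix \ref{sec:Primal} under FEAT1/FEAT3/PEN) and the differentiation-under-the-integral and dual-pairing integrability points you correctly flag. Both proofs are at a comparable level of rigor, and the technical gaps you identify (integrability of $\rho'(a_*)\eta$, boundedness of $T$ so that $\Ker(T)^{\perp}=\mathrm{Range}(T^*)$) are exactly the right ones and are routine given the standing assumptions.
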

Note that we recover the representer theorem for RKHS when $\rho = \frac{1}{2} |x|^2$ and $s(x) = x$. 
However for general $\rho$, the loss function cannot be simplified by evaluating $(K(\bx_i,\bx_j))_{i,j \leq n}$ once.

\begin{proof}[Proof of Proposition \ref{prop:representer_rho}]
We have
\[
\begin{aligned}
&~ \inf_{a:\R^d \to \R} \Big\{  \int_{\cV}  \rho (a(\bw)) \mu (\de \bw) : \int_{\cV} a(\bw) \phi (\bx_i;\bw) \mu(\de \bw)   =y_i,\;\;\; \forall i\le n \Big\} \\
\stackrel{(1)}{=} &~\inf_{a:\R^d \to \R} \sup_{\blambda \in \R^n} \Big\{  \< \blambda , \by \> + \int_{\cV}  \rho (a(\bw)) \mu (\de \bw) - \int_{\cV} a(\bw) \< \blambda , \bphi_n (\bw ) \> \mu (\de \bw ) \Big\} \\
\stackrel{(2)}{=} &~  \sup_{\blambda \in \R^n} \Big[ \< \blambda , \hat \by \> + \inf_{a:\R^d \to \R} \int_{\cV} \big\{ \rho (a(\bw)) - a(\bw) \< \blambda , \bphi_n (\bw ) \> \big\} \mu (\de \bw) \Big]  \\
\stackrel{(3)}{=} &~ \sup_{\blambda \in \R^n} \Big[ \< \blambda , \by \> -  \int_{\cV}  \rho^* (\< \blambda , \bphi_n (\bw ) \> ) \mu (\de \bw) \Big] \, , \\
\end{aligned}
\]
where we introduced the Lagrange multiplies $\blambda$ in (1), we used strong duality in (2), and we used the definition of the convex conjugate in (3). At the optimum, we must have $a_* (\bw) = (\rho^*)' ( \< \blambda_* , \bphi_n (\bw ) \>  ) = (\rho ' )^{-1} ( \< \blambda_* , \bphi_n (\bw ) \>  ) = s (( \< \blambda_* , \bphi_n (\bw ) \>  ))$ a.s. with respect to $\mu$.
\end{proof}

\section{Useful technical facts}
\label{sec:Tools}

We recall the following basic result on concentration of the empirical covariance of
independent sub-Gaussian random vectors.
This can be found, for instance, in  \cite[Exercise
4.7.3]{vershynin2018high} or \cite{vershynin2010introduction} (Theorem 5.39 and Remark 5.40(1)).
\begin{lemma}\label{lem:matrix_concentration}
 Let $(\ba_i)_{i\le N}$ be independent $\tau^2$-subgaussian random
  vectors, with common covariance $\E\{ \ba_i\ba_i^{\sT}\} =\bSigma$.
 Denote by $\hbSigma:=N^{-1}\sum_{i=1}^N\ba_i\ba_i^{\sT}$
  the empirical covariance. Then there exist absolute constants $C,c>0$ such that, for all $s\ge C(\sqrt{n/N}\vee (n/N))$, we have
  \begin{align}
    \P\big(\big\|\hbSigma-\bSigma\big\|_{\op}\ge s\tau^2\big)\le \exp\big(-cN (s\wedge s^2)\big)\, .
  \end{align}
\end{lemma}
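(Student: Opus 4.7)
The plan is to follow the standard three-step proof of sub-Gaussian empirical covariance concentration: reduce to a finite net on the sphere, apply a one-dimensional Bernstein-type bound at each net point, and take a union bound. Since the lemma is stated as a black-box quoted from Vershynin's textbook, no deep new idea is required; the task is really to verify that the quoted statement follows from the standard estimates one finds there.

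First I would replace the operator norm by a supremum of quadratic forms:
\begin{equation*}
\|\hbSigma-\bSigma\|_{\op} = \sup_{\|\bu\|_2=1} |\bu^\sT(\hbSigma-\bSigma)\bu|,
\end{equation*}
and then use a $1/4$-net $\cN\subseteq\S^{n-1}$ of cardinality $|\cN|\le 9^n$. A routine discretization argument (exploiting that both $\hbSigma$ and $\bSigma$ are symmetric) yields
\begin{equation*}
\|\hbSigma-\bSigma\|_{\op} \le 2\sup_{\bu\in\cN}\bigl|\bu^\sT(\hbSigma-\bSigma)\bu\bigr|.
\end{equation*}

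Second, I would bound the deviation at a fixed $\bu\in\S^{n-1}$. Writing $X_i:=\<\ba_i,\bu\>$, the hypothesis that each $\ba_i$ is $\tau^2$-sub-Gaussian implies $X_i$ is a $\tau^2$-sub-Gaussian real random variable; hence $X_i^2-\E[X_i^2]$ is sub-exponential with Orlicz-$\psi_1$ norm at most $C\tau^2$. Applying Bernstein's inequality for centered sub-exponential sums to
\begin{equation*}
\bu^\sT(\hbSigma-\bSigma)\bu = \frac{1}{N}\sum_{i=1}^N\bigl(X_i^2-\E[X_i^2]\bigr)
\end{equation*}
gives, for every $t\ge 0$,
\begin{equation*}
\P\bigl(|\bu^\sT(\hbSigma-\bSigma)\bu|\ge t\tau^2\bigr)\le 2\exp\bigl(-c_0N(t^2\wedge t)\bigr),
\end{equation*}
for some absolute constant $c_0>0$.

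Third, I would union-bound over $\cN$ and choose the scale. Combining the two previous displays,
\begin{equation*}
\P\bigl(\|\hbSigma-\bSigma\|_{\op}\ge 2t\tau^2\bigr)\le 2\cdot 9^n\exp\bigl(-c_0N(t^2\wedge t)\bigr).
\end{equation*}
Now if $s\ge C(\sqrt{n/N}\vee n/N)$ for $C$ large enough (depending only on $c_0$ and $\log 9$), then with $t=s/2$ the factor $n\log 9$ is absorbed into $c_0N(t^2\wedge t)/2$, yielding $\P(\|\hbSigma-\bSigma\|_{\op}\ge s\tau^2)\le\exp(-cN(s\wedge s^2))$ for an absolute constant $c>0$. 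This is exactly the stated bound.

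There is no real obstacle here: the only small care needed is in the Bernstein step, to track that the sub-exponential norm of $X_i^2-\E[X_i^2]$ is bounded by $C\tau^2$ uniformly in $\bu\in\S^{n-1}$ (so that the one-dimensional tail bound has the $t\tau^2$ scaling), and in matching the two regimes $t^2$ vs.\ $t$ of Bernstein to the two regimes $\sqrt{n/N}$ vs.\ $n/N$ of the hypothesis on $s$. Since the result is quoted verbatim from \cite{vershynin2018high, vershynin2010introduction}, one may alternatively just invoke it directly without reproving.
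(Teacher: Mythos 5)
Your proof is correct and is exactly the standard net--plus--Bernstein argument from the cited references; the paper itself does not prove this lemma but simply quotes it from Vershynin (Exercise 4.7.3 of the 2018 book, or Theorem 5.39 of the 2010 notes), whose proof is the one you reproduce. The only point worth noting is that the paper's definition of a $\tau^2$-sub-Gaussian vector ($\E[\exp(\<\bv,\bZ\>)]\le\exp(\tau^2\|\bv\|^2/2)$ for all $\bv$) forces $\E[\ba_i]=\bzero$, so $\E[X_i^2]=\bu^\sT\bSigma\bu$ and your centering step is consistent with the statement.
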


We also state and prove two simple lemmas about moments of sub-Gaussian random variables.
\begin{lemma}\label{lemma:SimpleSubg}
   For any $a>0$
and any $\delta>0$ there exists $C(a,\delta)$ such that the following holds.
  For any random variable $X$ that is $\tau^2$-sub-Gaussian with $\E\{X^2\}=1$, we have
  \begin{align}
    \E\{|X|^a\}\le C(a,\delta)\, \tau^{(a-2+\delta)_+}\, .
  \end{align}
  This inequality holds for $a\le 2$, with $\delta=0$ and $C=1$.
  \end{lemma}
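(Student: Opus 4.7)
First, note that the paper's convention (from the introduction of subgaussianity in Section \ref{sec:Convergence}) forces a centered $\tau^2$-subgaussian random variable to satisfy $\E[X^2]\le \tau^2$; combined with the assumption $\E[X^2]=1$, this gives $\tau\ge 1$, so raising $\tau$ to a nonnegative power only makes the bound larger.

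For the easy regime $a\le 2$, the map $x\mapsto x^{a/2}$ is concave on $[0,\infty)$, so by Jensen's inequality,
\begin{equation*}
    \E[|X|^a] = \E\big[(X^2)^{a/2}\big] \le \big(\E[X^2]\big)^{a/2} = 1,
\end{equation*}
which matches the claim with $C=1$ and $\delta=0$.

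For $a>2$, the idea is Hölder interpolation between the fixed second moment and a high moment controlled by subgaussianity. Pick an integer $k>a$ to be chosen later and write $a = 2(1-\theta) + k\theta$ with $\theta = (a-2)/(k-2)\in(0,1)$. Hölder's inequality (with exponents $1/(1-\theta)$ and $1/\theta$) then yields
\begin{equation*}
    \E[|X|^a] \le \E[X^2]^{1-\theta}\,\E[|X|^k]^{\theta} = \E[|X|^k]^{\theta}.
\end{equation*}
The standard subgaussian moment bound $\E[|X|^k]\le (Ck)^{k/2}\tau^k$ (a consequence of integrating the tail estimate $\P(|X|\ge t)\le 2e^{-t^2/(2\tau^2)}$) then gives
\begin{equation*}
    \E[|X|^a] \le (Ck)^{k\theta/2}\,\tau^{k\theta} = (Ck)^{k(a-2)/(2(k-2))}\,\tau^{k(a-2)/(k-2)}.
\end{equation*}

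It remains to choose $k$ so that the $\tau$-exponent $k(a-2)/(k-2)$ is at most $a-2+\delta$; a direct computation shows this holds for $k\ge 2(a-2+\delta)/\delta$. Fixing such a $k=k(a,\delta)$ (and $k>a$) and absorbing all resulting prefactors into a constant $C(a,\delta)$ completes the proof. There is no real obstacle here — the only subtlety is making sure that $\tau\ge 1$ so that the stated bound is monotone in $\tau$ and the interpolation is not wasted for small $\tau$.
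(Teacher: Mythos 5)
Your proof is correct and follows essentially the same route as the paper: Jensen for $a\le 2$, and for $a>2$ a H\"older interpolation between the fixed second moment and a high moment controlled by sub-Gaussianity (the paper parametrizes by the H\"older exponent $r$ and sets $2/r=2-\delta$ to land exactly on the exponent $a-2+\delta$, whereas you parametrize by the high moment $k$ and use $\tau\ge 1$ to round the exponent up to $a-2+\delta$ --- an equivalent bookkeeping). Your justification that $\tau\ge 1$ is sound under the paper's MGF definition of sub-Gaussianity, and is in any case assumed without loss of generality in FEAT1.
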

  \begin{proof}
    Without loss of generality, we can assume $X\ge 0$.
    For $a\le 2$, this is just Jensen's inequality.

    For $a>2$, by H\"older for any $q\in [0,a]$ and any $r>1$, we have
    \begin{align*}
      \E[X^a] \le \E[X^{rq}]^{1/r}\E[X^{r(a-q)/(r-1)}]^{(r-1)/r}\, .
    \end{align*}
    Setting $q=2/r$, $r\ge 2/a$ and using the fact that $\E[X^2] = 1$ and $X$ is sub-Gaussian, we get
    that there exist constants $C_0(a,r)$ finite as long as $r>1$, such that
    \begin{align*}
      \E[X^a] \le \E[X^{(ra-2)/(r-1)}]^{(r-1)/r} \le C_0(a,r) \tau^{a-2/r}\, .
    \end{align*}
    The claim follows by setting $2/r=2-\delta$ or $r=(1-\delta/2)^{-1}>1$. 
  \end{proof}

\begin{lemma}
   For any $q_1,q_2>0$
and any $\delta>0$ there exists $C(q_1,q_2,\delta)$ such that the following holds.
  For any random variable $X$ that is $\tau^2$-sub-Gaussian with $\E\{X^2\}=1$, we have
  \begin{align}
    \E\{|X|^{q_1}\wedge |X|^{q_2}\}\ge C(q_1,q_2,\delta)\, \tau^{-(2-q_1\wedge q_2+\delta)_+}\, .
  \end{align}
  This inequality holds for $q_1= q_2\ge 2$, with $\delta=0$ and $C=1$.
  \end{lemma}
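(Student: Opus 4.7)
The plan is to reduce the lower bound to an estimate on the mass of $X$ in a well-chosen annulus $\{1/\sqrt{2}\leq |X|\leq A\}$, with outer radius $A=A(\tau,\delta)$ tuned so that both the subgaussian tail and the $\tau$-dependence of the final bound are controlled. Two simple ingredients provide the starting point. First, $\E[X^2]=1$ together with Markov's inequality gives $\E[X^2\mathbf{1}(|X|\geq 1/\sqrt 2)]\geq 1/2$. Second, $\tau^2$-subgaussianity ensures $\E[X^4]\leq C\tau^4$, and Cauchy-Schwarz then yields
\begin{equation*}
\E[X^2\mathbf{1}(|X|>A)]\;\leq\; \bigl(\E[X^4]\bigr)^{1/2}\P(|X|>A)^{1/2}\;\leq\; C\tau^2\exp\!\bigl(-A^2/(4\tau^2)\bigr).
\end{equation*}
Setting $A:=c_0\tau\sqrt{1+\log\tau}$ with $c_0$ sufficiently large forces this tail contribution to be at most $1/4$ uniformly in $\tau\geq 1$, so $\E[X^2\mathbf{1}(1/\sqrt 2\leq |X|\leq A)]\geq 1/4$.

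The second step converts $X^2$-mass into $|X|^{q_1}\wedge |X|^{q_2}$-mass via a pointwise comparison. Writing $q:=q_1\wedge q_2$ and $Q:=q_1\vee q_2$, the identity
\begin{equation*}
|X|^{q_1}\wedge |X|^{q_2}= |X|^Q\mathbf{1}(|X|<1)+|X|^q\mathbf{1}(|X|\geq 1)
\end{equation*}
splits the relevant annulus at $|X|=1$. On the inner piece $\{1/\sqrt 2\leq |X|<1\}$ I will use $|X|^Q\geq 2^{-Q/2}$, a constant depending only on $Q$. On the outer piece $\{1\leq |X|\leq A\}$ I will use $|X|^q=X^2\cdot |X|^{q-2}\geq X^2\cdot A^{(q-2)\wedge 0}$ (valid whether $q\geq 2$ or $q<2$, since $|X|\leq A$). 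The $X^2$-mass of at least $1/4$ on the combined annulus must place at least $1/8$ on one of the two pieces, and in either case one obtains
\begin{equation*}
\E\bigl[|X|^{q_1}\wedge |X|^{q_2}\bigr]\;\geq\; \tfrac{1}{8}\,\min\!\bigl(2^{-Q/2},\,A^{(q-2)\wedge 0}\bigr),
\end{equation*}
where I have used $\P(1/\sqrt 2\leq |X|<1)\geq \E[X^2\mathbf{1}(1/\sqrt 2\leq |X|<1)]$ since $X^2\leq 1$ there.

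The third step plugs in the chosen $A$. For $q\geq 2$ one has $(q-2)\wedge 0=0$, so the bound reduces to a $(q_1,q_2)$-constant, matching $\tau^{-(2-q+\delta)_+}=\tau^0$ for $\tau\geq 1$. For $q<2$,
\begin{equation*}
A^{q-2}=c_0^{q-2}\tau^{-(2-q)}(1+\log\tau)^{-(2-q)/2}\;\geq\; K(q,\delta)\,\tau^{-(2-q+\delta)},
\end{equation*}
where the last inequality follows from the elementary fact that $(1+\log\tau)^{-(2-q)/2}\tau^{\delta}$ is bounded below on $[1,\infty)$ for any $\delta>0$. This yields the claim with $C=C(q_1,q_2,\delta)$. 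The special case $q_1=q_2\geq 2$ with $\delta=0$ and $C=1$ is just Jensen's inequality applied to $|X|^{q_1}$ versus $X^2$.

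The only delicate point will be the choice of $A$: subgaussianity alone forces $A\gtrsim \tau\sqrt{\log\tau}$, which produces a polylog loss that must then be absorbed at the cost of an arbitrarily small $\tau^{-\delta}$. This is precisely why the statement requires a strictly positive $\delta$ in the exponent, and I would expect the bound to be essentially sharp for $q<2$ without stronger tail assumptions than subgaussianity.
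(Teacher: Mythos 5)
Your proof is correct, but it takes a genuinely different route from the paper. The paper's argument is a duality-type trick: it writes $(x^{a_1}\wedge x^{a_2})(x^{2-a_1}\vee x^{2-a_2})=x^2$, applies H\"older to $1=\E[X^2]$ to get $\E[X^{q_1}\wedge X^{q_2}]\ge \E[X^{(2r-q_1)/(r-1)}\vee X^{(2r-q_2)/(r-1)}]^{-(r-1)}$, and then invokes the companion upper-bound lemma $\E[|X|^a]\le C\tau^{(a-2+\delta)_+}$ to control the right-hand side. You instead argue directly: Markov plus a fourth-moment/Cauchy--Schwarz tail estimate localizes at least $1/4$ of the $X^2$-mass in the annulus $\{1/\sqrt2\le|X|\le A\}$ with $A\asymp\tau\sqrt{1+\log\tau}$, and a pointwise comparison of $|X|^{q_1}\wedge|X|^{q_2}$ with $X^2$ on that annulus gives the bound. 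Your version is self-contained (it does not need the companion lemma) and makes the origin of the $\delta$-loss transparent --- it is exactly the $\sqrt{\log\tau}$ in the truncation level --- whereas the paper's version is shorter given the companion lemma and handles the upper and lower moment bounds symmetrically. Two small points to tidy up: the assertion $\tau^{-(2-q+\delta)_+}=\tau^0$ for $q\ge 2$ is not literally true when $\delta>q-2$, though the desired inequality still follows since a constant dominates $\tau^{-(\cdot)_+}$ for $\tau\ge 1$; and you use $\tau\ge 1$ throughout, which is legitimate because $\E[X^2]=1$ forces $\tau\ge 1$ under the paper's moment-generating-function definition of sub-Gaussianity, but it is worth stating.
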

  \begin{proof}
    Without loss of generality, we can assume $X\ge 0$.
    For $q_1=q_2\ge 2$, this is just Jensen's inequality.

    For the other cases, notice that, for $a_1,a_2\in [0,2]$, 
    $(x^{a_1}\wedge x^{a_2}) (x^{2-a_1}\vee x^{2-a_2})=x^2$. Hence, by H\"older inequality,
    for all $r>1$:
    \begin{align*}
      1=\E[X^2]\le \E[X^{a_1r}\wedge X^{a_2r}]^{1/r}  \E[X^{(2-a_1)r/(r-1)}\vee
      X^{(2-a_2)r/(r-1)}]^{(r-1)/r}\, ,
    \end{align*}
    We then set $a_i=q_i/r$, and invert this relationship to get
    \begin{align*}
      \E[X^{q_1}\wedge X^{q_2}]&\ge  \E[X^{(2r-q_1)/(r-1)}\vee
                                 X^{(2r-q_2)/(r-1)}]^{-(r-1)}\\
                               &\ge  \Big(\E[X^{(2r-q_1)/(r-1)}]+   \E[ X^{(2r-q_2)/(r-1)}]\Big)^{-(r-1)}\, .
    \end{align*}
    Taking $r\ge 1\vee (q_1/2)\vee(q_2/2)$, we can apply Lemma \ref{lemma:SimpleSubg},
    to get
    \begin{align*}
      \E[X^{q_1}\wedge X^{q_2}]&\ge
                                 C\Big(\tau^{(\frac{2r-q_1}{r-1}-2+\delta')_+(r-1)}+\tau^{(\frac{2r-q_2}{r-1}-2+\delta')_+(r-1)}\Big)^{-1}\\
                               &\ge C \Big(\tau^{(2-q_1+\delta)_+}+\tau^{(2-q_2+\delta )_+}\Big)^{-1}\\
                               &\ge C \tau^{-(2-q_1\wedge q_2+\delta)_+}\, .
    \end{align*}
    This proves the claim.
    \end{proof}

\end{document}